\DeclareMathOperator*{\argmin}{\mathop{\rm argmin}}
\DeclarePairedDelimiterX{\KL}[2]{\mathrm{KL}[}{]}{#1\;\delimsize\|\;#2}
\DeclarePairedDelimiterX\braket[2]{\langle}{\rangle}{#1 \delimsize\vert #2}
\newcommand{\xmark}{\textcolor{gray!50}{\ding{55}}} 
\theoremstyle{plain}
\newtheorem{theorem}{Theorem}
\newtheorem{proposition}{Proposition}
\newtheorem{lemma}{Lemma}
\newtheorem{corollary}{Corollary}
\theoremstyle{definition}
\newtheorem{definition}{Definition}
\newtheorem{assump}[theorem]{Assumption}
\newtheorem{remark}{Remark}
\newcommand{\dataclean}{p_{\mathcal{D}}(s)}
\newcommand{\datacleandens}{p_{\mathcal{D}}}
\newcommand{\datacontami}{p^{(\epsilon)}_{\widetilde{\mathcal{D}}}(\tilde{s})}
\newcommand{\datacontamidens}{p^{(\epsilon)}_{\widetilde{\mathcal{D}}}}
\newcommand{\dataflip}{p(s_\textrm{flip})}
\definecolor{LightGray}{gray}{0.9}
\newcommand{\bse}{\begin{subequations}}
\newcommand{\ese}{\end{subequations}}
\title{Scalable Valuation of Human Feedback through Provably Robust Model Alignment}
\author{
    Masahiro Fujisawa$^{*,\dagger,1,2,4}$, 
    Masaki Adachi$^{*,2,3}$, 
    Michael A. Osborne$^{3}$\\
    \small{$^1$ The University of Osaka}, \ \small{$^2$ Lattice Lab, Toyota Motor Corporation},\\
    \small{$^3$ Machine Learning Research Group, University of Oxford}, \ \small{$^4$ RIKEN AIP}\\
    \small{$^*$ Equal Contribution} \ \ \ \small{$\dagger$ Corresponding Author}\\
    \small{\texttt{fujisawa@ist.osaka-u.ac.jp}, \texttt{\{masaki, mosb\}@robots.ox.ac.uk}}\\
}
\begin{document}

\maketitle
\begin{abstract}
  Despite the importance of aligning language models with human preferences, crowd-sourced human feedback is often noisy---for example, preferring less desirable responses---posing a fundamental challenge to alignment. A truly robust alignment objective should yield identical model parameters even under severe label noise, a property known as \emph{redescending}. We prove that no existing alignment methods satisfy this property. To address this, we propose Hölder-DPO, the first principled alignment loss with a provable redescending property, enabling estimation of the clean data distribution from noisy feedback. The aligned model estimates the likelihood of clean data, providing a theoretically grounded metric for dataset valuation that identifies the location and fraction of mislabels. This metric is gradient-free, enabling scalable and automated human feedback valuation without costly manual verification or clean validation dataset. Hölder-DPO achieves state-of-the-art robust alignment performance while accurately detecting mislabels in controlled datasets. Finally, applied to Anthropic HH-RLHF dataset, it reveals substantial noise levels and removing these mislabels significantly improves alignment performance across methods.
  The code is available\footnote{\url{https://github.com/ma921/HolderDPO}}.
\end{abstract}

\doparttoc 
\faketableofcontents 
\vspace{-0.5em}
\section{Introduction}\label{sec:intro}
\vspace{-0.5em}
Aligning large language models (LLMs; \citep{achiam2023gpt, touvron2023llama, team2023gemini}) with human preferences is essential for value alignment and mitigating safety risks~\citep{liang2021towards, bai2022constitutional}. Direct Preference Optimization (DPO; \citep{rafailov23}) has emerged as a key method, fine-tuning LLMs using pairwise comparisons—e.g., between preferred and rejected responses~\citep{Christiano17, castricato2024suppressing}. This framework supports alignment with a broad range of rankable values, such as helpfulness~\citep{bai2022training, ji2023beavertails}, and summarization quality~\citep{stiennon2020learning, volske-etal-2017-tl}. 
Human feedback is typically gathered via crowdsourcing (e.g., ChatGPT~\citep{achiam2023gpt}), but is often noisy—e.g., preferring undesirable responses—which can significantly deteriorate model performance~\citep{wu24, chowdhury2024provably, pmlr-v235-cui24f}. In fact, the Anthropic HH-RLHF dataset~\citep{ganguli2022red} reportedly contain over 25\% inconsistent feedback~\citep{shen2024improving, chehbouni-etal-2025-beyond, wu24}. Plus, recent work~\citep{zhou2023lima, gunasekar2023textbooks, cai2023ulma} shows that models trained on smaller, high-quality datasets can outperform those trained on larger, noisier ones. However, maintaining data quality often requires costly manual review (e.g., inter-annotator agreement), highlighting the need for scalable, automated assessment methods.

To mitigate the impact of noisy human feedback, researchers have proposed robustifed DPO variants. Among various heuristics~\citep{mitchell2023note, azar2024general, wang2024beyond}, two methods claim provable guarantees: Provably Robust DPO (R-DPO)~\citep{chowdhury2024provably} and Distributionally Robust DPO (Dr.~DPO)~\citep{wu24}. However, their notion of robustness is limited to bounding the parameter estimation error---bounds that degrade as the fraction of mislabelled data $\epsilon$ increases. In contrast, truly robust methods should estimate the clean data distribution precisely under heavy contamination. 
This stronger form of robustness corresponds to the \emph{redescending} property~\citep{MaronnaRobust2019}---the influence of extreme outliers diminishes to zero---a concept in robust statistics defined via the influence function (IF; \citep{hampel1974influence, cook1980characterizations}). We analyze existing DPO variants through IFs and prove that none satisfy the redescending property. If this property holds, the aligned model approximates the clean data distribution---enabling principled mislabel detection, as mislabels naturally exhibit lower likelihoods being clean data points.
Crucially, identifying mislabelled data has benefits beyond robustness: it supports annotator evaluation, incentivises higher-quality feedback, and enables scalable dataset cleaning, reducing reliance on costly manual verification.

To address this challenge, we propose Hölder-DPO, the first alignment loss with a provable redescending property, stably learning clean data distribution even under heavy contamination (e.g., $\epsilon = 0.4$). 
Furthermore, our loss also enables the aligned model itself to serve as a mislabel valuation metric (see Figure~\ref{fig:concept}). Ours is a \textit{first-of-its-kind} principled metric that provides a theoretically grounded estimator of both the likelihood and fraction of (mis-)labelled data. Furthermore, our method is highly scalable—it avoids costly gradient-based computations required by prior work~\citep{kwon2023datainf, choe2024your} and eliminates the need for manual verification~\citep{kong2024perplexity}, thereby enabling fully automated feedback cleaning.
Applied to the Anthropic HH dataset~\citep{ganguli2022red}, it reveals substantial noise ($\epsilon \approx 0.25$), and removing the identified mislabels significantly improves alignment performance across methods.
\begin{figure}[t]
    \centering
    \includegraphics[width=0.85\linewidth]{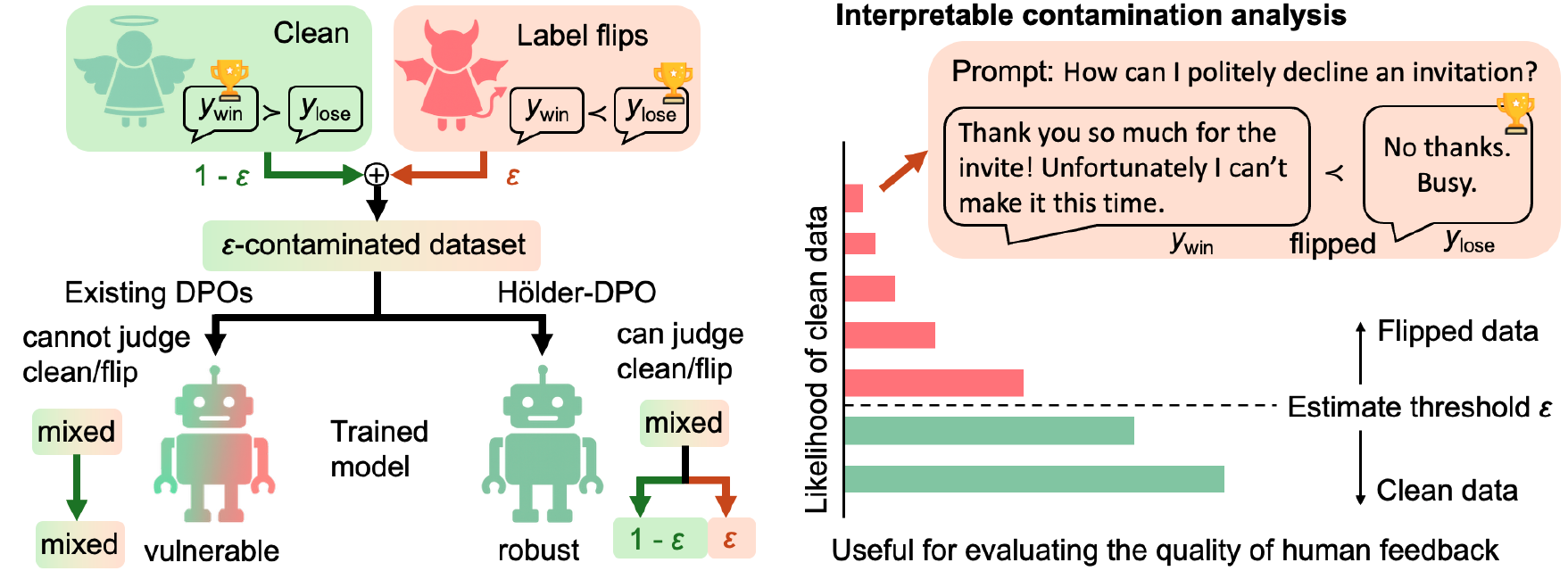}
    \caption{\small{While existing DPO variants are vulnerable to $\epsilon$-contamination, Hölder-DPO is provably robust. It also ranks data points by clean-data likelihood, enabling mislabel identification.}}
    \label{fig:concept}
    \vspace{-1em}
\end{figure}

\vspace{-0.5em}
\section{Preliminaries}
\vspace{-0.25em}
\vspace{-0.25em}
\subsection{Language Model Alignment by Direct Preferential Optimization}\label{subsec:dpo_basics}
\vspace{-0.25em}
\textbf{Human feedback and Bradley-Terry model.}  
Let $x \in \mathcal{X}$ be a prompt sampled from a distribution $p(x)$ over a finite context space $\mathcal{X}$. The policy model $\pi(y \mid x)$ generates a pair of responses $(y_{1}, y_{2})$. A human annotator then provides a preference between them, identifying the preferred response as $y_{\textrm{win}}$, and the rejected response as $y_{\textrm{lose}}$, i.e., $y_{\textrm{win}} \succ y_{\textrm{lose}}$. Although the underlying reward function $r^* (x,y)$, which ranks responses, is unobservable, Bradley--Terry (BT) model~\citep{Bradley52} provides a standard framework for modelling pairwise comparisons based solely on observed preferences:
\begin{align}
    p^{*}(y_1 \succ y_2 \mid x, y_1, y_2) \coloneqq \sigma\left[\exp(r^{*}(x, y_1)) - \exp(r^{*}(x, y_2))\right],
\end{align}
where $\sigma(a) \coloneqq 1/(1+\exp(-a))$ is the logistic function. 
We define a pairwise data point as $s \coloneqq \{x, y_{\textrm{win}}, y_{\textrm{lose}}\}$, and denote the preference dataset as $\mathcal{D} \coloneqq \{s^{(i)}\}_{i=1}^{N}$. Each data point $s^{(i)}$ is sampled independently from $\dataclean$, where $\dataclean \propto p^{*}(y_\text{win} \succ y_\text{lose} \mid x, y_1, y_2) \cdot \pi(y_1, y_2 \mid x) \cdot p(x)$.

\textbf{Direct Preference Optimization (DPO).} 
The goal of preference-based model alignment is to find parameters $\theta$ such that the policy $\pi_\theta$ accurately approximates either the latent reward function $r^*$ or the preference distribution $p^*$. 
To achieve this, \citet{rafailov23} proposed DPO, which efficiently solves the alignment task by minimizing the following optimization problem $\mathcal{L}(s,\pi_\theta)$:
\begin{align}
\label{eq:obj_dpo}
\argmin_{\theta} \mathbb{E}_{\dataclean}[-\log \sigma(g_{\theta}(s))], \,\, \text{where} \,\,
g_{\theta}(s)
= \beta \log \frac{\pi_{\theta}(y_{\textrm{win}} \mid x)}{\pi_{\mathrm{ref}}(y_{\textrm{win}} \mid x)}
- \beta \log \frac{\pi_{\theta}(y_{\textrm{lose}} \mid x)}{\pi_{\mathrm{ref}}(y_{\textrm{lose}} \mid x)},
\end{align}
where $\pi_{\mathrm{ref}}$ is the reference model, and $\beta > 0$ is a hyperparameter controlling alignment strength. The reference model is typically obtained by supervised fine-tuning (SFT) on the prompts $x$ with the preferred response $y_\text{win}$ only \citep{rafailov23, ren2025learning}. 
The corresponding gradient is given by:
\begin{align}\label{eq:grad_dpo_pointwise}
\nabla_{\theta}\mathcal{L}(s,\pi_{\theta}) 
&\coloneqq \sigma(-g_{\theta}(s)) \cdot \left(
\nabla_{\theta} \log \pi_{\theta}(y_{\textrm{win}} \mid x) 
- \nabla_{\theta} \log \pi_{\theta}(y_{\textrm{lose}} \mid x)
\right),
\end{align}
where $g_{\theta}(s) \coloneqq \hat{r}_{\theta}(x,y_{\textrm{win}}) - \hat{r}_{\theta}(x,y_{\textrm{lose}})$ and $\hat{r}_{\theta}(x,y) = \beta\log ( \nicefrac{\pi_{\theta}(y \mid x)}{\pi_{\mathrm{ref}}(y \mid x)})$ denotes the implicit reward function induced by $\pi_\theta$.

\subsection{Data Contamination Model for Label Flips and Redescending Property}\label{subsec:prob_setting}
\vspace{-0.5em}
\textbf{Data contamination model.} 
Noise in human preference datasets~\cite{gao2024impact, wu24, wang2024secrets}  can arise from two sources: noise in the responses $(y_1, y_2)$~\cite{gunasekar2023textbooksneed, wu24}, such as irrelevant or incoherent samples, and noise in preference feedback $y_\text{win} \succ y_\text{lose}$~\cite{gao2024impact, sharma2024towards, cui2024ultrafeedbackboostinglanguagemodels, chowdhury24, wu24}, due to incorrect preference annotations (also known as label flips). As \citet{wu24} proved that vanilla DPO is robust against noise in responses, we focuses on robustness to label flips. 
We model the label flip process as follows: We denote a flipped data point as $s_\textrm{flip} \coloneqq \{x, y_{\mathrm{win}}^\textrm{flip}, y_{\mathrm{lose}}^\textrm{flip}\}$, where $y_{\mathrm{win}}^\textrm{flip} = y_{\mathrm{lose}}$ and $y_{\mathrm{lose}}^\textrm{flip} = y_{\mathrm{win}}$ represent a flipped preference label pair. Such a flipped point $s_\textrm{flip}$ is sampled from the distribution $\dataflip \propto p_{\textrm{flip}}(y_{\mathrm{win}}^\textrm{flip} \succ y_{\mathrm{lose}}^\textrm{flip} \mid x, y_1, y_2) \cdot \pi(y_1, y_2 \mid x) \cdot p(x)$. Contaminated dataset $\widetilde{\mathcal{D}}$ is given by:
\begin{definition}[\textbf{$\epsilon$-contamination model}]
\label{def:contami_model}
Let $0 \leq \epsilon < 1/2$ be contamination ratio.  
The generative distribution of $\widetilde{\mathcal{D}}$ under the $\epsilon$-contamination model is defined as  
$\datacontamidens(\tilde{s}) = (1 - \epsilon) p_{\mathcal{D}}(s) + \epsilon \dataflip$, and denote $\tilde{s}$ as a contaminated data point, which can be either clean $s$ or flipped $s_\textrm{flip}$.
\end{definition}
\vspace{-0.25em}
Definition~\ref{def:contami_model} is widely adopted in robust statistics~\citep{Huber64, hampel11, huber11} and in the machine learning community~\citep{Ghosh16, fujisawa21a, Matsubara22}. We assume access only to the contaminated dataset $\datacontamidens(\tilde{s})$, without access to a clean validation set $p_{\mathcal{D}}(s)$---a prerequisite in many existing methods~\citep{chowdhury2024provably, kong2024perplexity}. As such, the contamination problem cannot be reduced to a standard classification or distribution shift task.
Instead, we leverage a tail contamination assumption to enable estimation of the contamination ratio $\epsilon$:
\begin{assump}[\textbf{Tail contamination~\citep{Fujisawa08}}]
\label{assump:tail_outlier}
Suppose that $p_{\mathcal{D}} = \sigma(g_{\theta^*}(s))$ for some $\theta^* \in \Theta$, meaning that the true data distribution is exactly recovered by the model.  
Let $\gamma > 0$.  
Then, in a neighborhood of $\theta = \theta^*$, $\mathbb{E}_{\dataflip}[\sigma(g_{\theta}(s_\textrm{flip}))^{\gamma}]$ is sufficiently small, i.e., $\mathbb{E}_{\dataflip}[\sigma(g_{\theta}(s_\textrm{flip}))^{\gamma}] \approx 0$ when $\theta \approx \theta^*$.
\end{assump}
\vspace{-0.5em}
Intuitively, if we had access to the ground-truth clean-data likelihood $\sigma(g_{\theta^*}(s))$, it would assign high values to clean data points and negligibly small values to flipped ones.
This principle is well-established in robust statistics~\citep{Fujisawa08, KANAMORI14}, and implicitly adapted in prior work~\cite{Kong24, Breunig00,Angiulli02,Zimek12}. 
Moreover, this assumption unlocks heavy contamination scenarios, in contrast to typical assumptions that consider only infinitesimally small perturbations, i.e., $\epsilon \approx 0$ \citep{grosse2023studying, kwon2024datainf, choe2024your}.
This distinction is particularly important for noisy human preference datasets (e.g., $\epsilon \approx 0.25$ for HH-RLHF~\citep{shen2024improving, chehbouni-etal-2025-beyond}).

\textbf{Redescending property.} 
We evaluate robustness to the data contamination through the learned LLM parameters. Let $\theta^*$ denote the optimal parameters learned from the clean dataset $p_\mathcal{D}$, and $\theta^{*}(\epsilon)$ denote those learned from the $\epsilon$-contaminated dataset $\datacontamidens$. 
Intuitively, the impact of contamination can be measured by $\theta^{*}(\epsilon) - \theta^{*}$, and its first-order approximation, known as IF~\citep{hampel11}, is commonly used:
\begin{align}
\label{eq:IF_param}
    \mathrm{IF}(s_\textrm{flip}, \theta, p_{\mathcal{D}})
    \coloneqq \frac{\partial\theta^{*}(\epsilon)}{\partial\epsilon}\bigg|_{\epsilon=0}.
\end{align}
See Appendix~\ref{app:basics_IF} for further details. 
Using the IF, we define the robustness condition as follows:
\begin{definition}[\textbf{Redescending property}~\cite{MaronnaRobust2019}]\label{def:robustness}
  Let $s_\textrm{flip} = (x, y_{\mathrm{win}}^\textrm{flip}, y_{\mathrm{lose}}^\textrm{flip})$ be a preference pair. An objective is said to be robust to $\epsilon$-contamination if the following condition is satisfied:
  \begin{align*}
      \lim_{\hat{r}_{\theta}(x, y_{\mathrm{lose}}^\textrm{flip}) \to \infty} \|\mathrm{IF}(s_\textrm{flip}, \theta, p_{\mathcal{D}})\|=0,
  \end{align*}
  where $\|\cdot\|$ denotes the Euclidean norm.
\end{definition}
\vspace{-0.75em}
The limit $\hat{r}_{\theta}(x, y_{\mathrm{lose}}^\textrm{flip}) \to \infty$ occurs when the data point $s_{\textrm{flip}}$ attains the lowest reward, i.e., $g_{\theta}(s_{\textrm{flip}})\rightarrow -\infty$. Yet, the label flip forces the model to learn this least preferable sample as best, representing the most adversarial data-poisoning scenario that a single label flip can induce. Intuitively, if the IF remains zero even under this adversarial condition, the learning objective can be considered robust: the effect of the worst possible data point on the model parameters vanishes, i.e., $\theta^*(\epsilon) \approx \theta^*$\footnote{Up to higher-order approximation error $\mathcal{O}(\epsilon^2)$}.

\vspace{-0.5em}
\subsection{Related work}
\vspace{-0.5em}
\textbf{Robust DPOs and RLHFs.}
A broad range of DPO variants have been proposed to improve the robustness of the alignment objective~\citep{mitchell2023note, azar2024general, wang2024beyond, chowdhury2024provably, wu24}. Reinforcement-learning-based approaches also constitute a popular family of alignment methods~\citep{christiano2017deep, bai2022training, ouyang2022training, kaufmann2023survey}, with several recent works introducing robustified objectives~\citep{mandal2025corruption, bukharin2024robust, pollatos2025corruption}. However, none provides theoretical guarantees for the redescending property or for detecting mislabelled data. We prove their limitations in Section~\ref{sec:issues}.

\textbf{Dataset valuation and cleaning.} 
Dataset valuation has been studied using IFs \citep{grosse2023studying, kwon2024datainf, choe2024your, li2024influence, li2024delta, zhu2025dice, jiao2025feasibility}, Shapley values~\citep{wang2025data}, and unrolling~\citep{juhan2024training}. Various heuristic methods have also been proposed for identifying and filtering mislabelled data, including prompt-based filtering~\citep{nakano2021webgpt, sun2023principle, liu2024robustifying, wang2023self, luo2024robustft}, attribution-based approaches~\citep{wang2024greats, georgiev2024attribute, bordt2024much, pan2025detecting}, and perplexity-based filtering~\citep{kong2024perplexity}. 
In contrast, our method is the first to provide theoretical guarantees for dataset filtering under heavy contamination. 

\vspace{-0.5em}
\section{Existing DPO Variants Are Neither Robust Nor Able to Identify Mislabels}\label{sec:issues}
\vspace{-0.5em}
We first derive the IFs of existing DPO variants and show that none satisfy the redescending property.
\vspace{-1.0em}
\begin{restatable}[\textbf{IF for DPO variants}]{theorem}{IFparameps}
\label{thm:IF_param}
Let $\theta^*$ be the optimum learnt from the clean dataset $p_\mathcal{D}$, and $\theta^*(\epsilon)$ learnt from the $\epsilon$-contaminated dataset $\datacontamidens$. 
Let $\mathcal{L}_{\mathrm{gen}}(\pi_{\theta}; \pi_{\mathrm{ref}})$ be a generic DPO loss function corresponding to a specific DPO variant.
Assume that the Hessian $\nabla_{\theta}^{2}\mathcal{L}_{\textrm{gen}}(s, \pi_{\theta})|_{\theta = \theta^{*}}$ is positive definite.\footnote{This is a local assumption around the optimal parameters $\theta^*$. Such local assumptions are standard in IF analysis for non-convex deep learning models, as global convexity rarely holds (e.g., \citep{koh2017understanding, choe2024your}).}
Then, the $s_\textrm{flip}$-dependent component of the IF for this DPO variants is given by:
\begin{align}
    \label{eq:IF_param_eps_gen_result}
        \mathrm{IF}(s_\textrm{flip}, \theta, p_{\mathcal{D}})
        \propto  \mathbb{E}_{\dataflip}[\nabla_{\theta} \mathcal{L}_{\mathrm{gen}}(s_\textrm{flip}, \pi_{\theta^{*}})],
\end{align}
where $\nabla_{\theta} \mathcal{L}_{\mathrm{gen}}(s_\textrm{flip}, \pi_{\theta^{*}})$ corresponds, for example, to Eq.~\eqref{eq:grad_dpo_pointwise} when $\mathcal{L}_{\mathrm{gen}}(s_\textrm{flip}, \pi_{\theta^{*}}) = \mathcal{L}(s_\textrm{flip}, \pi_{\theta^{*}})$.
\end{restatable}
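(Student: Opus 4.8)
The plan is to apply the classical implicit-function-theorem derivation of the influence function (\citep{hampel11, koh2017understanding}) to the contaminated M-estimation problem. Since $\theta^*(\epsilon)$ is defined as the minimiser of $\mathbb{E}_{\datacontamidens}[\mathcal{L}_{\mathrm{gen}}(\tilde{s}, \pi_\theta)]$, it satisfies the first-order stationarity condition $\mathbb{E}_{\datacontamidens}[\nabla_\theta \mathcal{L}_{\mathrm{gen}}(\tilde{s}, \pi_{\theta^*(\epsilon)})] = 0$ for every $\epsilon$ in a neighbourhood of $0$. The first step is to substitute the mixture form of Definition~\ref{def:contami_model}, $\datacontamidens = (1-\epsilon)\datacleandens + \epsilon\,\dataflip$, so that this stationarity identity splits into a clean term weighted by $(1-\epsilon)$ and a flipped term weighted by $\epsilon$.

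The core step is to differentiate this identity with respect to $\epsilon$ and evaluate at $\epsilon=0$, treating $\theta^*(\epsilon)$ as an implicit function of $\epsilon$. By the chain rule the derivative produces two contributions: one from the explicit $\epsilon$-dependence of the mixture weights, which yields $-\mathbb{E}_{\datacleandens}[\nabla_\theta \mathcal{L}_{\mathrm{gen}}(s, \pi_{\theta^*})] + \mathbb{E}_{\dataflip}[\nabla_\theta \mathcal{L}_{\mathrm{gen}}(s_\textrm{flip}, \pi_{\theta^*})]$, and one from the dependence of the stationary point $\theta^*(\epsilon)$, which brings in the Hessian acting on $\partial_\epsilon\theta^*(\epsilon)$. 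At $\epsilon=0$ I use $\theta^*(0)=\theta^*$ together with the fact that $\theta^*$ is stationary for the clean objective, i.e.\ $\mathbb{E}_{\datacleandens}[\nabla_\theta \mathcal{L}_{\mathrm{gen}}(s, \pi_{\theta^*})]=0$; this cancels the clean contribution and leaves only $\mathbb{E}_{\dataflip}[\nabla_\theta \mathcal{L}_{\mathrm{gen}}(s_\textrm{flip}, \pi_{\theta^*})]$. Solving the resulting linear equation for $\partial_\epsilon \theta^*(\epsilon)|_{\epsilon=0}$ gives
\begin{align*}
    \mathrm{IF}(s_\textrm{flip}, \theta, p_{\mathcal{D}})
    = -\left(\mathbb{E}_{\datacleandens}[\nabla_\theta^2 \mathcal{L}_{\mathrm{gen}}(s, \pi_{\theta^*})]\right)^{-1} \mathbb{E}_{\dataflip}[\nabla_\theta \mathcal{L}_{\mathrm{gen}}(s_\textrm{flip}, \pi_{\theta^*})].
\end{align*}
Because the inverse-Hessian prefactor is independent of $s_\textrm{flip}$, the $s_\textrm{flip}$-dependent component is exactly the claimed $\mathbb{E}_{\dataflip}[\nabla_\theta \mathcal{L}_{\mathrm{gen}}(s_\textrm{flip}, \pi_{\theta^*})]$, establishing Eq.~\eqref{eq:IF_param_eps_gen_result}.

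The main obstacle is justifying the implicit differentiation rather than the algebra itself. This requires that $\theta^*(\epsilon)$ exists, is unique, and is differentiable near $\epsilon=0$, which follows from the implicit function theorem precisely because the Jacobian of the stationarity map equals the Hessian $\mathbb{E}_{\datacleandens}[\nabla_\theta^2 \mathcal{L}_{\mathrm{gen}}(s, \pi_{\theta^*})]$, assumed positive definite and hence invertible. I would also invoke standard regularity on $\mathcal{L}_{\mathrm{gen}}$ to interchange differentiation and expectation (dominated convergence), and note that positive-definiteness is only local around $\theta^*$, consistent with the footnote, so the conclusion is an infinitesimal ($\epsilon\to 0$) statement valid up to higher-order error. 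The derivation never uses the specific form of $\mathcal{L}_{\mathrm{gen}}$, so specialising to the DPO gradient of Eq.~\eqref{eq:grad_dpo_pointwise} is immediate and covers every variant uniformly.
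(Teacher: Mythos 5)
Your proposal is correct and follows essentially the same route as the paper's own proof in Appendix~\ref{app:proof_IF_DPO}: differentiate the first-order stationarity condition of the contaminated objective with respect to $\epsilon$, use $\frac{\partial}{\partial\epsilon}\datacontamidens = \dataflip - \datacleandens$ together with clean-data stationarity to cancel the clean term, and solve the resulting linear system to obtain the inverse-Hessian prefactor times $\mathbb{E}_{\dataflip}[\nabla_\theta\mathcal{L}_{\mathrm{gen}}(s_\textrm{flip},\pi_{\theta^*})]$. Your explicit appeal to the implicit function theorem to justify differentiability of $\theta^*(\epsilon)$ is a welcome clarification that the paper leaves implicit.
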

\begin{theorem}[informal]\label{thm:not_robust}
    The following objectives do not satisfy the redescending property:
    \vspace{-1.25em}
    {
    \begin{center}
    \centering
    \resizebox{1\textwidth}{!}{
    \begin{tabular}{lllll}
    \toprule
         algorithm&  objective $\mathcal{L}_{\mathrm{gen}}(\pi_{\theta}; \pi_{\mathrm{ref}})$ & bounded ? & redescending? & can detect $s_\textrm{flip}$?\\
         \midrule
         DPO~\citep{rafailov23} & $\mathbb{E}_{\datacontamidens}[-\log \sigma(g_{\theta}(\tilde{s}))]$ & \ding{51} & \xmark & \xmark \\
         IPO \citep{azar2024general} & $\mathbb{E}_{\datacontamidens}\bigg[\bigg(\frac{g_{\theta}(\tilde{s})}{\beta} - \frac{1}{2\beta}\bigg)^{2}\bigg]$ & \xmark & \xmark & \xmark\\
         C-DPO \citep{mitchell2023note}  & $(1-c)\mathbb{E}_{\datacontamidens}[-\log \sigma(g_{\theta}(\tilde{s}))]
         - c \mathbb{E}_{\datacontamidens}[-\log \sigma(-g_{\theta}(\tilde{s}))]$ & \ding{51} & \xmark & \xmark\\
         R-DPO \citep{chowdhury2024provably} & 
         $\frac{1-c}{1-2c} \mathbb{E}_{\datacontamidens}[-\log \sigma(g_{\theta}(\tilde{s}))] - \frac{c}{1-2c} \mathbb{E}_{\datacontamidens}[-\log \sigma(-g_{\theta}(\tilde{s}))]$ 
         & \ding{51} & \xmark & \xmark\\
         Dr. DPO \citep{wu24} & $-\beta' \log \mathbb{E}_{\datacontamidens}\bigg[\exp \bigg(\frac{\log \sigma(g_{\theta}(\tilde{s}))}{\beta'}\bigg)\bigg]$ & \ding{51} & \xmark & \xmark\\
    \bottomrule
    \end{tabular}
    }
    \end{center}
    }
\end{theorem}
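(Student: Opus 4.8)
The plan is to reduce every row of the table to a single one-dimensional limit by invoking Theorem~\ref{thm:IF_param}. That theorem gives $\mathrm{IF}(s_\textrm{flip},\theta,p_{\mathcal{D}}) \propto \mathbb{E}_{\dataflip}[\nabla_{\theta}\mathcal{L}_{\mathrm{gen}}(s_\textrm{flip},\pi_{\theta^{*}})]$, so by Definition~\ref{def:robustness} a method is redescending exactly when this quantity vanishes as $g_{\theta}(s_\textrm{flip})\to-\infty$. For the four losses that are population averages of a pointwise term (DPO, IPO, C-DPO, R-DPO), the per-example gradient factorizes as $\nabla_{\theta}\mathcal{L}_{\mathrm{gen}}(s,\pi_{\theta})=w(g_{\theta}(s))\,\nabla_{\theta}g_{\theta}(s)$, where $w(\cdot)$ is a scalar weight and $\nabla_{\theta}g_{\theta}(s)=\beta(\nabla_{\theta}\log\pi_{\theta}(y_{\mathrm{win}}\mid x)-\nabla_{\theta}\log\pi_{\theta}(y_{\mathrm{lose}}\mid x))$ is a common direction. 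I would first fix a regularity assumption that along the adversarial path $\hat{r}_{\theta}(x,y_{\mathrm{lose}}^\textrm{flip})\to\infty$ one can choose a configuration of $(x,y_1,y_2)$ for which $\nabla_{\theta}g_{\theta}(s_\textrm{flip})$ stays bounded and bounded away from zero; since Definition~\ref{def:robustness} is a universally quantified limit, exhibiting one such family for which $\|\mathrm{IF}\|$ does not vanish already refutes the redescending property, and the whole question then rests on the scalar weight $w$.

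The second step is a short computation of $\lim_{g\to-\infty}w(g)$ using $\tfrac{d}{dg}[-\log\sigma(g)]=-\sigma(-g)$ and $\tfrac{d}{dg}[-\log\sigma(-g)]=\sigma(g)$. For DPO, $w(g)\propto\sigma(-g)\to 1$; for C-DPO, $w(g)\propto-[(1-c)\sigma(-g)+c\,\sigma(g)]\to-(1-c)$; for R-DPO, $w(g)\propto-\tfrac{1}{1-2c}[(1-c)\sigma(-g)+c\,\sigma(g)]\to-\tfrac{1-c}{1-2c}$; and for IPO, $w(g)\propto\tfrac{g}{\beta}-\tfrac{1}{2\beta}\to-\infty$. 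In each case the limit is a nonzero constant (or diverges), so the weight fails to redescend to zero even though it is bounded for all methods except IPO. This is precisely the conceptual crux the table records: boundedness of the gradient weight is strictly weaker than the redescending property, and it is exactly where these methods fall short.

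The remaining and hardest case is Dr.~DPO, because its objective $-\beta'\log\mathbb{E}_{\datacontamidens}[\sigma(g_{\theta})^{1/\beta'}]$ is a nonlinear functional of the data distribution rather than an expectation of pointwise terms, so no single scalar weight $w$ exists a priori. Here I would compute the influence function directly from its definition, differentiating the population objective under the mixture of Definition~\ref{def:contami_model} at $\epsilon=0$. Writing $A=\mathbb{E}_{p_{\mathcal{D}}}[\sigma(g_{\theta})^{1/\beta'}]$ and taking the contaminating law to be a point mass at $s_\textrm{flip}$, the Gateaux derivative of $\log\mathbb{E}[\cdot]$ produces an $s_\textrm{flip}$-dependent term proportional to $\tfrac{1}{A}\sigma(g_{\theta}(s_\textrm{flip}))^{1/\beta'}(\sigma(-g_{\theta}(s_\textrm{flip}))\,\nabla_{\theta}g_{\theta}(s_\textrm{flip})-\tfrac{\beta'}{A}\nabla_{\theta}A)$. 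The sign of $\beta'$ in the distributionally robust formulation controls whether the prefactor $\sigma(g_{\theta}(s_\textrm{flip}))^{1/\beta'}$ amplifies or merely fails to suppress the adversarial point; in the robust (pessimistic) regime this prefactor diverges as $g_{\theta}(s_\textrm{flip})\to-\infty$, so the flipped-point term does not vanish and Dr.~DPO is not redescending.

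I expect the main obstacle to be exactly this Dr.~DPO step: carrying out the functional derivative of $\log\mathbb{E}[\cdot]$ carefully, keeping track of the normalization $A$ and the competing term $\nabla_{\theta}A$, and pinning down the effective sign of $\beta'$ so that the adversarial prefactor is shown not to decay (indeed to grow). A secondary technical point, needed to make the scalar-weight reduction rigorous across all five methods, is justifying that $\nabla_{\theta}g_{\theta}(s_\textrm{flip})$ neither vanishes nor blows up along the limit; this is where the positive-definite-Hessian hypothesis and a mild genericity condition on the reference model $\pi_{\mathrm{ref}}$ enter, ensuring the norm of the influence function is governed by the scalar weight alone.
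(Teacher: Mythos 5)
Your treatment of DPO, IPO, C-DPO and R-DPO follows essentially the same route as the paper: reduce, via Theorem~\ref{thm:IF_param}, to the scalar IF weight multiplying the bounded direction $\nabla_{\theta}g_{\theta}(s_\textrm{flip})$, and compute its limit ($1$ for DPO, divergent for IPO, $1-c$ for C-DPO, $(1-c)/(1-2c)$ for R-DPO); these match the paper's Appendix~\ref{app:IF_anals_variants} computations. Your added remark that one must also exhibit a configuration with $\nabla_{\theta}g_{\theta}(s_\textrm{flip})$ bounded away from zero is a fair refinement --- the paper's own corollaries only derive finite \emph{upper} bounds on $\|\mathrm{IF}\|$ --- but it is not a different method.

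The genuine gap is the Dr.~DPO step. You assert that the prefactor $\sigma(g_{\theta}(s_\textrm{flip}))^{1/\beta'}$ ``diverges'' as $g_{\theta}(s_\textrm{flip})\to-\infty$; for the standard $\beta'>0$ it tends to $0$, since $\sigma(g)\to 0$. With a vanishing prefactor and a bounded bracket, your own Gateaux-derivative expression tends to zero, which would establish the redescending property for Dr.~DPO --- the opposite of what the theorem claims. The mechanism the paper actually uses is self-normalization: the per-sample gradient weight of Dr.~DPO is $w_{\theta^{*}}(\tilde{s})=\sigma(g_{\theta^{*}}(\tilde{s}))^{1/\beta'}/\,\mathbb{E}[\sigma(g_{\theta^{*}})^{1/\beta'}]$, and in the adversarial limit the numerator and the relevant part of the normalizing denominator decay at the same exponential rate, so the weight concentrates on the worst-case flipped samples ($w\to 1/p(G)$ on the set $G$ attaining the supremum of $g$) and the total expected IF weight $\mathbb{E}_{\dataflip}[w_{\theta^{*}}(s_\textrm{flip})\,\sigma(-g_{\theta^{*}}(s_\textrm{flip}))]$ converges to $1$, not $0$ (Lemma~\ref{lem:weight_analysis_dr}). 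Your computation, which normalizes by the clean-data quantity $A=\mathbb{E}_{p_{\mathcal{D}}}[\sigma(g_{\theta})^{1/\beta'}]$ held fixed along the limit, loses exactly this cancellation, and no choice of sign for $\beta'$ repairs it: to close the case you must track how the normalizing expectation itself degenerates as the flipped rewards diverge, which is the content of the paper's dedicated lemma.
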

\begin{wrapfigure}[14]{r}{0.45\textwidth}
  \vspace{-1.75em}
  \begin{center}
    \includegraphics[width=0.43\textwidth]{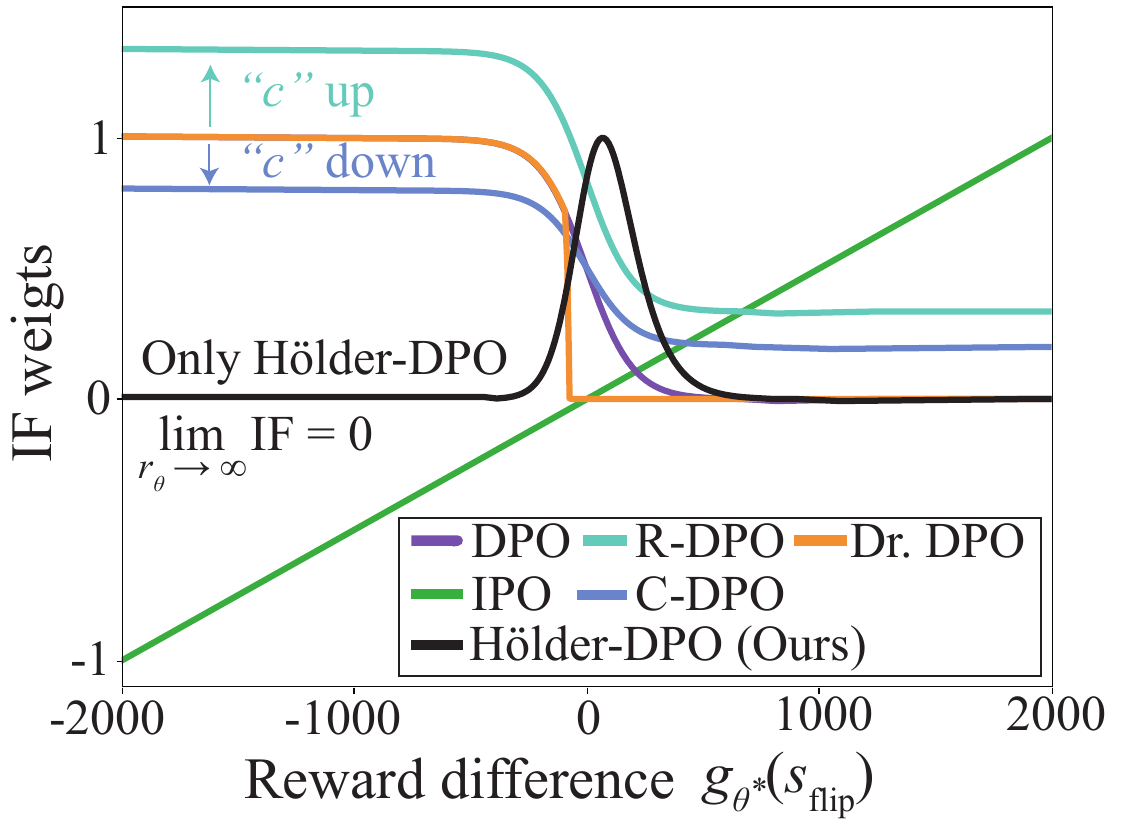}
  \end{center}
  \vspace{-1em}
  \caption{\small{IF analysis reveals only Hölder-DPO satisfies the redescending property.}}
  \label{fig:IF}
\end{wrapfigure}
See Appendices~\ref{app:proof} and~\ref{app:IF_anals_variants} for full proofs and detailed expressions of the IFs for each DPO variant.
We now provide an intuitive interpretation of Theorem~\ref{thm:not_robust}. The IF can be viewed as the gradient of the generic objective $\mathcal{L}_\text{gen}$ (see Eq.~\eqref{eq:IF_param_eps_gen_result}), which can be decomposed into two components. For example, in the vanilla DPO case (Eq.~\eqref{eq:grad_dpo_pointwise}), we have:
\begin{equation*}
    \nabla_\theta \mathcal{L}(s, \pi_\theta) = \underbrace{\sigma(- g_\theta(s))}_{\textrm{IF weights.}} \,\, \underbrace{g_\text{grad}(s)}_{\textrm{direction}}. 
\end{equation*}
Here, the gradient term $g_\text{grad}(s)$ determines the direction of the update, while the IF weight $\sigma(- g_\theta(s))$ acts as a scaling factor that controls the strength of the update for each data point. If the IF weight tends to zero in the worst-case scenario (i.e., $g_{\theta}(s_{\textrm{flip}})\rightarrow -\infty$), the method is robust, as it effectively ignores highly confident but misaligned (flipped) samples. 
Figure~\ref{fig:IF} visualises the behavior of IF weights across different objectives under extreme reward perturbations. Both DPO and Dr.~DPO exhibit the same asymptotic trend, with $\lim_{r_\theta(x, y^\text{flip}_\text{lose}) \rightarrow \infty} \text{IF} = a$, where $a$ is a non-zero constant. R-DPO shifts this limit upward with a constant $c$, while C-DPO pushes it downward, but neither approaches zero. In contrast, our proposed Hölder-DPO, introduced in the next section, completely nullifies the influence of outliers, demonstrating strong robustness (redescending), not merely boundedness.

\vspace{-0.5em}
\section{Proposed method: H\"{o}lder-DPO}
\label{sec:proposed_method}
\vspace{-0.5em}
In response to the issues identified in §\ref{sec:issues}, we introduce our robust DPO framework, \emph{Hölder-DPO}, the first algorithm that has a provable redescending property and can identify mislabels $s_\textrm{flip}$.

\vspace{-0.25em}
\subsection{H\"{o}lder-DPO Is Provably Robust}
\label{subsec:holder_dpo}
\vspace{-0.25em}
The objective of vanilla DPO in Eq.~\eqref{eq:obj_dpo} can be rewritten with KL divergence:
\begin{align}\label{eq:dpo_kl}
    \argmin_{\theta} \underbrace{\mathbb{E}_{\datacontamidens(\tilde{s})}[\log \datacontamidens(\tilde{s})]}_{\textrm{ignorable const.}}+ \underbrace{ \mathbb{E}_{\datacontamidens(\tilde{s})}[-\log \sigma(g_{\theta}(\tilde{s}))]}_{\textrm{Original DPO loss}} 
    = \argmin_{\theta} \textcolor{magenta}{ D_{\mathrm{KL}}}[\datacontamidens(\tilde{s}) \, \| \, \sigma(g_{\theta}(\tilde{s}))].
\end{align}
Thus, vanilla DPO can be understood as matching $\sigma(g_{\theta}(\tilde{s})) \approx \datacontamidens(\tilde{s})$ in terms of KL divergence. However, the KL divergence is well-known to be highly sensitive to data contamination~\cite{Basu98, Fujisawa08}, and accordingly, DPO is not robust (Theorem~\ref{thm:not_robust}).
This fact motivates replacing the KL divergence with a more robust alternative. In this paper, we employ the \emph{H\"{o}lder divergence}~\cite{KANAMORI14}, a generalization of the KL divergence that encompasses several robust divergences, defined as follows:
\begin{definition}[H\"{o}lder divergence~\cite{KANAMORI14}]\label{def:holder}
Let $\phi: \mathbb{R}_{+} \rightarrow \mathbb{R}$ be a continuous function such that $\phi(h) \geq -h^{1+\gamma}$ for all $h \geq 0$ and $\gamma > 0$, and $\phi(1) = -1$.  
Then, the H\"{o}lder divergence is defined as  
\vspace{-0.25em}
\begin{align}\label{eq:holder_div}
    D_{\mathrm{H}}[p(\omega) \, \| \, q(\omega)]
    \coloneqq
    \begin{cases}
        S_{\gamma}(p\, \| \,q) - S_{\gamma}(p\, \| \,p) & (\gamma > 0), \\
        D_{\mathrm{KL}}[p(\omega) \, \| \, q(\omega)] & (\gamma = 0),
    \end{cases}
\end{align}
where $\textcolor{magenta}{S_{\gamma}(p\, \| \,q)} \coloneqq \phi\left(\frac{\mathbb{E}_{p}[q^{\gamma}(\omega)]}{\mathbb{E}_{q}[q^{\gamma}(\omega)]}\right) \cdot \mathbb{E}_{q}[q^{\gamma}(\omega)]$,  
$S_{\gamma}(p\, \| \,p) \coloneqq -\mathbb{E}_{p}[p^{\gamma}(\omega)]$, and $p$, $q$ are non-negative, non-zero functions.
\end{definition}
\vspace{-0.5em}
We then propose to perform LLM alignment by solving the following optimization problem:
\vspace{-0.25em}
\begin{align}\label{eq:obj_holder_dpo}
    \argmin_{\theta} \textcolor{magenta}{D_{\mathrm{H}}}[\datacontamidens(\tilde{s})) \, \| \, \sigma(g_{\theta}(\tilde{s}))].
\end{align}
\vspace{-1em}
\begin{remark}\label{remark:holder}
    Prob.~\eqref{eq:obj_holder_dpo} generalises the KL divergence and encompasses two robust divergences.
    \vspace{-0.5em}
    {
    \begin{center}
    \centering
    \begin{tabular}{llll}
    \toprule
         $\gamma$&  $\phi(h)=$&  divergence& equivalence\\
         \midrule
         $\gamma=0$&  N/A &  KL& vanilla DPO (Eq.~\eqref{eq:dpo_kl})\\
$\gamma > 0$& $\gamma - (1 + \gamma) h$& density-powered (DP; \citep{Basu98})& N/A \\
$\gamma > 0$& $-h^{1+\gamma}$& pseudo-spherical (PS) score~\citep{Good71, Gneiting07}&$\gamma$-divergence~\cite{Fujisawa08, KANAMORI14}\\
    \bottomrule
    \end{tabular}
    \end{center}
    }
\end{remark}
\vspace{-0.25em}
When $\gamma > 0$, $S_{\gamma}(\datacontamidens \, \| \, \datacontamidens)$ is a negligible constant, Prob.~\eqref{eq:obj_holder_dpo} can be further reformulated as 
\vspace{-0.25em}
\begin{align}\label{eq:obj_holder_dpo_2}
    \argmin_{\theta} \textcolor{magenta}{S_{\gamma}}(\datacontamidens(\tilde{s}) \, \| \, \sigma(g_{\theta}(\tilde{s}))).
\end{align}  
We refer to the new objective, Prob.~\eqref{eq:obj_holder_dpo_2}, as \emph{H\"{o}lder-DPO}. We now analyze its robustness via IF: 
\vspace{-0.25em}
\begin{restatable}[IF for H\"{o}lder-DPO]{theorem}{IFholderDPO}
\label{thm:IF_param_holder}
Suppose that $\theta^{*}(\epsilon) = \argmin_{\theta} S_{\gamma} (\tilde{p}^{(\epsilon)}_{\mathcal{D}}\, \| \,\sigma(g_{\theta}))$ and $\theta^{*} = \argmin_{\theta} S_{\gamma} (p_{\mathcal{D}}\, \| \,\sigma(g_{\theta}))$.
Let $\phi(h)$ be twice-differentiable, and let $0 < \sigma(g_{\theta}(s))$ and $0< \gamma < \infty$.
Assume that $\phi(h)$ satisfies $\phi'(h) \neq 0$ for $h > 0$ and $\phi(h) \neq c \cdot h$ for any constant $c$\footnote{These assumptions are satisfied by the DP divergence and the PS score, as formally shown in Lemma~\ref{lem:assumption_check}.} and the Hessian $\nabla_{\theta}^{2}\mathcal{L}(s, \pi_{\theta})|_{\theta = \theta^{*}}$ is positive definite.
Then, the IF of the H\"{o}lder-DPO, excluding terms independent of $s_\textrm{flip}$, is given by:
\vspace{-0.25em}
\begin{align}
    \label{eq:IF_param_holder_result}
        \mathrm{IF}_\mathrm{H-DPO}(s_\textrm{flip}, \theta, p_{\mathcal{D}})
        \propto \mathbb{E}_{\dataflip}[F^{(\gamma)}_{\theta^{*}}(s_\textrm{flip})],
\end{align}
where $F^{(\gamma)}_{\theta^{*}}(s_\textrm{flip}) \coloneqq \sigma(g_{\theta^{*}}(s_\textrm{flip}))^{\gamma} \, \nabla_{\theta} \mathcal{L}(s_\textrm{flip}, \pi_{\theta^{*}})$. 
\end{restatable}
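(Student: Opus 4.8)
The plan is to treat $\theta^{*}(\epsilon)$ as the implicitly-defined root of the stationarity condition for the Hölder-DPO objective \eqref{eq:obj_holder_dpo_2}, and to differentiate that condition in $\epsilon$ at $\epsilon=0$, exactly as in the standard derivation of the influence function \eqref{eq:IF_param}. Writing $M(\theta,\epsilon)\coloneqq\nabla_{\theta}S_{\gamma}(\datacontamidens\,\|\,\sigma(g_{\theta}))$, the optimality condition is $M(\theta^{*}(\epsilon),\epsilon)=0$, and the implicit function theorem gives
\begin{align*}
\mathrm{IF}_{\mathrm{H-DPO}}(s_\textrm{flip},\theta,p_{\mathcal{D}})
= -\left[\nabla_{\theta}M(\theta^{*},0)\right]^{-1}\,\frac{\partial M}{\partial\epsilon}\Big|_{\epsilon=0,\,\theta=\theta^{*}}.
\end{align*}
The Jacobian $\nabla_{\theta}M(\theta^{*},0)$ is the Hessian of the clean objective: it is $s_\textrm{flip}$-independent and, by the positive-definiteness hypothesis, invertible, so it folds entirely into the overall proportionality constant. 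It therefore suffices to extract the $s_\textrm{flip}$-dependent part of $\partial_{\epsilon}M|_{0}$.

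Next I would make $M$ explicit using the closed form in Definition~\ref{def:holder}. Setting $q_{\theta}\coloneqq\sigma(g_{\theta})$, $A(\theta,\epsilon)\coloneqq\mathbb{E}_{\datacontamidens}[q_{\theta}^{\gamma}]$ and $B(\theta)\coloneqq\mathbb{E}_{q_{\theta}}[q_{\theta}^{\gamma}]$, we have $S_{\gamma}=\phi(A/B)\,B$, so with $u\coloneqq A/B$,
\begin{align*}
M=\phi'(u)\bigl(\nabla_{\theta}A-u\,\nabla_{\theta}B\bigr)+\phi(u)\,\nabla_{\theta}B.
\end{align*}
Only $A$ (hence $u$) carries $\epsilon$-dependence, through the mixture $\datacontamidens=(1-\epsilon)\datacleandens+\epsilon\,\dataflip$, giving $\partial_{\epsilon}A=\mathbb{E}_{\dataflip}[q_{\theta}^{\gamma}]-\mathbb{E}_{\datacleandens}[q_{\theta}^{\gamma}]$ and $\partial_{\epsilon}\nabla_{\theta}A=\gamma\bigl(\mathbb{E}_{\dataflip}[q_{\theta}^{\gamma-1}\nabla_{\theta}q_{\theta}]-\mathbb{E}_{\datacleandens}[q_{\theta}^{\gamma-1}\nabla_{\theta}q_{\theta}]\bigr)$. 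Differentiating $M$ in $\epsilon$ yields a clean cancellation of the two $\partial_{\epsilon}u\,\nabla_{\theta}B$ contributions, leaving
\begin{align*}
\frac{\partial M}{\partial\epsilon}=\phi''(u)\,\partial_{\epsilon}u\,\bigl(\nabla_{\theta}A-u\,\nabla_{\theta}B\bigr)+\phi'(u)\,\partial_{\epsilon}\nabla_{\theta}A.
\end{align*}

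I would then evaluate at $(\theta^{*},\epsilon=0)$. By Assumption~\ref{assump:tail_outlier} the model recovers the clean law, $q_{\theta^{*}}=\datacleandens$, so $A=B$ and $u=1$; the non-degeneracy hypotheses $\phi'(h)\neq0$ and $\phi(h)\neq c\,h$ are what guarantee that $u=1$ is the genuine stationary point and that the surviving coefficient $\phi'(1)$ is nonzero. In the first term, $(\nabla_{\theta}A-u\,\nabla_{\theta}B)$ is $s_\textrm{flip}$-independent while $\partial_{\epsilon}u$ carries the scalar factor $\mathbb{E}_{\dataflip}[q_{\theta^{*}}^{\gamma}]$, which is negligible by the tail-contamination part of Assumption~\ref{assump:tail_outlier}; this term thus produces no material $s_\textrm{flip}$-dependence. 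The surviving $s_\textrm{flip}$-dependent piece is $\phi'(1)\,\gamma\,\mathbb{E}_{\dataflip}[q_{\theta^{*}}^{\gamma-1}\nabla_{\theta}q_{\theta^{*}}]$. Finally I would apply the gradient identity implied by \eqref{eq:grad_dpo_pointwise}, namely $\nabla_{\theta}q_{\theta}=\sigma(g_{\theta})\sigma(-g_{\theta})\nabla_{\theta}g_{\theta}=\beta\,\sigma(g_{\theta})\,\nabla_{\theta}\mathcal{L}(s,\pi_{\theta})$, to rewrite $q_{\theta^{*}}^{\gamma-1}\nabla_{\theta}q_{\theta^{*}}=\beta\,\sigma(g_{\theta^{*}})^{\gamma}\nabla_{\theta}\mathcal{L}=\beta\,F^{(\gamma)}_{\theta^{*}}$, which yields $\mathrm{IF}_{\mathrm{H-DPO}}\propto\mathbb{E}_{\dataflip}[F^{(\gamma)}_{\theta^{*}}(s_\textrm{flip})]$ once the $s_\textrm{flip}$-independent factors $-[\nabla_{\theta}M]^{-1}$, $\phi'(1)\gamma\beta$, and the clean expectation are absorbed into the proportionality.

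The main obstacle is the bookkeeping that cleanly separates $s_\textrm{flip}$-dependent from $s_\textrm{flip}$-independent contributions, and in particular justifying that tail contamination annihilates the $\phi''$ term through $\mathbb{E}_{\dataflip}[q^{\gamma}]\approx0$ while leaving $\mathbb{E}_{\dataflip}[F^{(\gamma)}]$ intact—the latter involving the possibly large gradient $\nabla_{\theta}\mathcal{L}$, whose eventual decay is precisely what the subsequent redescending analysis must establish rather than assume here. A secondary subtlety is that it is strictly the Hessian of $S_{\gamma}$, not of $\mathcal{L}$, that is inverted, so the stated positive-definiteness assumption must be read as holding for the Hölder objective at $\theta^{*}$ in order for the implicit function theorem to apply.
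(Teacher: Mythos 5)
Your overall strategy is the same as the paper's: write the stationarity condition for the Hölder objective, differentiate implicitly in $\epsilon$ at $\epsilon=0$, and use the product decomposition $S_{\gamma}=\phi(A/B)\,B$ so that the gradient splits into a $\phi'(u)\nabla_{\theta}A$ part and a $\bigl(\phi(u)-u\,\phi'(u)\bigr)\nabla_{\theta}B$ part (this is exactly the paper's Proposition on $\nabla_{\theta}S_{\gamma}$), with the Jacobian folding into the proportionality constant. Your algebraic cancellation of the two $\partial_{\epsilon}u\,\nabla_{\theta}B$ contributions and the final identity $q_{\theta}^{\gamma-1}\nabla_{\theta}q_{\theta}=\beta\,\sigma(g_{\theta})^{\gamma}\nabla_{\theta}\mathcal{L}$ both check out and match the paper's bookkeeping.

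The one genuine gap is your treatment of the $\phi''$ cross-term. You dispose of it by invoking Assumption~\ref{assump:tail_outlier} twice: once to set $q_{\theta^{*}}=\datacleandens$ (hence $u=1$) and once to argue $\mathbb{E}_{\dataflip}[q_{\theta^{*}}^{\gamma}]\approx 0$ so that the $s_\textrm{flip}$-dependence of $\partial_{\epsilon}u$ is negligible. But Assumption~\ref{assump:tail_outlier} is not a hypothesis of Theorem~\ref{thm:IF_param_holder}, and the term you are discarding is only approximately zero under it, whereas the claimed identity is meant to hold exactly up to $s_\textrm{flip}$-independent terms. The paper instead kills this term \emph{exactly}: its Lemma~\ref{lem:optimal_condition_analysis} shows that, because $\phi'(h)\neq 0$ and $\phi(h)\neq c\,h$ force both coefficients in the stationarity identity to be nonzero, the first-order condition at $\theta^{*}$ implies $\nabla_{\theta}A|_{\theta^{*}}=0$ and $\nabla_{\theta}B|_{\theta^{*}}=0$ separately, so the factor $\bigl(\nabla_{\theta}A-u\,\nabla_{\theta}B\bigr)$ multiplying $\phi''(u)\,\partial_{\epsilon}u$ vanishes identically, with no appeal to tail contamination or to well-specification. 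This is also the actual role of the hypotheses $\phi'(h)\neq 0$ and $\phi(h)\neq c\cdot h$, which you instead attribute to guaranteeing that $u=1$ is a stationary point. To repair your argument, replace the Assumption~\ref{assump:tail_outlier} step with this optimality-condition argument; the rest of your derivation then goes through. Your closing caveat about the Hessian being that of $S_{\gamma}$ rather than of $\mathcal{L}$ is fair — the paper's own handling of this point is also informal.
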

\vspace{-0.25em}
\begin{restatable}[\textbf{Hölder-DPO is robust}]{corollary}{hdporobust}
\label{cor:robust_h_dpo}
    Suppose that the policy gradient $\nabla_{\theta}\log\pi_{\theta}(y \mid x)$ is bounded by $C$ and satisfies $L$-Lipchitz in $\theta$, where $0<C<\infty$ and $0< L < \infty$.
    Then, under Theorem~\ref{thm:IF_param_holder}, the IF of Hölder-DPO satisfies the redescending property in Definition~\ref{def:robustness}.
\end{restatable}
\vspace{-0.25em}
The full proof and the complete form of Eq.~\eqref{eq:IF_param_holder_result} are summarised in Appendices~\ref{app:proof_IF_holderDPO} and~\ref{app:proof_robust_hdpo}. This robustness result is expected, as both the DP divergence and the $\gamma$-divergence are well known to be robust to data contamination across various settings~\citep{Basu98, Fujisawa08, Ghosh16, fujisawa21a}. 
The key difference between the IFs of DPO variants (Theorem~\ref{thm:IF_param}) and our method (Theorem~\ref{thm:IF_param_holder}) lies in the term $\sigma(g_{\theta^*}(s_\textrm{flip}))^{\gamma}$. This subtle change induces diminishing IF weights, as shown in Figure~\ref{fig:IF}, and ensures the redescending.

\vspace{-0.25em}
\subsection{Hölder-DPO Can Estimate Contamination Ratio \texorpdfstring{$\epsilon$}{Lg} and Detect Contaminated Data \texorpdfstring{$s_\textrm{flip}$}{Lg}}
\label{subsec:est_eps_detect}
\vspace{-0.25em}
We now move to the most novel part of our algorithm: contaminated data detection, which is the primary motivation for employing Hölder divergence in addition to its robustness and generality across three divergences. 
Specifically, we show that our H\"{o}lder-DPO inherits the contamination ratio ($\epsilon$) estimation capability of the H\"{o}lder divergence~\cite{KANAMORI15}. 

\textbf{Model Extension.} 
Estimating the contamination ratio $\epsilon$ without access to clean validation dataset may sound surprising---how is it possible? The key enabler is the idea of \emph{model extension} \citep{KANAMORI15}. Due to space limitations, we present only the high-level intuition here and defer technical details to Appendix~\ref{app:holder_model_extension}. 
The crucial insight is that if the learning objective satisfies redescending property---that is, it nullifies the effect of contamination---the divergence can be approximated as:
\vspace{-0.25em}
\begin{align*}
    \argmin_{\theta} S_\gamma[\tilde{p}_{\mathcal{D}}^{(\epsilon)} \, \| \, \sigma(g_{\theta}(s))]
    &= \argmin_{\theta} S_\gamma[(1-\epsilon)\dataclean + \epsilon \dataflip \, \| \, \sigma(g_{\theta}(s))], \tag{Definition~\ref{def:contami_model}}\\
    &\approx \argmin_{\theta} S_\gamma[\textcolor{magenta}{(1-\epsilon)}\dataclean \, \| \, \sigma(g_{\theta}(s))] \tag{Corollary~\ref{cor:robust_h_dpo}}.
\end{align*}
Thus, our Hölder-DPO objective can be interpreted as matching $\textcolor{magenta}{(1-\epsilon)}\dataclean \approx \sigma(g_{\theta}(s))$. However, the scaling factor $\textcolor{magenta}{(1-\epsilon)}$ remains on the target distribution side.
The model extension idea~\citep{KANAMORI15} addresses this issue by introducing an additional scaling parameter $\textcolor{magenta}{\xi}$ into the model:
\vspace{-0.25em}
\begin{align*}
    \{\theta^{*}(\epsilon),\, \  \textcolor{magenta}{\xi^*}\} \approx \argmin_{\theta, \, \textcolor{magenta}{\xi}} S_\gamma[\textcolor{magenta}{(1-\epsilon)} \dataclean \, \| \, \textcolor{magenta}{\xi} \sigma(g_{\theta}(s))].
\end{align*}
Intuitively, if $\textcolor{magenta}{\xi \approx 1 - \epsilon}$, the objective approximately reduces to the clean form: $S_\gamma(\dataclean \, \| \, \sigma(g_{\theta}(s)))$. In other words, the aligned model $\sigma(g_{\theta^*(\epsilon)}(s))$ approximates the distribution of the clean dataset, i.e., $\dataclean \approx \sigma(g_{\theta^*(\epsilon)}(s))$. Furthermore, the optimised scaling parameter $\textcolor{magenta}{\xi}$ provides an estimate of the contamination ratio $\textcolor{magenta}{\epsilon}$, using $\textcolor{magenta}{\epsilon \approx 1 - \xi}$.

\textbf{Estimator of $\epsilon$.} 
The contamination ratio estimate $\hat{\epsilon}$ has the following closed-form:
\vspace{-0.25em}
\begin{proposition}[\textbf{Contamination ratio estimator}]\label{prop:contamination_estimator}
For any fixed $\theta$, the optimal solution to the inner optimization problem $\argmin_{\xi} S_{\gamma}(\datacontamidens \, \| \, \xi \sigma(g_{\theta}(s)))$ is given in closed form by $\textcolor{magenta}{\xi^{*}} = \mathbb{E}_{\tilde{p}_{\mathcal{D}}^{(\epsilon)}}[\sigma(g_{\theta}(\tilde{s}))^{\gamma}]\, / \, \int \sigma(g_{\theta}(\tilde{s}))^{1+\gamma}\mathrm{d}\Tilde{s}$. Then, the contamination ratio $\epsilon$ can be estimated by:
\vspace{-0.5em}
\small
\begin{align}\label{eq:opt_xi_mainpart}
    \hat{\epsilon} = \min\{0, 1-\hat{\xi}^*\}, \quad \text{where} \quad  \hat{\xi}^* := \frac{\frac{1}{N}\sum_{i=1}^{N} \bar{\sigma}(g_{\theta}(\Tilde{s}^{(i)}))^{\gamma}}{\sum_{i=1}^{N} \bar{\sigma}(g_{\theta}(\Tilde{s}^{(i)}))^{1+\gamma}}, \,\, \bar{\sigma}(g_{\theta}(\Tilde{s}^{(i)})) \coloneqq \frac{\sigma(g_{\theta}(\Tilde{s}^{(i)}))}{\sum_{i=1}^{N}\sigma(g_{\theta}(\Tilde{s}^{(i)}))}.
\end{align}
\normalsize
\end{proposition}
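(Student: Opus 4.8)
The plan is to treat the inner problem as a one-dimensional optimization over the scalar $\xi > 0$ with $\theta$ held fixed, and to exploit the defining inequality of the Hölder divergence (Definition~\ref{def:holder}) directly rather than grinding through the stationarity condition. First I would substitute $q = \xi\,\sigma(g_{\theta}(\tilde s))$ into the score $S_{\gamma}(\tilde p^{(\epsilon)}_{\mathcal{D}}\,\|\,q)$. Writing $A \coloneqq \mathbb{E}_{\tilde p^{(\epsilon)}_{\mathcal{D}}}[\sigma(g_{\theta})^{\gamma}]$ and $B \coloneqq \int \sigma(g_{\theta})^{1+\gamma}\,\mathrm{d}\tilde s$, the two ingredients of $S_{\gamma}$ become $\mathbb{E}_{\tilde p^{(\epsilon)}_{\mathcal{D}}}[q^{\gamma}] = \xi^{\gamma} A$ and $\mathbb{E}_{q}[q^{\gamma}] = \int q^{1+\gamma}\,\mathrm{d}\tilde s = \xi^{1+\gamma} B$, so that the ratio inside $\phi$ is $A/(\xi B)$ and the objective collapses to the clean scalar function $f(\xi) = \phi\!\big(A/(\xi B)\big)\,\xi^{1+\gamma} B$.

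The key step is the change of variables $u \coloneqq A/(\xi B)$, a strictly decreasing bijection of $(0,\infty)$ onto itself. Under it the objective factorizes as $f = (A^{1+\gamma}/B^{\gamma})\cdot \phi(u)/u^{1+\gamma}$, where the prefactor is a positive constant independent of $\xi$ (recall $\sigma>0$, so $A,B>0$). Minimizing $f$ over $\xi$ is therefore equivalent to minimizing $\phi(u)/u^{1+\gamma}$ over $u>0$. Now the defining property $\phi(h)\geq -h^{1+\gamma}$ together with $\phi(1)=-1$ states \emph{exactly} that $\phi(u)/u^{1+\gamma}\geq -1$ for all $u>0$, with equality at $u=1$. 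Hence the global minimum is attained at $u^{*}=1$, which back-substitutes to $\xi^{*} = A/B = \mathbb{E}_{\tilde p^{(\epsilon)}_{\mathcal{D}}}[\sigma(g_{\theta})^{\gamma}]\big/\int\sigma(g_{\theta})^{1+\gamma}\,\mathrm{d}\tilde s$, the claimed closed form. As a cross-check, differentiating $f$ gives the stationarity condition $(1+\gamma)\phi(u)=u\phi'(u)$, and the tangency of $\phi$ to $-h^{1+\gamma}$ at $h=1$ forces $\phi'(1) = -(1+\gamma)$, which recovers $u^{*}=1$ as a critical point.

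Two points then need care. Uniqueness of the minimizer requires the Hölder inequality to be \emph{strict} away from $h=1$: this holds for the DP divergence (I would verify it by showing $h\mapsto \phi(h)+h^{1+\gamma}$ has a strict minimum, equal to zero, only at $h=1$), whereas the scale-invariant PS score satisfies $\phi(h)/h^{1+\gamma}\equiv -1$ and makes $f$ constant in $\xi$, so the contamination-ratio read-out is meaningful precisely in the DP regime. To pass from the population quantity $\xi^{*}$ to the empirical estimator, I would replace $A$ by its Monte Carlo average over the $N$ samples drawn from $\tilde p^{(\epsilon)}_{\mathcal{D}}$, and obtain $\hat\epsilon$ from the model-extension identification $\xi \approx 1-\epsilon$ (as developed in §\ref{subsec:est_eps_detect}), clipped to remain non-negative.

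I expect the main obstacle to be the denominator $B = \int\sigma(g_{\theta})^{1+\gamma}\,\mathrm{d}\tilde s$: it is an integral over the entire data space and is not directly estimable from samples. The resolution is the self-normalization in Eq.~\eqref{eq:opt_xi_mainpart}: replacing $\sigma$ by $\bar\sigma = \sigma/\sum_{i}\sigma$ converts the intractable integral into a finite sum, and the shared normalizing constant cancels between the numerator and denominator of the ratio, so that $\hat\xi^{*}$ serves as a tractable empirical surrogate for $A/B$. Making this cancellation precise, and controlling the residual bias introduced by estimating the integral through self-normalized weights (which relies on the well-specified matching $\sigma(g_{\theta})\propto p_{\mathcal{D}}$ near the optimum), is the step I would treat most carefully and whose full justification I would defer to the appendix.
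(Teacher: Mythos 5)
Your derivation of the closed form $\xi^{*}=A/B$ is correct and is essentially the paper's own argument (Appendix~\ref{subsub:extended_model_estimate}): both of you substitute $m_\eta=\xi\sigma(g_\theta)$, observe that $\mathbb{E}[q^\gamma]=\xi^\gamma A$ and $\int q^{1+\gamma}=\xi^{1+\gamma}B$, invoke the defining inequality $\phi(h)\ge -h^{1+\gamma}$ to obtain a $\xi$-independent lower bound $-A^{1+\gamma}/B^{\gamma}$, and read off the minimizer from the equality condition $h=A/(\xi B)=1$ using $\phi(1)=-1$. Your change of variables $u=A/(\xi B)$ is only a cosmetic repackaging of that same step, and your observations about strictness for the DP choice of $\phi$ and degeneracy ($f$ constant in $\xi$) for the PS score match the paper's Appendix~\ref{app:choice_phi}.

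The genuine gap is in your account of how the population quantity $A/B$ becomes the empirical estimator $\hat{\xi}^{*}$ in Eq.~\eqref{eq:opt_xi_mainpart}. You claim that the normalizing constant in $\bar\sigma=\sigma/\sum_j\sigma_j$ ``cancels between the numerator and denominator,'' making $\hat\xi^*$ a direct surrogate for $A/B$. It does not cancel: because the numerator of $\hat\xi^{*}$ carries a $1/N$ while the denominator does not, expanding $\bar\sigma$ leaves behind exactly the factor $\frac{1}{N}\sum_i\sigma(g_\theta(\tilde s^{(i)}))$ multiplying the naive ratio $\sum_i\sigma_i^{\gamma}/\sum_i\sigma_i^{1+\gamma}$. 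That surviving factor is the whole point: the paper first shows (Lemma~\ref{lem:sigma_bound}) that the naive plug-in ratio is always $\ge 1$ whenever $0<\sigma\le 1$ (since $\sigma^{\gamma}\ge\sigma^{1+\gamma}$ pointwise), so without correction $1-\hat\xi$ could never indicate a positive contamination level; it then rescales by the mean likelihood $v^{-1}=\frac{1}{N}\sum_i\sigma_i$ and uses Chebyshev's sum inequality on the similarly ordered sequences $\{\sigma_i\}$ and $\{\sigma_i^{\gamma}\}$ to prove the rescaled estimator lies in $(0,1]$ (Appendix~\ref{app:estimation_xi}). So the self-normalized form is a deliberate, separately justified correction, not a cancellation, and a proof that stops at ``replace $A$ by a Monte Carlo average'' would produce an estimator that is provably uninformative. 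You correctly flagged this as the delicate step, but the mechanism you sketched for resolving it would not work.
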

\vspace{-0.25em}
Surprisingly, the contamination ratio $\hat{\epsilon}$ can be estimated using the closed-form solution. 
As a result, Hölder-DPO enables contamination estimation without solving a complex optimization problem. A full derivation and proof are provided in Appendix~\ref{app:holder_model_extension}.

\textbf{Choice of $\phi$.} 
Given two divergence options, DP or PS (see Remark~\ref{remark:holder}), which should we choose? \citet{KANAMORI15} addressed this question in the classical setting and showed that only DP—i.e., $\phi(h) = \gamma - (1 + \gamma)h$—satisfies both the redescending property (Corollary~\ref{cor:robust_h_dpo}) and contamination ratio estimation (Proposition~\ref{prop:contamination_estimator}). 
We proved that the same theoretical result holds in the model alignment context: neither the PS score nor KL divergence satisfies both properties, while DP does. (see Appendix~\ref{app:choice_phi}).
Thus, our final objective reduces to the following empirical estimator:
\vspace{-0.25em}
\small
\begin{equation}
\begin{aligned}\label{eq:obj_holder_dpo_3}
    \argmin_{\theta} \textcolor{magenta}{\widehat{S}_{\mathrm{DP}}} \left(\datacontamidens \,\|\, \xi \sigma(g_\theta(\tilde{s})) \right)
    \coloneqq
    \argmin_{\theta} - \frac{(1 + \gamma)}{N}\sum_{i=1}^{N}\big[\sigma(g_\theta(\Tilde{s}^{(i)}))^\gamma \big] + 
    \frac{\gamma}{N}\sum_{i=1}^{N} \sigma(g_{\theta}(\Tilde{s}^{(i)}))^{1+\gamma}.
\end{aligned}
\end{equation}
\normalsize
The details of the derivation of the DP divergence estimator are provided in Appendix~\ref{app:detail_our_obj}.

\vspace{0.5em}
\subsection{Algorithm to Detect Data Contamination}
\vspace{-0.25em}
\begin{minipage}{0.52\textwidth}
    Algorithm~\ref{alg:our_method} summarises the overall procedure for detecting contaminated data points. As shown in Line~\ref{alg_line:opt_model}, our proposed Hölder-DPO serves as a plug-and-play replacement for the standard DPO loss. Simply training with the objective in Eq.~\eqref{eq:obj_holder_dpo_3} yields a robust model alignment algorithm. 
    For further analysis, Line~\ref{alg_line:est_contamination_ratio} estimates the contamination ratio $\hat{\epsilon}$ without 
    \end{minipage}
\hfill
\begin{minipage}{0.45\textwidth}
    \vspace{-3.75em}
    \begin{algorithm}[H]
        \footnotesize
        \caption{Contamination detection}
        \label{alg:our_method}
        \begin{algorithmic}[1]
        \REQUIRE {$\widetilde{\mathcal{D}}$, $\pi_{\theta}$, $\pi_{\mathrm{ref}}$, $\gamma$}
        \STATE Fine-tune via Prob.~\eqref{eq:obj_holder_dpo_3} $\rightarrow$ $\theta^{*}(\epsilon)$. \label{alg_line:opt_model}
        \STATE Estimate contamination ratio $\hat{\epsilon}$ via Eq.~\eqref{eq:opt_xi_mainpart}. \label{alg_line:est_contamination_ratio}
        \STATE Identify mislabels in $\widetilde{\mathcal{D}}$ by sorting the estimated clean-data likelihood $\sigma(g_{\theta^*(\epsilon)}(\tilde{s}^{(i)}))$ and take least  $\lfloor N \hat{\epsilon} \rfloor$ samples.~\label{alg_line:detect_contamination} 
        \RETURN $\theta^{*}(\epsilon)$, $\hat{\epsilon}$, $\mathcal{D}_{z}$
        \end{algorithmic}
    \end{algorithm}
\end{minipage}\\ \\
    requiring access to a clean validation dataset---unlike prior work that depends on one~\citep{chowdhury2024provably, kong2024perplexity}. We then perform dataset valuation using the estimated clean-data likelihood $\sigma(g_{\theta^*(\epsilon)}(\tilde{s}^{(i)}))$, identifying the estimated mislabelled subset.

\vspace{-0.5em}
\section{Experiments}\label{sec:experiments}
\vspace{-0.25em}
We benchmarked Hölder-DPO against two robust DPO variants: R-DPO \citep{chowdhury2024provably} and Dr.~DPO \citep{wu24}. We also compared to several heuristic baselines, including cDPO \citep{mitchell2023note} and vanilla DPO \citep{rafailov23}. As shown in Theorem~\ref{thm:not_robust}, none of these baselines satisfy the redescending property, though they may offer empirical improvements in some settings. 
All experiments are conducted on a single NVIDIA A100 GPU (40 GB VRAM, 83.48 GB RAM). For SFT of the reference model, we train for one epoch using context–response pairs $(x, y_{\text{win}})$. Each DPO variant is then trained for five epochs. We implement all methods using the Transformers~\citep{vaswani2017attention, wolf2019huggingface}, TRL~\citep{vonwerra2022trl}, and PyTorch \citep{paszke2019pytorch} libraries. We set $\beta = 0.1$ and $\gamma = 2.0$, and all other hyperparameters follow the TRL defaults. Complete implementation details, including Hugging Face URLs to models and datasets used, are provided in Appendix~\ref{app:experimental_details}. 

\vspace{-0.25em}
\subsection{Controlled Sentiment Generation: Robustness Evaluation}
\vspace{-0.25em}
\begin{figure}[t]
    \centering
    \includegraphics[width=1\linewidth]{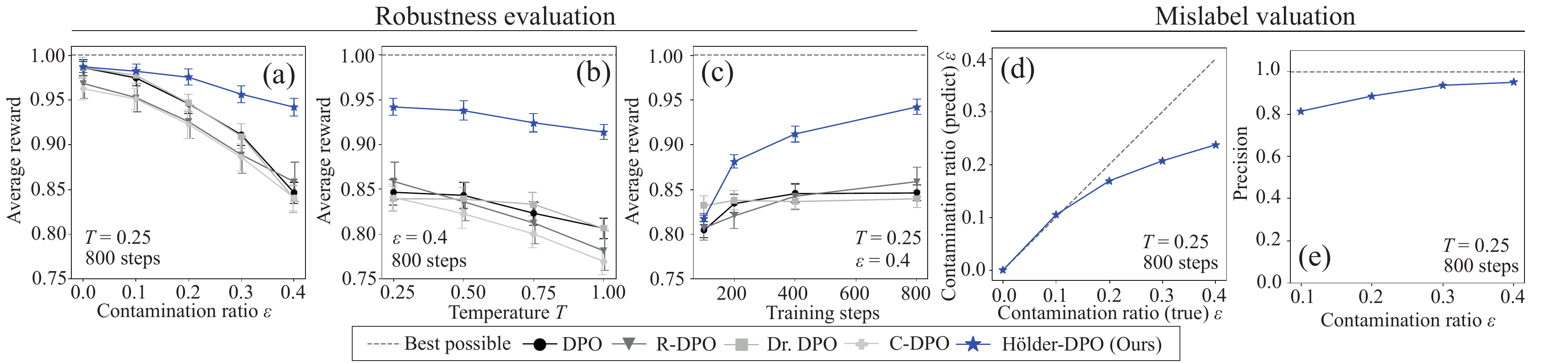}
    \caption{Controlled sentiment generation task using \texttt{GPT2-large}. Error bars indicate the standard deviation over 10 trials with different random seeds. Hölder-DPO consistently outperforms all baselines in average reward under varying (a) contamination ratios $\epsilon$, (b) generation temperatures, and (c) training steps. In addition, only Hölder-DPO offers reliable (d) contamination ratio estimation $\hat{\epsilon}$ and (e) precision of mislabel identification, measured by precision as a binary classifier.}
    \label{fig:imdb}
    \vspace{-1em}
\end{figure}
\textbf{Setup.} 
We evaluate alignment robustness on a sentiment-controlled text generation task using the IMDb dataset \citep{maas2011learning}, which contains 25,000 polarised movie reviews. Following \citet{chowdhury2024provably}, we extract the first 20 tokens from each review as a context prompt. For each prompt, we generate four responses using the SFT-ed \texttt{GPT2-large} \citep{radford2019language}. These responses are used to construct six preference pairs per prompt.
To provide sentiment rankings, we employ \texttt{SiEBERT} \citep{liu2019roberta, hartmann2023} as the latent (ground-truth) reward model $r^*(y, x)$. To ensure clean dataset, we filter out preference pairs that do not satisfy $r^*(y_{\text{win}}, x) - r^*(y_{\text{lose}}, x) > 0.1$, yielding 12,000 clean pairs. Of these, 10,000 are used for training and 2,000 for evaluation. 
To simulate label noise, we randomly flipped $(y_\text{win}, y_\text{lose})$ in the training set with contamination ratio $\epsilon \in \{0., 0.1, 0.2, 0.3, 0.4\}$. We then fine-tuned \texttt{GPT2-large} using various DPO variants. Alignment quality is measured by the average reward assigned by \texttt{SiEBERT} to responses generated by the aligned model on the 2,000 evaluation prompts.

\textbf{Alignment quality.} 
Figure~\ref{fig:imdb}(a) shows that Hölder-DPO consistently outperforms all baselines in average reward, with only a minor drop as the contamination level $\epsilon$ increases. 
This supports our theoretical claim: Hölder-DPO, uniquely characterised by the redescending property, achieves robustness even under severe noise (e.g., when $\epsilon \geq 0.3$).
A mild linear decay remains, likely due to the reduced proportion of effectively clean data points, which limits estimation accuracy. 
Figures~\ref{fig:imdb}(b) and~\ref{fig:imdb}(c) further demonstrate Hölder-DPO’s robustness across generation temperatures and training steps. 
Notably, while the baseline methods plateau early during training, Hölder-DPO continues to improve, suggesting stronger resistance to overfitting on noisy preferences. 

\begin{wraptable}[11]{r}{0.4\linewidth}
\vspace{-1.0em}
\caption{Effect of $\gamma$ on Hölder-DPO\\
under $\epsilon = 0.4$ contamination.}
\vspace{-1.5em}
\label{tab:gamma}
\begin{tabular}{l ccc}\\\toprule  
$\gamma$ & \makecell{average\\reward} & $\hat{\epsilon}$ & precision\\
\midrule
1 & 0.8975           & 0.2503            & 0.6433\\
2 & 0.9420           & 0.2373            & \textbf{0.9286}\\
4 & 0.9446           & 0.3083            & 0.8902\\
6 & \textbf{0.9561}  & \textbf{0.4123}   & 0.8328\\
8 & 0.9470           & 0.4380            & 0.7860\\
\bottomrule
\end{tabular}
\end{wraptable} 
\textbf{Valuation quality.}
Figures~\ref{fig:imdb}(d) and~\ref{fig:imdb}(e) evaluate mislabel valuation. As shown in Proposition~\ref{prop:contamination_estimator}, this estimation capability is unique to Hölder-DPO and is not supported by other baselines (see Theorem~\ref{thm:not_robust}). The detection task is entirely unsupervised---no ground-truth labels indicate whether a preference pair is clean or flipped. 
Remarkably, Hölder-DPO accurately estimates the contamination ratio up to $\epsilon = 0.2$, after which the estimates begin to saturate. 
This is expected, given that contamination ratio estimation relies on the optimised $\sigma(g_{\theta^{*}(\epsilon)})$, and the average reward slightly deteriorates as $\epsilon$ increases (see Figure~\ref{fig:imdb}(a)).
Nonetheless, it reliably ranks datasets by relative noise levels. For mislabel detection, precision---i.e., the proportion of true mislabels among those predicted as mislabelled---increases with higher $\epsilon$, which we attribute to underestimation of $\epsilon$, reducing false positives and improving detection precision.

\vspace{-1.0em}
\paragraph{Impact of $\gamma$ choice.}
We vary the key hyperparameter $\gamma$ to assess its effect. As shown in Table~\ref{tab:gamma}, there is a trade-off: increasing $\gamma$ leads to a more accurate contamination ratio estimate $\hat{\epsilon}$ but lowers mislabel detection precision. 
Intuitively, $\gamma$ controls how strongly potential outliers are down-weighted. While the optimal $\gamma$ is task-dependent and relates to the true (yet unknown) $\epsilon$, Hölder-DPO consistently outperforms all baselines with a fixed default of $\gamma = 2$, suggesting that it is effective without extensive tuning. See Appendix~\ref{app:experimental_details} for further ablation studies on hyperparameters, e.g., batch size. 

\vspace{-0.5em}
\subsection{Single-turn Dialogue: Generated Responce Quality Evaluation}
\vspace{-0.25em}
\begin{figure}[t]
    \centering
    \includegraphics[width=1\linewidth]{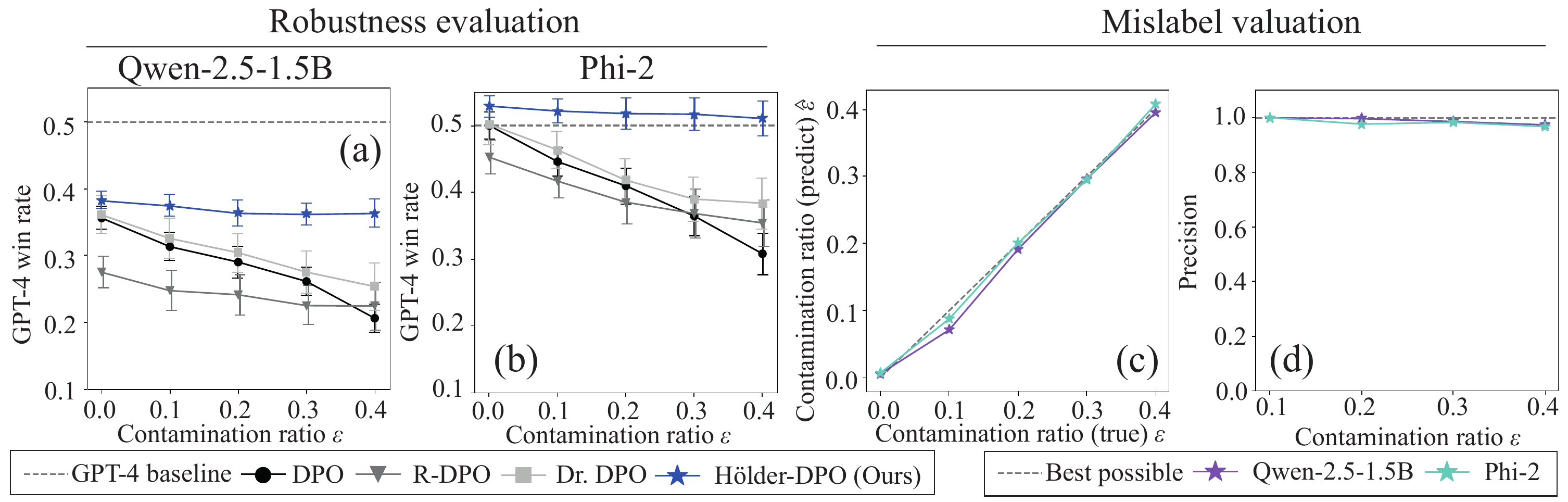}
    \caption{Helpful assistant dialogue generation on the Golden HH dataset. Error bars denote standard deviation over 10 trials with different random seeds. Hölder-DPO consistently achieves the highest \texttt{GPT-4} win rate across both base models: (a)\texttt{Qwen-2.5-1.5B} and (b)\texttt{Phi-2}. Notably, Hölder-DPO is the only method that outperforms \texttt{GPT-4}-generated prompts even when using the smaller 2.8B \texttt{Phi-2} model. It also delivers near-perfect (d)~contamination ratio estimation $\hat{\epsilon}$ and (e)~precision in mislabelled data detection, regardless of the base model.}
    \label{fig:goldenhh}
    \vspace{-1em}
\end{figure}
Given a user prompt, the task is to generate responses that are more helpful and less harmful.
In this setting, we do not have access to ground-truth rewards (e.g., from a sentiment model like \texttt{SiEBERT}). Instead, we adopt the \texttt{GPT-4} win rate~\citep{rafailov23, wu24}, the standard metric for evaluating alignment when ground truth is unavailable. This metric follows three steps: (A) generate a response $y_\text{gen}$ using the aligned model for a given prompt $x$; (B) retrieve the preferred response $y_\text{win}$ from the test dataset for the same prompt $x$; (C) ask \texttt{GPT-4} to compare $(y_\text{win}, y_\text{gen})$ and determine which is better. The win rate is defined as the fraction of prompts where $y_\text{gen}$ is preferred. 
For dataset, we evaluate on the Golden HH~\citep{cai2023ulma}, a manually curated version of Anthropic HH~\citep{ganguli2022red}, in which low-quality responses have been replaced with \texttt{GPT-4}-generated outputs. Since $y_\text{win}$ corresponds to \texttt{GPT-4} generations, the benchmark is both cleaner and more challenging than the original Anthropic HH. The dataset contains 42,500 training and 2,310 test examples. 
To evaluate robustness, we randomly flipped preference labels with varying noise levels $\epsilon \in \{0., 0.1, 0.2, 0.3, 0.4\}$. We tested two base models: \texttt{Qwen-2.5-1.5B}~\citep{yang2024qwen2} and \texttt{Phi-2}~\citep{javaheripi2023phi}, and fine-tuned using DeepSpeed Stage 3~\citep{aminabadi2022deepspeed} for memory efficiency, with an effective batch size of 32\footnote{Batch size of 4 with gradient accumulation of 8.}.
As shown in Figures~\ref{fig:goldenhh}(a) and~\ref{fig:goldenhh}(b), Hölder-DPO consistently outperforms all baselines across both base models. Notably, Hölder-DPO with \texttt{Phi-2} is the only method to exceed a 50\% win rate, slightly outperforming \texttt{GPT-4}-generated responses. Figures~\ref{fig:goldenhh}(c) and~\ref{fig:goldenhh}(d) further show that mislabel detection is nearly perfect. Remarkably, these results are achieved in a fully unsupervised setting, underscoring the effectiveness of our approach.

\paragraph{Can Hölder-DPO scale to larger models?}
\begin{wraptable}[7]{r}{0.465\linewidth}
\vspace{-1.5em}
\caption{GPT-4 win rate on OASST1 ($\epsilon=0.4$).}
\vspace{-1.75em}
\label{tab:oasst1}
\begin{tabular}{l cc}\\
\toprule  
loss & Ministral-8B & NeMo-12B\\
\midrule
DPO         & 0.5801           & 0.6021\\
R-DPO       & 0.5737           & 0.5992\\
Dr.~DPO      & 0.6058           & 0.6196\\
Hölder-DPO  & \textbf{0.6314}  & \textbf{0.6473}\\
\bottomrule
\end{tabular}
\end{wraptable} 
We further evaluated Hölder-DPO on larger language models---\texttt{Mistral-8B}~\citep{ministral} and \texttt{NeMo-12B}~\citep{nemo}---both capable of multilingual interaction. Experiments were conducted on the OASST1 dataset~\citep{sileo2024tasksource, kopf2023openassistant}, which contains 18,000 multilingual messages across 35 languages in a multi-turn dialogue format. Preference labels in the training set were flipped with a contamination ratio of $\epsilon = 0.4$. 
To accommodate limited GPU resources, we apply low-rank adaptation (LoRA; \citep{hu2022lora}) via the PEFT library~\citep{peft}, along with model quantization using bitsandbytes~\citep{dettmers2022optimizers, dettmers2023qlora}. Despite these constraints, Hölder-DPO consistently outperforms all baselines, as shown in Table~\ref{tab:oasst1}.

\vspace{-0.25em}
\subsection{Real-world Noisy Datasets: Dataset Valuation and Cleaning}
\vspace{-0.25em}
\begin{figure}[t]
    \centering
    \includegraphics[width=1\linewidth]{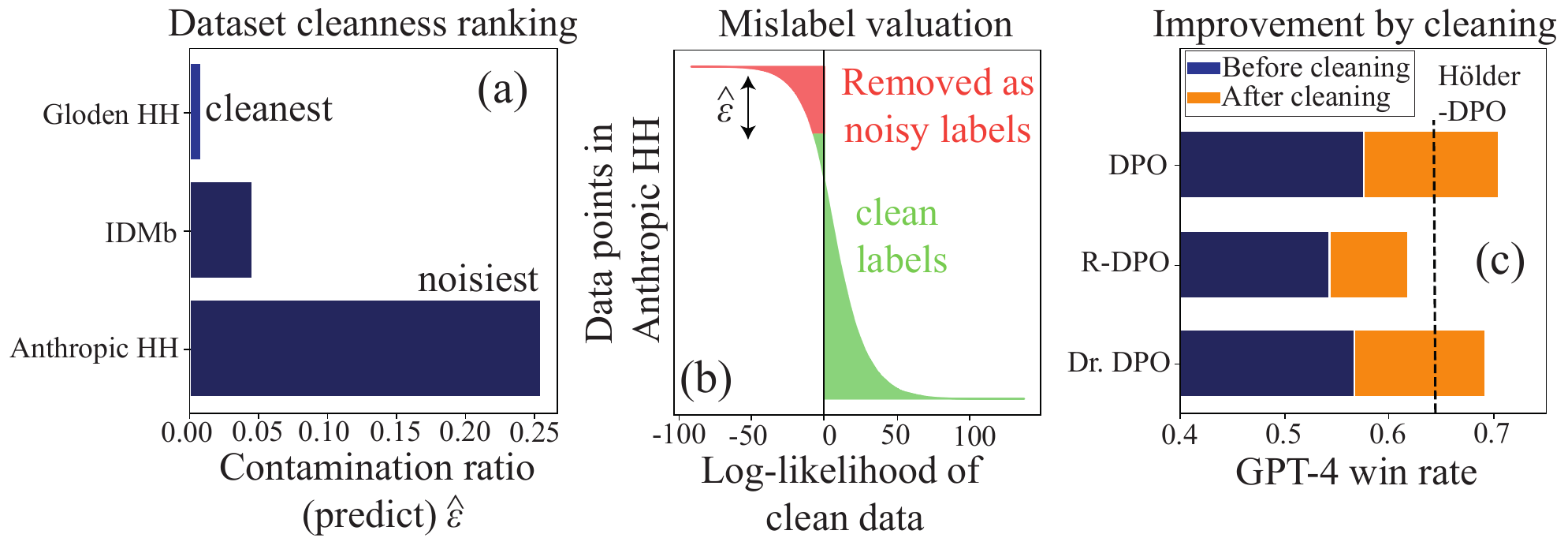}
    \caption{Dataset valuation using \texttt{Phi-2}. (a) Hölder-DPO estimates substantial contamination ($\hat{\epsilon} \approx 0.25$) in popular Anthropic HH dataset. (b) Distribution of log-likelihoods of clean data points in the dataset. (c) Improvement in \texttt{GPT-4} win rate across methods after removing detected noisy data points from the training set—surpassing even Hölder-DPO trained on the original (noisy) dataset.}
    \label{fig:cleaning}
    \vspace{-1em}
\end{figure}
To go beyond controlled settings with synthetic noise, we next applied Hölder-DPO to real-world datasets where the true contamination ratio $\epsilon$ is unknown.
Using our contamination ratio estimator $\hat{\epsilon}$ in Eq.~\eqref{eq:opt_xi_mainpart}, 
we performed dataset valuation and cleaning on the Anthropic HH dataset~\citep{ganguli2022red}, which is widely used but known to contain significant label noise~\citep{shen2024improving, chehbouni-etal-2025-beyond, wu24}.
We adopted \texttt{Phi-2} as the base model and apply Hölder-DPO directly to the raw dataset.

\textbf{Dataset valuation.} 
Figure~\ref{fig:cleaning}(a) shows the estimated contamination levels for three datasets without synthetic label flipping, reflecting inherent label noise. As expected, the Anthropic HH dataset exhibits the highest estimated contamination, with $\hat{\epsilon} \approx 0.25$. This aligns with prior findings reporting substantial noise in Anthropic HH~\citep{shen2024improving, chehbouni-etal-2025-beyond, wu24, chowdhury2024provably}.
Figure~\ref{fig:cleaning}(b) displays the distribution of log-likelihoods,~with illustrative examples provided in Appendix~\ref{app:anthropic}. Manual inspection reveals a consistent pattern: for prompts that LLMs are expected to refuse (e.g., harmful or unethical requests), both responses in the removed examples often provide information, leading to inconsistent rankings. These findings show that Hölder-DPO not only serves as a robust alignment objective but also offers interpretable and actionable insights for dataset valuation.

\textbf{Data cleaning.}
We also evaluated the effectiveness of Hölder-DPO for data cleaning. Here, Hölder-DPO was used as a pre-processing step to identify and remove noisy preference labels, yielding a filtered training dataset. We then compared the \texttt{GPT-4} win rates before and after cleaning. As shown in Figure~\ref{fig:cleaning}(c), removing samples identified as mislabelled consistently improves performance across all methods.
Notably, both DPO and Dr.~DPO trained on the cleaned dataset outperform Hölder-DPO trained on the original (noisy) dataset. 
This is expected: while Hölder-DPO is designed to be robust under label noise ($\epsilon > 0$), once the data is cleaned ($\epsilon \approx 0$), robust objectives can be overly conservative, thus 
KL-based methods like DPO can align more effectively. 
These results suggest that Hölder-DPO can serve not only as a robust training objective, but also as a valuable pre-filtering tool for noisy preference datasets.

\section{Discussion}\label{sec:discussion}
\paragraph{Role of Hölder-DPO; pre-filtering or new objective?} 
Hölder-DPO is a versatile framework whose optimal role depends on the application context.
When maximising final performance is prioritised over training cost, a two-stage pipeline—first filtering with Hölder-DPO and then fine-tuning using a vanilla DPO objective—yields superior results (see Figure~\ref{fig:cleaning}(c)).
Conversely, when computational efficiency is the priority, a single-stage approach—directly fine-tuning with the Hölder-DPO objective—offers a cost-effective solution while maintaining strong alignment quality.
This flexibility enables practitioners to choose the two strategies according to their constraints.

\paragraph{How to set $\gamma$?} 
As shown in Table~\ref{tab:gamma}, the value of $\gamma$ has a strong influence on performance. Its optimal setting is likely task- and dataset-dependent. Ideally, dynamically tuning $\gamma$ during training would yield the best results. Nevertheless, our experiments show that a default value of 2.0 performs robustly across diverse settings, and we recommend using it as the default choice.
When fine-tuning $\gamma$ is desired, a promising approach is to employ a human-in-the-loop process. As illustrated in Figure~\ref{fig:cleaning}(b), the Hölder-DPO model can rank data points. This ranking can serve as a diagnostic tool to guide manual inspection: for instance, increasing $\gamma$ if clear mislabels are still being tolerated, or decreasing it if correct preferences are penalised as mislabels. 

\paragraph{What if $\epsilon > 0.5$?} 
When the noise ratio becomes extreme (e.g., $\epsilon = 0.9$), Hölder-DPO can still identify the 10\% of clean samples by isolating them as outliers. In such cases, the roles can be reversed—the identified minority can be treated as the clean subset, while the majority is discarded. This reversal can be validated with minimal human effort by inspecting a few of the top-ranked samples flagged by our method. Hence, Hölder-DPO remains applicable even in highly noisy settings (e.g., $\epsilon > 0.6$).
However, detecting mislabeled data becomes fundamentally difficult when $\epsilon \approx 0.5$. We argue that this challenge is not a limitation of our method per se, but rather a theoretical identifiability barrier: when clean and noisy data are equally represented, distinguishing between them is statistically impossible without access to a clean reference set. Our approach inherently assumes a bimodal distribution, where one mode—typically the majority—corresponds to clean data.

\paragraph{What kind of label noise does the tail assumption cover?}
While our tail-contamination assumption formally presumes i.i.d. noise, it is in fact more general than it may appear. Hölder-DPO is effective against a wide range of label noise types—symmetric, asymmetric, and even certain systematic annotator biases—so long as the resulting noisy preference pairs are recognized by the learned model as low-likelihood events (i.e., outliers). This generality represents a clear advantage over prior methods that can only handle symmetric label flips (e.g., \citep{liang2025ropo}).
The main limitation arises when the noise is too structured to be identified as an outlier by the model. For instance, if all preference labels associated with a single prompt or narrow topic are consistently flipped, such errors cease to form a distinct “tail” in the data distribution. Instead, they produce a coherent but incorrect signal within that subdomain—one the model may interpret as a genuine, domain-specific preference. Addressing such structured noise requires modeling dependencies across samples rather than treating each observation as independently corrupted. Exploring this direction is a key avenue for future work. Concurrent research has begun tackling these challenges using latent-variable formulations \citep{cao2025latent}.

\section{Conclusion and Limitation}\label{sec:conclusion}
\textbf{Conclusion.}
We analyzed the robustness of DPO variants through the lens of the redescending property and showed that none of the existing methods satisfy it. We then introduced Hölder-DPO---the first algorithm with a provable redescending property---alongside a principled data valuation method for estimating contamination and identifying mislabelled data. Our experiments demonstrate that Hölder-DPO improves alignment quality and accurately detects mislabels without access to a clean validation set.
Notably, it offers the first theoretically grounded approach to dataset valuation, revealing significant noise in the widely used Anthropic HH dataset. Moreover, filtering out the detected noisy examples further improves performance across alignment methods, positioning Hölder-DPO as an effective pre-training filter. Taken together, our theoretical and empirical results advance scalable, automatable alignment without relying on clean validation data.

\textbf{Limitations.}
A key limitation lies in the selection of the hyperparameter $\gamma$ and see Section~\ref{sec:discussion} for the potential solutions.
Another limitation is that our theoretical guarantees are also based on an i.i.d.~label flip model and see also Section~\ref{sec:discussion} for the details.
Furthermore, our objective relies on a coarse approximation of the DP divergence (see Appendix~\ref{app:detail_our_obj}), and exploring more precise approximation methods is another important avenue.
Finally, we implicitly assume that majority opinion represents the clean data; see  Appendix~\ref{app:broader_impact} for the broader implications.

\section*{Acknowledgments}
We sincerely appreciate the anonymous reviewers for their insightful feedback.
We thank Dr.~Siu Lun Chau, Juliusz Ziomek, and Yuki Tachibana for their valuable discussions.
This research was primarily conducted while MF was a visiting researcher at the University of Oxford.
MF was previously supported by the RIKEN Special Postdoctoral Researcher Program and JST ACT-X (Grant Number: JPMJAX210K), Japan.
MF is currently supported by KAKENHI (Grant Number: 25K21286). MA was supported by the Clarendon Fund, the Oxford Kobe Scholarship, the Watanabe Foundation, and Toyota Motor Corporation.

\bibliography{main}
\sloppy
\bibliographystyle{plainnat}

\newpage
\section*{NeurIPS Paper Checklist}
\begin{enumerate}

\item {\bf Claims}
    \item[] Question: Do the main claims made in the abstract and introduction accurately reflect the paper's contributions and scope?
    \item[] Answer: \answerYes{} 
    \item[] Justification: Redescending property (Theorem~\ref{thm:not_robust}, Corollary~\ref{cor:robust_h_dpo}), contamination ratio estimator (Proposition~\ref{prop:contamination_estimator}), and experiments in Figures~\ref{fig:IF}-\ref{fig:cleaning}.
    \item[] Guidelines:
    \begin{itemize}
        \item The answer NA means that the abstract and introduction do not include the claims made in the paper.
        \item The abstract and/or introduction should clearly state the claims made, including the contributions made in the paper and important assumptions and limitations. A No or NA answer to this question will not be perceived well by the reviewers. 
        \item The claims made should match theoretical and experimental results, and reflect how much the results can be expected to generalize to other settings. 
        \item It is fine to include aspirational goals as motivation as long as it is clear that these goals are not attained by the paper. 
    \end{itemize}

\item {\bf Limitations}
    \item[] Question: Does the paper discuss the limitations of the work performed by the authors?
    \item[] Answer: \answerYes{} 
    \item[] Justification: See Conclusion and Limitations (Section~\ref{sec:conclusion}).
    \item[] Guidelines:
    \begin{itemize}
        \item The answer NA means that the paper has no limitation while the answer No means that the paper has limitations, but those are not discussed in the paper. 
        \item The authors are encouraged to create a separate "Limitations" section in their paper.
        \item The paper should point out any strong assumptions and how robust the results are to violations of these assumptions (e.g., independence assumptions, noiseless settings, model well-specification, asymptotic approximations only holding locally). The authors should reflect on how these assumptions might be violated in practice and what the implications would be.
        \item The authors should reflect on the scope of the claims made, e.g., if the approach was only tested on a few datasets or with a few runs. In general, empirical results often depend on implicit assumptions, which should be articulated.
        \item The authors should reflect on the factors that influence the performance of the approach. For example, a facial recognition algorithm may perform poorly when image resolution is low or images are taken in low lighting. Or a speech-to-text system might not be used reliably to provide closed captions for online lectures because it fails to handle technical jargon.
        \item The authors should discuss the computational efficiency of the proposed algorithms and how they scale with dataset size.
        \item If applicable, the authors should discuss possible limitations of their approach to address problems of privacy and fairness.
        \item While the authors might fear that complete honesty about limitations might be used by reviewers as grounds for rejection, a worse outcome might be that reviewers discover limitations that aren't acknowledged in the paper. The authors should use their best judgment and recognize that individual actions in favor of transparency play an important role in developing norms that preserve the integrity of the community. Reviewers will be specifically instructed to not penalize honesty concerning limitations.
    \end{itemize}

\item {\bf Theory assumptions and proofs}
    \item[] Question: For each theoretical result, does the paper provide the full set of assumptions and a complete (and correct) proof?
    \item[] Answer: \answerYes{} 
    \item[] Justification: We explicitly list all assumptions and the proofs of all theoretical claims are provided in the Appendix.
    \item[] Guidelines:
    \begin{itemize}
        \item The answer NA means that the paper does not include theoretical results. 
        \item All the theorems, formulas, and proofs in the paper should be numbered and cross-referenced.
        \item All assumptions should be clearly stated or referenced in the statement of any theorems.
        \item The proofs can either appear in the main paper or the supplemental material, but if they appear in the supplemental material, the authors are encouraged to provide a short proof sketch to provide intuition. 
        \item Inversely, any informal proof provided in the core of the paper should be complemented by formal proofs provided in appendix or supplemental material.
        \item Theorems and Lemmas that the proof relies upon should be properly referenced. 
    \end{itemize}

    \item {\bf Experimental result reproducibility}
    \item[] Question: Does the paper fully disclose all the information needed to reproduce the main experimental results of the paper to the extent that it affects the main claims and/or conclusions of the paper (regardless of whether the code and data are provided or not)?
    \item[] Answer: \answerYes{} 
    \item[] Justification: The generic procedure is explained in the Section~\ref{sec:experiments}, and further details are provided in the Appendix~\ref{app:experimental_details}.
    \item[] Guidelines:
    \begin{itemize}
        \item The answer NA means that the paper does not include experiments.
        \item If the paper includes experiments, a No answer to this question will not be perceived well by the reviewers: Making the paper reproducible is important, regardless of whether the code and data are provided or not.
        \item If the contribution is a dataset and/or model, the authors should describe the steps taken to make their results reproducible or verifiable. 
        \item Depending on the contribution, reproducibility can be accomplished in various ways. For example, if the contribution is a novel architecture, describing the architecture fully might suffice, or if the contribution is a specific model and empirical evaluation, it may be necessary to either make it possible for others to replicate the model with the same dataset, or provide access to the model. In general. releasing code and data is often one good way to accomplish this, but reproducibility can also be provided via detailed instructions for how to replicate the results, access to a hosted model (e.g., in the case of a large language model), releasing of a model checkpoint, or other means that are appropriate to the research performed.
        \item While NeurIPS does not require releasing code, the conference does require all submissions to provide some reasonable avenue for reproducibility, which may depend on the nature of the contribution. For example
        \begin{enumerate}
            \item If the contribution is primarily a new algorithm, the paper should make it clear how to reproduce that algorithm.
            \item If the contribution is primarily a new model architecture, the paper should describe the architecture clearly and fully.
            \item If the contribution is a new model (e.g., a large language model), then there should either be a way to access this model for reproducing the results or a way to reproduce the model (e.g., with an open-source dataset or instructions for how to construct the dataset).
            \item We recognize that reproducibility may be tricky in some cases, in which case authors are welcome to describe the particular way they provide for reproducibility. In the case of closed-source models, it may be that access to the model is limited in some way (e.g., to registered users), but it should be possible for other researchers to have some path to reproducing or verifying the results.
        \end{enumerate}
    \end{itemize}

\item {\bf Open access to data and code}
    \item[] Question: Does the paper provide open access to the data and code, with sufficient instructions to faithfully reproduce the main experimental results, as described in supplemental material?
    \item[] Answer: \answerYes{} 
    \item[] Justification: We provided the code on GitHub \url{https://github.com/ma921/HolderDPO}. All data we used are already open-sourced.
    \item[] Guidelines:
    \begin{itemize}
        \item The answer NA means that paper does not include experiments requiring code.
        \item Please see the NeurIPS code and data submission guidelines (\url{https://nips.cc/public/guides/CodeSubmissionPolicy}) for more details.
        \item While we encourage the release of code and data, we understand that this might not be possible, so “No” is an acceptable answer. Papers cannot be rejected simply for not including code, unless this is central to the contribution (e.g., for a new open-source benchmark).
        \item The instructions should contain the exact command and environment needed to run to reproduce the results. See the NeurIPS code and data submission guidelines (\url{https://nips.cc/public/guides/CodeSubmissionPolicy}) for more details.
        \item The authors should provide instructions on data access and preparation, including how to access the raw data, preprocessed data, intermediate data, and generated data, etc.
        \item The authors should provide scripts to reproduce all experimental results for the new proposed method and baselines. If only a subset of experiments are reproducible, they should state which ones are omitted from the script and why.
        \item At submission time, to preserve anonymity, the authors should release anonymized versions (if applicable).
        \item Providing as much information as possible in supplemental material (appended to the paper) is recommended, but including URLs to data and code is permitted.
    \end{itemize}

\item {\bf Experimental setting/details}
    \item[] Question: Does the paper specify all the training and test details (e.g., data splits, hyperparameters, how they were chosen, type of optimizer, etc.) necessary to understand the results?
    \item[] Answer: \answerYes 
    \item[] Justification: We provide the detals, a list of hyperparameters, and training detals in Appendix~\ref{app:experimental_details}. 
    \item[] Guidelines:
    \begin{itemize}
        \item The answer NA means that the paper does not include experiments.
        \item The experimental setting should be presented in the core of the paper to a level of detail that is necessary to appreciate the results and make sense of them.
        \item The full details can be provided either with the code, in appendix, or as supplemental material.
    \end{itemize}

\item {\bf Experiment statistical significance}
    \item[] Question: Does the paper report error bars suitably and correctly defined or other appropriate information about the statistical significance of the experiments?
    \item[] Answer: \answerYes{} 
    \item[] Justification: Figures~\ref{fig:imdb} and \ref{fig:goldenhh} report the standard deviation of 10 run with different random seeds. 
    \item[] Guidelines:
    \begin{itemize}
        \item The answer NA means that the paper does not include experiments.
        \item The authors should answer "Yes" if the results are accompanied by error bars, confidence intervals, or statistical significance tests, at least for the experiments that support the main claims of the paper.
        \item The factors of variability that the error bars are capturing should be clearly stated (for example, train/test split, initialization, random drawing of some parameter, or overall run with given experimental conditions).
        \item The method for calculating the error bars should be explained (closed form formula, call to a library function, bootstrap, etc.)
        \item The assumptions made should be given (e.g., Normally distributed errors).
        \item It should be clear whether the error bar is the standard deviation or the standard error of the mean.
        \item It is OK to report 1-sigma error bars, but one should state it. The authors should preferably report a 2-sigma error bar than state that they have a 96\% CI, if the hypothesis of Normality of errors is not verified.
        \item For asymmetric distributions, the authors should be careful not to show in tables or figures symmetric error bars that would yield results that are out of range (e.g. negative error rates).
        \item If error bars are reported in tables or plots, The authors should explain in the text how they were calculated and reference the corresponding figures or tables in the text.
    \end{itemize}

\item {\bf Experiments compute resources}
    \item[] Question: For each experiment, does the paper provide sufficient information on the computer resources (type of compute workers, memory, time of execution) needed to reproduce the experiments?
    \item[] Answer: \answerYes{} 
    \item[] Justification: In Section~\ref{sec:experiments}, we provide the information on the computer resources (single A100 GPU 40GB).
    \item[] Guidelines:
    \begin{itemize}
        \item The answer NA means that the paper does not include experiments.
        \item The paper should indicate the type of compute workers CPU or GPU, internal cluster, or cloud provider, including relevant memory and storage.
        \item The paper should provide the amount of compute required for each of the individual experimental runs as well as estimate the total compute. 
        \item The paper should disclose whether the full research project required more compute than the experiments reported in the paper (e.g., preliminary or failed experiments that didn't make it into the paper). 
    \end{itemize}
    
\item {\bf Code of ethics}
    \item[] Question: Does the research conducted in the paper conform, in every respect, with the NeurIPS Code of Ethics \url{https://neurips.cc/public/EthicsGuidelines}?
    \item[] Answer: \answerYes{} 
    \item[] Justification: Our work focuses solely on advancing machine learning algorithms and uses only publicly available datasets from peer-reviewed publications.
    \item[] Guidelines:
    \begin{itemize}
        \item The answer NA means that the authors have not reviewed the NeurIPS Code of Ethics.
        \item If the authors answer No, they should explain the special circumstances that require a deviation from the Code of Ethics.
        \item The authors should make sure to preserve anonymity (e.g., if there is a special consideration due to laws or regulations in their jurisdiction).
    \end{itemize}

\item {\bf Broader impacts}
    \item[] Question: Does the paper discuss both potential positive societal impacts and negative societal impacts of the work performed?
    \item[] Answer: \answerYes{} 
    \item[] Justification: Impacts are stated in Appendix~\ref{app:broader_impact}. There are many potential societal consequences of our work, none which we feel must be specifically highlighted here.
    \item[] Guidelines:
    \begin{itemize}
        \item The answer NA means that there is no societal impact of the work performed.
        \item If the authors answer NA or No, they should explain why their work has no societal impact or why the paper does not address societal impact.
        \item Examples of negative societal impacts include potential malicious or unintended uses (e.g., disinformation, generating fake profiles, surveillance), fairness considerations (e.g., deployment of technologies that could make decisions that unfairly impact specific groups), privacy considerations, and security considerations.
        \item The conference expects that many papers will be foundational research and not tied to particular applications, let alone deployments. However, if there is a direct path to any negative applications, the authors should point it out. For example, it is legitimate to point out that an improvement in the quality of generative models could be used to generate deepfakes for disinformation. On the other hand, it is not needed to point out that a generic algorithm for optimizing neural networks could enable people to train models that generate Deepfakes faster.
        \item The authors should consider possible harms that could arise when the technology is being used as intended and functioning correctly, harms that could arise when the technology is being used as intended but gives incorrect results, and harms following from (intentional or unintentional) misuse of the technology.
        \item If there are negative societal impacts, the authors could also discuss possible mitigation strategies (e.g., gated release of models, providing defenses in addition to attacks, mechanisms for monitoring misuse, mechanisms to monitor how a system learns from feedback over time, improving the efficiency and accessibility of ML).
    \end{itemize}
    
\item {\bf Safeguards}
    \item[] Question: Does the paper describe safeguards that have been put in place for responsible release of data or models that have a high risk for misuse (e.g., pretrained language models, image generators, or scraped datasets)?
    \item[] Answer: \answerNo{} 
    \item[] Justification: We release the code on anonymised GitHub but not the model nor data.
    \item[] Guidelines:
    \begin{itemize}
        \item The answer NA means that the paper poses no such risks.
        \item Released models that have a high risk for misuse or dual-use should be released with necessary safeguards to allow for controlled use of the model, for example by requiring that users adhere to usage guidelines or restrictions to access the model or implementing safety filters. 
        \item Datasets that have been scraped from the Internet could pose safety risks. The authors should describe how they avoided releasing unsafe images.
        \item We recognize that providing effective safeguards is challenging, and many papers do not require this, but we encourage authors to take this into account and make a best faith effort.
    \end{itemize}

\item {\bf Licenses for existing assets}
    \item[] Question: Are the creators or original owners of assets (e.g., code, data, models), used in the paper, properly credited and are the license and terms of use explicitly mentioned and properly respected?
    \item[] Answer: \answerYes{} 
    \item[] Justification: We explained and cited the library we used in Section~\ref{sec:experiments}.
    \item[] Guidelines:
    \begin{itemize}
        \item The answer NA means that the paper does not use existing assets.
        \item The authors should cite the original paper that produced the code package or dataset.
        \item The authors should state which version of the asset is used and, if possible, include a URL.
        \item The name of the license (e.g., CC-BY 4.0) should be included for each asset.
        \item For scraped data from a particular source (e.g., website), the copyright and terms of service of that source should be provided.
        \item If assets are released, the license, copyright information, and terms of use in the package should be provided. For popular datasets, \url{paperswithcode.com/datasets} has curated licenses for some datasets. Their licensing guide can help determine the license of a dataset.
        \item For existing datasets that are re-packaged, both the original license and the license of the derived asset (if it has changed) should be provided.
        \item If this information is not available online, the authors are encouraged to reach out to the asset's creators.
    \end{itemize}

\item {\bf New assets}
    \item[] Question: Are new assets introduced in the paper well documented and is the documentation provided alongside the assets?
    \item[] Answer: \answerYes{} 
    \item[] Justification: The code and new tasks are provided on anonymised GitHub with explanation alognside with its implementation.
    \item[] Guidelines:
    \begin{itemize}
        \item The answer NA means that the paper does not release new assets.
        \item Researchers should communicate the details of the dataset/code/model as part of their submissions via structured templates. This includes details about training, license, limitations, etc. 
        \item The paper should discuss whether and how consent was obtained from people whose asset is used.
        \item At submission time, remember to anonymize your assets (if applicable). You can either create an anonymized URL or include an anonymized zip file.
    \end{itemize}

\item {\bf Crowdsourcing and research with human subjects}
    \item[] Question: For crowdsourcing experiments and research with human subjects, does the paper include the full text of instructions given to participants and screenshots, if applicable, as well as details about compensation (if any)? 
    \item[] Answer: \answerNA{} 
    \item[] Justification: The paper does not involve crowdsourcing nor research with human subjects.
    \item[] Guidelines:
    \begin{itemize}
        \item The answer NA means that the paper does not involve crowdsourcing nor research with human subjects.
        \item Including this information in the supplemental material is fine, but if the main contribution of the paper involves human subjects, then as much detail as possible should be included in the main paper. 
        \item According to the NeurIPS Code of Ethics, workers involved in data collection, curation, or other labor should be paid at least the minimum wage in the country of the data collector. 
    \end{itemize}

\item {\bf Institutional review board (IRB) approvals or equivalent for research with human subjects}
    \item[] Question: Does the paper describe potential risks incurred by study participants, whether such risks were disclosed to the subjects, and whether Institutional Review Board (IRB) approvals (or an equivalent approval/review based on the requirements of your country or institution) were obtained?
    \item[] Answer: \answerNA{} 
    \item[] Justification: The paper does not involve crowdsourcing nor research with human subjects. We used the publicly available dataset with human subjects, but it is already reviewed by the journal board (PNAS) and the authors of the paper.
    \item[] Guidelines:
    \begin{itemize}
        \item The answer NA means that the paper does not involve crowdsourcing nor research with human subjects.
        \item Depending on the country in which research is conducted, IRB approval (or equivalent) may be required for any human subjects research. If you obtained IRB approval, you should clearly state this in the paper. 
        \item We recognize that the procedures for this may vary significantly between institutions and locations, and we expect authors to adhere to the NeurIPS Code of Ethics and the guidelines for their institution. 
        \item For initial submissions, do not include any information that would break anonymity (if applicable), such as the institution conducting the review.
    \end{itemize}

\item {\bf Declaration of LLM usage}
    \item[] Question: Does the paper describe the usage of LLMs if it is an important, original, or non-standard component of the core methods in this research? Note that if the LLM is used only for writing, editing, or formatting purposes and does not impact the core methodology, scientific rigorousness, or originality of the research, declaration is not required.
    \item[] Answer: \answerNA{} 
    \item[] Justification: We derived theories and algorithms by pen-and-paper. Experiments uses the LLM as the model to train, evaluate, but do not use them beyond the experimental purpose.
    \item[] Guidelines:
    \begin{itemize}
        \item The answer NA means that the core method development in this research does not involve LLMs as any important, original, or non-standard components.
        \item Please refer to our LLM policy (\url{https://neurips.cc/Conferences/2025/LLM}) for what should or should not be described.
    \end{itemize}

\end{enumerate}

\appendix
\newpage
\addcontentsline{toc}{section}{Appendix} 
\part{Appendix} 
\parttoc 

\section{Broader impact statement}
\label{app:broader_impact}
Robust model alignment and reliable human feedback valuation are critical for the safe deployment of large language models (LLMs). While our work may have various societal implications, we do not identify any specific risks that require immediate attention.

Conceptually, our definition of ``robustness'' aligns with a utilitarian perspective: we assume that the majority opinion in the training dataset represents the true target distribution, and that minority opinions deviating from this majority are noise to be filtered out. This assumption is often reasonable in applications like toxicity removal or instruction following, where the normative standard tends to be implicitly shared by the majority.

However, we acknowledge that this framework does not universally apply. In contexts such as opinion formation or deliberative dialogue, disregarding minority voices is inappropriate and potentially harmful. In such cases, distributionally robust methods (e.g., \citep{ramesh2024group, chidambaram2024direct}) that account for underrepresented groups are more suitable. Fundamentally, these challenges relate to the broader problem of aggregating heterogeneous preferences into a single model—a problem well-known in social choice theory. According to the impossibility theorem \citep{arrow1950difficulty}, no aggregation rule (or "social welfare functional") can simultaneously satisfy all desirable rationality axioms. As such, there is no universally optimal solution, and algorithmic choices must be guided by task-specific priorities (see, e.g., \citep{adachi2025bayesian}).

\section{Additional Discussion of Related Work}

\paragraph{Theoretical Role of Influence Functions.}
While influence functions (IFs) have been widely used for dataset valuation and model interpretation~\citep{grosse2023studying, kwon2024datainf}, their use in deriving \emph{robustness guarantees} remains less explored. Classical works in robust statistics~\citep{huber11, hampel11} provide foundational tools for analyzing model behavior under infinitesimal contamination. Our work draws a conceptual bridge between these classical formulations and the practical challenges in LLM alignment, utilizing IFs to derive sufficient conditions for redescending robustness and contamination detection.

\paragraph{Limitations of Prior Robust DPOs.}
Recent works propose various robust DPO formulations, including DRO-based~\citep{wu24}, noise-aware~\citep{chowdhury24}, and filtering-enhanced methods~\citep{liang24}. However, many of these approaches either break the connection to reward learning or rely on strong assumptions or additional supervision. Our Hölder-DPO maintains a clean theoretical formulation with a robustness guarantee, while being computationally efficient and easy to implement.

\paragraph{Empirical Noise Levels and Filtering Limitations.}
Empirical studies report that preference datasets often contain 20–40\% noise~\citep{gao2024impact, lee23}, with performance degrading sharply under modest increases in noise. Common mitigation strategies, such as regularization~\citep{gao2024impact} and teacher-based filtering~\citep{gao2024impact, casper23}, suffer from limited generalization and inefficiency against symmetric noise. Our approach requires neither external LLMs nor manual heuristics, offering a lightweight alternative grounded in divergence-based theory.

\paragraph{Broader Applicability.}
Our framework naturally extends to settings like:
\begin{itemize}
    \item \textbf{Group-specific Alignment:} Supporting heterogeneous user preferences~\citep{ramesh2024group, chakraborty24b}.
    \item \textbf{Personalized Objectives:} Aligning LLMs to individual user intents~\citep{poddar2024personalizing}.
    \item \textbf{DPO with Divergence Constraints:} Leveraging $f$-divergences to balance alignment and diversity~\citep{wang2024beyond}.
\end{itemize}

\section{Influence Function to Measure the Impact of the data contamination for DPO}\label{app:basics_IF}
Given the application of DPO and under the first-order optimality conditions, the model alignment results for $\dataclean$ and $\datacontami$ are expressed as:
\begin{align}\label{eq:opt_param_dpo}
\theta^{*} = \argmin_{\theta} \mathbb{E}_{\dataclean}[-\log \sigma(g_{\theta}(s))] 
\quad \textrm{and} \quad
\theta^{*}(\epsilon) = \argmin_{\theta} \mathbb{E}_{\datacontami}[-\log \sigma(g_{\theta}(\tilde{s}))].
\end{align}
From our definition of contamination data in Section~\ref{subsec:prob_setting}, its influence on model alignment can be quantified by measuring the deviation in the optimized parameters, given by $\|\theta^{*}(\epsilon)- \theta^{*}\|$.
To evaluate this quantity, we apply a Taylor expansion with respect to $\epsilon$ around $\epsilon=0$ for $\theta^{*}(\epsilon)$, yielding:
\begin{align*}
    \theta^{*}(\epsilon)
    = \theta^{*} + \epsilon \cdot \frac{\partial\theta^{*}(\epsilon)}{\partial\epsilon} \bigg|_{\epsilon=0} + \mathcal{O}(\epsilon^{2}) 
    \Rightarrow 
    \theta^{*}(\epsilon) - \theta^{*}
    = \epsilon \cdot \frac{\partial\theta^{*}(\epsilon)}{\partial\epsilon}\bigg|_{\epsilon=0} + \mathcal{O}(\epsilon^{2}).
\end{align*}
From this result, we can approximately evaluate the influence of $s_\textrm{flip}$ as follows:
\begin{align}
    \mathrm{IF}(z, \theta, p_{\mathcal{D}})
    \coloneqq  \frac{\partial\theta^{*}(\epsilon)}{\partial\epsilon}\bigg|_{\epsilon=0},
\end{align}
which is called as the \emph{influence function} (IF) in the robust statistic context~\cite{hampel11}.
In the next section, we provide a detailed discussion on the robustness of DPO to contamination data through an analysis of this IF.

\section{Computation of H\"{o}lder-DPO Objective}
\label{app:detail_our_obj}
Here, we describe the computation of the H\"{o}lder-DPO objective introduced in Eq.~\eqref{eq:obj_holder_dpo}.
Given the dataset $\mathcal{D} = \{s^{(i)}\}_{i=1}^{N}$ with $s^{(i)} \sim p_{\mathcal{D}}$, our optimization objective is:
\begin{align*}
    \argmin_{\theta} S_{\gamma}(p_{\mathcal{D}} \| \sigma(g_\theta))
    \Rightarrow \argmin_{\theta} 
    \phi\left( \frac{\mathbb{E}_{p_{\mathcal{D}}}[\sigma(g_\theta(s))^\gamma]}{\int \sigma(g_\theta(s))^{1+\gamma} \mathrm{d}s} \right)
    \cdot \left( \int \sigma(g_\theta(s))^{1+\gamma} \mathrm{d}s \right).
\end{align*}

The numerator $\mathbb{E}_{p_{\mathcal{D}}}[\sigma(g_\theta(s))^\gamma]$ is directly estimated from the empirical distribution.
However, the integral in the denominator is generally intractable. To address this, we approximate it using the empirical measure
$\mathrm{d}\hat{g}(s) \coloneqq \frac{1}{N} \sum_{i=1}^{N} \delta(s - s^{(i)})$, yielding:
\begin{align*}
    \int \sigma(g_\theta(s))^{1+\gamma} \mathrm{d}\hat{g}(s)
    = \frac{1}{N} \sum_{i=1}^{N} \sigma(g_\theta(s^{(i)}))^{1+\gamma},
\end{align*}
where $\delta$ is the Dirac's delta function.

Thus, our estimator can be expressed as:
\begin{align*}
    \widehat{S}_{\gamma}(p_{\mathcal{D}} \| \sigma(g_\theta)) =
    \phi\left( \frac{\frac{1}{N} \sum_{i=1}^{N} \sigma(g_\theta(s^{(i)}))^\gamma}
                {\frac{1}{N} \sum_{i=1}^{N} \sigma(g_\theta(s^{(i)}))^{1+\gamma}} \right)
    \cdot \left( \frac{1}{N} \sum_{i=1}^{N} \sigma(g_\theta(s^{(i)}))^{1+\gamma} \right).
\end{align*}

For the special case where $\phi(h) = \gamma - (1+\gamma)h$, corresponding to the scaled density power divergence,
the objective simplifies to:
\begin{align*}
    \widehat{S}_{\mathrm{DP}}(p_{\mathcal{D}} \| \sigma(g_\theta)) =
    -\frac{(1+\gamma)}{N} \sum_{i=1}^{N} \sigma(g_\theta(s^{(i)}))^\gamma +
    \frac{\gamma}{N} \sum_{i=1}^{N} \sigma(g_\theta(s^{(i)}))^{1+\gamma}.
\end{align*}

An alternative approach for constructing a more precise estimator is to employ importance sampling. However, in the context of LLM fine-tuning, selecting an appropriate proposal distribution is nontrivial. While this issue lies beyond the scope of the present work, it constitutes an important direction for future research.
Importantly, our empirical results confirm that this objective still retains the key robustness properties of DP divergence: it remains resistant to label-flipped dataset and enables accurate estimation of the contamination ratio in practice (see Section~\ref{sec:experiments}).

\section{Proofs}
\label{app:proof}

\subsection{Proof for Theorem~\ref{thm:IF_param} (The Case of DPO)}\label{app:proof_IF_DPO}
\begin{theorem}[IF for DPO]\label{thm:IF_param_dpo}
Suppose $\theta^*$ denotes the optimal parameters learned from clean dataset $p_\mathcal{D}$, and $\theta^{*}(\epsilon)$ denotes those learned from $\epsilon$-contaminated dataset $\datacontamidens$. 
Assume that the Hessian $\nabla_{\theta}^{2}\mathcal{L}(s, \pi_{\theta})|_{\theta = \theta^{*}}$ is positive definite.
Then, the $s_\textrm{flip}$-dependent component of the IF for DPO is given by:
\begin{align}
    \label{eq:IF_param_eps_result}
        \mathrm{IF}_\mathrm{DPO}(x, \theta, p_{\mathcal{D}})
        \propto  \mathbb{E}_{\dataflip}[\nabla_{\theta} \mathcal{L}(s_\textrm{flip}, \pi_{\theta^{*}})],
\end{align}
where $\nabla_{\theta} \mathcal{L}(s_\textrm{flip}, \pi_{\theta^{*}})$ is in Eq.~\eqref{eq:grad_dpo_pointwise}. 
\end{theorem}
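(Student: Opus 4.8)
The plan is to treat $\theta^*(\epsilon)$ as an M-estimator defined implicitly by its first-order stationarity condition and then differentiate that condition at $\epsilon = 0$ via the implicit function theorem. Since $\theta^*(\epsilon)$ minimizes $\mathbb{E}_{\datacontamidens}[-\log\sigma(g_\theta(\tilde{s}))]$, it satisfies the estimating equation
\begin{align*}
    (1-\epsilon)\,\mathbb{E}_{p_\mathcal{D}}[\nabla_\theta\mathcal{L}(s,\pi_{\theta^*(\epsilon)})]
    + \epsilon\,\mathbb{E}_{\dataflip}[\nabla_\theta\mathcal{L}(s_\textrm{flip},\pi_{\theta^*(\epsilon)})] = 0,
\end{align*}
where I substitute the mixture form $\datacontamidens = (1-\epsilon)p_\mathcal{D} + \epsilon\,\dataflip$ from Definition~\ref{def:contami_model} and take $\mathcal{L}$ to be the DPO loss of Eq.~\eqref{eq:obj_dpo}. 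Denote the left-hand side by $G(\epsilon,\theta)$.

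First I would observe that $G(0,\theta^*) = 0$ reduces to the clean-data optimality condition $\mathbb{E}_{p_\mathcal{D}}[\nabla_\theta\mathcal{L}(s,\pi_{\theta^*})] = 0$, which holds by definition of $\theta^*$. Differentiating the identity $G(\epsilon,\theta^*(\epsilon)) = 0$ totally in $\epsilon$ gives
\begin{align*}
    \frac{\partial G}{\partial\epsilon} + \frac{\partial G}{\partial\theta}\,\frac{\partial\theta^*(\epsilon)}{\partial\epsilon} = 0,
\end{align*}
so that $\frac{\partial\theta^*(\epsilon)}{\partial\epsilon} = -\left(\frac{\partial G}{\partial\theta}\right)^{-1}\frac{\partial G}{\partial\epsilon}$. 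This is exactly where the positive-definiteness hypothesis enters: evaluating $\frac{\partial G}{\partial\theta}$ at $(\epsilon,\theta) = (0,\theta^*)$ yields the Hessian $H \coloneqq \mathbb{E}_{p_\mathcal{D}}[\nabla_\theta^2\mathcal{L}(s,\pi_{\theta^*})]$, which is invertible precisely under the stated assumption.

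Next I would compute $\frac{\partial G}{\partial\epsilon}$ at $(0,\theta^*)$. Differentiating $G$ in $\epsilon$ produces $-\mathbb{E}_{p_\mathcal{D}}[\nabla_\theta\mathcal{L}(s,\pi_\theta)] + \mathbb{E}_{\dataflip}[\nabla_\theta\mathcal{L}(s_\textrm{flip},\pi_\theta)]$, and the first term vanishes at $\theta = \theta^*$ by clean-data optimality, leaving $\mathbb{E}_{\dataflip}[\nabla_\theta\mathcal{L}(s_\textrm{flip},\pi_{\theta^*})]$. Assembling the pieces,
\begin{align*}
    \mathrm{IF}_\mathrm{DPO}(x,\theta,p_\mathcal{D}) = \frac{\partial\theta^*(\epsilon)}{\partial\epsilon}\bigg|_{\epsilon=0}
    = -\,H^{-1}\,\mathbb{E}_{\dataflip}[\nabla_\theta\mathcal{L}(s_\textrm{flip},\pi_{\theta^*})].
\end{align*}
Because $H^{-1}$ does not depend on $s_\textrm{flip}$, the $s_\textrm{flip}$-dependent component is $\mathbb{E}_{\dataflip}[\nabla_\theta\mathcal{L}(s_\textrm{flip},\pi_{\theta^*})]$; substituting the pointwise DPO gradient of Eq.~\eqref{eq:grad_dpo_pointwise} then recovers Eq.~\eqref{eq:IF_param_eps_result} exactly.

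The only genuine subtlety — and the step I would treat most carefully — is justifying the smooth dependence of $\theta^*(\epsilon)$ on $\epsilon$ and the interchange of differentiation with expectation that the implicit function theorem requires. Given the nonsingular (positive-definite) Hessian together with the smoothness of $\sigma$ and $g_\theta$, both follow from the standard implicit function theorem, so this is really bookkeeping rather than a mathematical obstacle. I note finally that nothing in the argument uses the specific form of the DPO loss beyond the stationarity condition and Hessian invertibility, which is why the identical derivation yields the general statement of Theorem~\ref{thm:IF_param} with $\mathcal{L}$ replaced by $\mathcal{L}_\mathrm{gen}$.
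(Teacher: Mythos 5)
Your proposal is correct and follows essentially the same route as the paper: both derive the IF from the first-order stationarity condition of the M-estimator $\theta^*(\epsilon)$, differentiate it at $\epsilon=0$ using $\partial_\epsilon\datacontamidens = \dataflip - \datacleandens$, kill the clean-data term by optimality of $\theta^*$, and invert the positive-definite Hessian to obtain $\mathrm{IF} = -H^{-1}\mathbb{E}_{\dataflip}[\nabla_\theta\mathcal{L}(s_\textrm{flip},\pi_{\theta^*})]$. The only cosmetic difference is that you package the computation via the implicit function theorem applied to the explicit mixture decomposition, whereas the paper differentiates the expectation over the contaminated density directly; the resulting expressions coincide.
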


\begin{proof}
    The gradient of Eq.~\eqref{eq:obj_dpo} under $\datacontamidens$ is given by
    \begin{align*}
        \nabla_{\theta} \widetilde{\mathcal{L}}(\tilde{s},\pi_{\theta})
        = -\beta \mathbb{E}_{\datacontamidens} \bigg[ \sigma(-g_{\theta}(\tilde{s})) \bigg( \nabla_{\theta} \log \pi_{\theta}(\tilde{y}_{\textrm{win}} \mid \tilde{x}) - \nabla_{\theta} \log \pi_{\theta}(\tilde{y}_{\textrm{lose}} \mid \tilde{x})  \bigg) \bigg],
    \end{align*}
    where $\tilde{s} = \{ \tilde{x},\tilde{y}_{\textrm{win}},\tilde{y}_{\textrm{lose}}\} \sim \datacontamidens$.

    From the definition of $\theta^{*}(\epsilon)$, we have $0 = \nabla_{\theta} \widetilde{\mathcal{L}}(\tilde{s},\pi_{\theta}) |_{\theta = \theta^{*}(\epsilon)}$.
    By taking the derivation of this term w.r.t.~$\epsilon$, we obtain
    \begin{align}\label{eq:grad_eps}
        0 &= \frac{\partial}{\partial \epsilon} \nabla_{\theta} \widetilde{\mathcal{L}}(\tilde{s},\pi_{\theta}) \bigg|_{\theta = \theta^{*}(\epsilon)} \notag \\
        &=-\beta \frac{\partial}{\partial \epsilon}\mathbb{E}_{\datacontamidens} \bigg[ \underbrace{\sigma(-g_{\theta^{*}(\epsilon)}(\tilde{s})) \bigg( \nabla_{\theta} \log \pi_{\theta^{*}(\epsilon)}(\tilde{y}_{\textrm{win}} \mid \tilde{x}) - \nabla_{\theta} \log \pi_{\theta^{*}(\epsilon)}(\tilde{y}_{\textrm{lose}} \mid \tilde{x})  \bigg)}_{\eqqcolon F_{\theta^{*}(\epsilon)}(\tilde{s})} \bigg] \notag \\
        &= -\beta \bigg\{ \int \bigg\{ \frac{\partial}{\partial \epsilon} \datacontami  \bigg\} F_{\theta^{*}(\epsilon)}(\tilde{s}) \mathrm{d}\tilde{s} + \mathbb{E}_{\datacontamidens}\bigg[\frac{\partial}{\partial \epsilon} F_{\theta^{*}(\epsilon)}(\tilde{s})\bigg] \bigg\} \notag \\
        &= -\beta \bigg\{ \int \bigg\{ \frac{\partial}{\partial \epsilon} \datacontami  \bigg\} F_{\theta^{*}(\epsilon)}(\tilde{s}) \mathrm{d}\tilde{s} + \mathbb{E}_{\datacontamidens}\bigg[\frac{\partial \theta^{*}(\epsilon)}{\partial \epsilon} \frac{\partial F_{\theta^{*}(\epsilon)}(\tilde{s})}{\partial \theta^{*}(\epsilon)} \bigg] \bigg\} \notag \\
        &= -\beta \bigg\{ \int \bigg\{ \frac{\partial}{\partial \epsilon} \datacontami  \bigg\} F_{\theta^{*}(\epsilon)}(\tilde{s}) \mathrm{d}\tilde{s} + \mathbb{E}_{\datacontamidens}\bigg[\frac{\partial \theta^{*}(\epsilon)}{\partial \epsilon} H_{\theta^{*}(\epsilon)}(\tilde{s}) \bigg] \bigg\},
    \end{align}
    where $H_{\theta^{*}(\epsilon)}(\tilde{s}) \coloneqq \frac{\partial F_{\theta^{*}(\epsilon)}(\tilde{s})}{\partial \theta^{*}(\epsilon)}$.

    From the definition of $\datacontami$, we obtain
    \begin{align*}
        \int \bigg\{ \frac{\partial}{\partial \epsilon} \datacontami  \bigg\} F_{\theta^{*}(\epsilon)}(\tilde{s}) \mathrm{d}\tilde{s}
        = \mathbb{E}_{\dataflip} \bigg[F_{\theta^{*}(\epsilon)}(s_\textrm{flip})\bigg] - \mathbb{E}_{p_{\mathcal{D}}} \bigg[ F_{\theta^{*}(\epsilon)}(s) \bigg],
    \end{align*}
    since $\frac{\partial}{\partial \epsilon} \datacontami = \dataflip - \dataclean$, where $F_{\theta^{*}}(s_\textrm{flip}) \coloneqq \sigma(-g_{\theta^{*}}(s_\textrm{flip})) ( \nabla_{\theta} \log \pi_{\theta^{*}}(y_{\textrm{win}}^{\textrm{flip}} \mid z) - \nabla_{\theta} \log \pi_{\theta^{*}}(y_{\textrm{lose}}^{\textrm{flip}} \mid z))$.
    By taking $\epsilon \rightarrow 0$, we have 
    \begin{align*}
        \bigg(\int \bigg\{ \frac{\partial}{\partial \epsilon} \datacontami  \bigg\} F_{\theta^{*}(\epsilon)}(\tilde{s}) \mathrm{d}\tilde{s} \bigg) \bigg|_{\epsilon=0}
        = \mathbb{E}_{\dataflip} \bigg[ F_{\theta^{*}}(s_\textrm{flip}) \bigg],
    \end{align*}
    since $\theta^{(*)}(\epsilon) \rightarrow \theta^{(*)}$ and thus $\mathbb{E}_{p_{\mathcal{D}}} [ F_{\theta^{*}}(s) ] = \nabla_{\theta} \mathcal{L}(\pi_{\theta}; \pi_{\mathrm{ref}}) |_{\theta = \theta^{*}} = 0$ from the first-order optimal condition in Eq.~\eqref{eq:opt_param_dpo}.

    Furthermore, we also obtain
    \begin{align*}
    \mathbb{E}_{\datacontamidens}\bigg[\frac{\partial \theta^{*}(\epsilon)}{\partial \epsilon} H_{\theta^{*}(\epsilon)}(\tilde{s}) \bigg] \bigg|_{\epsilon=0}
    = \mathbb{E}_{p_{\mathcal{D}}}\bigg[\frac{\partial \theta^{*}(\epsilon)}{\partial \epsilon} H_{\theta^{*}}(s) \bigg],
    \end{align*}
    where $H_{\theta^{*}}(s) \coloneqq \frac{\partial F_{\theta^{*}}(s)}{\partial \theta^{*}}$.
    
    Then, Eq.~\eqref{eq:grad_eps} under $\epsilon \rightarrow 0$ can be rewritten as
    \begin{align*}
        0 = \bigg(\frac{\partial}{\partial \epsilon} \nabla_{\theta} \widetilde{\mathcal{L}}(\pi_{\theta}; \pi_{\mathrm{ref}}) \bigg|_{\theta = \theta^{*}(\epsilon)}\bigg) \bigg|_{\epsilon=0}
        = -\beta \bigg\{ \mathbb{E}_{\dataflip} \bigg[ F_{\theta^{*}}(s_\textrm{flip}) \bigg] + \mathbb{E}_{p_{\mathcal{D}}}\bigg[\frac{\partial \theta^{*}(\epsilon)}{\partial \epsilon}\bigg|_{\epsilon=0} H_{\theta^{*}}(s) \bigg] \bigg\}.
    \end{align*}

    By solving the above equality w.r.t.~$\frac{\partial \theta^{*}(\epsilon)}{\partial \epsilon}$, we obtain
    \begin{align*}
        \frac{\partial \theta^{*}(\epsilon)}{\partial \epsilon}\bigg|_{\epsilon=0} = -\bigg( \mathbb{E}_{p_{\mathcal{D}}} \left[ H_{\theta^{*}}(s) \right] \bigg)^{-1} \mathbb{E}_{\dataflip} \bigg[ F_{\theta^{*}}(s_\textrm{flip}) \bigg].
    \end{align*}
    This completes the proof.
\end{proof}

\subsection{Proof for Theorem~\ref{thm:not_robust} (The Case of DPO)}\label{app:proof_DPO_nonrobust}
\begin{restatable}[\textbf{DPO is not robust}]{corollary}{dponotrobust}
\label{cor:nonrobust_dpo}
    Suppose that the policy gradient $\nabla_{\theta}\log\pi_{\theta}(y \mid x)$ is bounded by $C$ and satisfies $L$-Lipchitz in $\theta$, where $0<C<\infty$ and $0< L < \infty$.
    Then, under Theorem~\ref{thm:IF_param}, the IF of DPO does not satisfy the redescending property in Definition~\ref{def:robustness}, i.e., 
    $\lim_{\hat{r}_{\theta^{*}}(x, y_{\mathrm{lose}}^\textrm{flip}) \to \infty} \|\mathrm{IF}_\mathrm{DPO}(x, \theta, p_{\mathcal{D}})\| \neq 0$.
\end{restatable}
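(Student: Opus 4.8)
The plan is to start from the explicit influence function established in Theorem~\ref{thm:IF_param_dpo}, namely $\mathrm{IF}_\mathrm{DPO}(x,\theta,p_{\mathcal{D}}) \propto \mathbb{E}_{\dataflip}[\nabla_\theta \mathcal{L}(s_\textrm{flip},\pi_{\theta^*})]$, and to read off from Eq.~\eqref{eq:grad_dpo_pointwise} the pointwise factorisation of the integrand into the scalar \emph{IF weight} $\sigma(-g_{\theta^*}(s_\textrm{flip}))$ and the \emph{direction} term $\nabla_\theta \log \pi_{\theta^*}(y_{\mathrm{win}}^\textrm{flip}\mid x) - \nabla_\theta \log \pi_{\theta^*}(y_{\mathrm{lose}}^\textrm{flip}\mid x)$. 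The entire argument then reduces to tracking these two factors as $\hat{r}_{\theta^*}(x,y_{\mathrm{lose}}^\textrm{flip})\to\infty$.

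Next I would translate the limit into a statement about $g_{\theta^*}$. As noted directly below Definition~\ref{def:robustness}, the condition $\hat{r}_{\theta^*}(x,y_{\mathrm{lose}}^\textrm{flip})\to\infty$ corresponds to $g_{\theta^*}(s_\textrm{flip})\to-\infty$. The decisive computation is then the elementary limit of the logistic weight: $\sigma(-g_{\theta^*}(s_\textrm{flip}))\to\sigma(+\infty)=1$. This is precisely where DPO fails to redescend---the weight saturates at the nonzero constant $1$ rather than decaying, in sharp contrast to the $\sigma(g_{\theta^*}(s_\textrm{flip}))^{\gamma}\to 0$ behaviour of Hölder-DPO in Theorem~\ref{thm:IF_param_holder}.

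I would then use the regularity hypotheses to control the direction term. The bound $\|\nabla_\theta\log\pi_\theta(y\mid x)\|\le C$ shows the direction term has norm at most $2C$, so the integrand is dominated by $2C$ uniformly in the limiting regime; this legitimises passing the limit inside $\mathbb{E}_{\dataflip}$ by dominated convergence, while the $L$-Lipschitz continuity guarantees the direction term remains well-defined and does not blow up. Taking the limit gives $\lim\|\mathrm{IF}_\mathrm{DPO}(x,\theta,p_{\mathcal{D}})\| \propto \big\|\mathbb{E}_{\dataflip}[\nabla_\theta\log\pi_{\theta^*}(y_{\mathrm{win}}^\textrm{flip}\mid x)-\nabla_\theta\log\pi_{\theta^*}(y_{\mathrm{lose}}^\textrm{flip}\mid x)]\big\|$, and since the preferred and rejected responses of a flipped pair are distinct, this residual vector is generically nonzero, contradicting Definition~\ref{def:robustness}.

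The main obstacle I anticipate is this final step: establishing that the limiting direction term does not itself vanish. The boundedness and Lipschitz assumptions only furnish upper bounds, so to conclude strict positivity I must invoke non-degeneracy---that the policy-gradient difference between the two distinct responses is not identically zero at $\theta^*$. I would argue that simultaneous vanishing for (almost) every flipped pair would be a knife-edge degeneracy, so it suffices to exhibit a single adversarial flipped pair along which the limit is positive. In summary, the weight limit $\to 1$ is the conceptually decisive and elementary part, whereas the non-degeneracy of the direction term is the delicate bookkeeping that ultimately certifies $\lim\|\mathrm{IF}_\mathrm{DPO}\|\neq 0$.
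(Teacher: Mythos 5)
Your proposal follows essentially the same route as the paper's proof of Corollary~\ref{cor:nonrobust_dpo}: decompose the integrand of $\mathbb{E}_{\dataflip}[\nabla_\theta\mathcal{L}(s_\textrm{flip},\pi_{\theta^*})]$ into the IF weight $\sigma(-g_{\theta^*}(s_\textrm{flip}))$ and the bounded direction term, observe that the weight saturates at $1$ as $\hat{r}_{\theta^*}(x,y_{\mathrm{lose}}^\textrm{flip})\to\infty$, control the Hessian inverse via its minimum eigenvalue $L'$, and pass the limit inside the expectation by bounded convergence. Where you diverge is the final step, and your instinct there is sharper than the paper's. The paper's proof only establishes the chain of upper bounds $\lim\|\mathrm{IF}_\mathrm{DPO}\|\le (1/L')\cdot 1\cdot 2C = 2C/L'$ and then declares the proof complete from the observation $0<2C/L'<\infty$; strictly speaking, a finite positive \emph{upper} bound shows the IF does not diverge, but it does not show the limit is nonzero, which is what the corollary asserts. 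You correctly identify that certifying $\lim\|\mathrm{IF}_\mathrm{DPO}\|\neq 0$ requires a \emph{lower} bound, i.e.\ a non-degeneracy condition ensuring $\mathbb{E}_{\dataflip}[\nabla_\theta\log\pi_{\theta^*}(y_{\mathrm{win}}^\textrm{flip}\mid x)-\nabla_\theta\log\pi_{\theta^*}(y_{\mathrm{lose}}^\textrm{flip}\mid x)]$ does not vanish in the limit; this gap is present in the published argument as well and is not closed by the stated hypotheses ($C$-boundedness and $L$-Lipschitzness only give upper bounds). Your proposed fix—exhibiting a single adversarial flipped pair with a nonvanishing gradient difference—is the right kind of repair, though as written it remains informal ("generically nonzero"); to make it airtight you would need to add an explicit non-degeneracy assumption on the policy gradients at $\theta^*$, and you should also note that the direction term itself varies with $s_\textrm{flip}$ along the limit, so its convergence is not automatic (only its boundedness by $2C$ is). In short: your computation matches the paper's, and the obstacle you flag is a genuine one that the paper's own proof also leaves open.
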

\begin{proof}
From the positive definite assumption on the Hessian in Theorem~\ref{thm:IF_param_dpo} and the $L$-Lipschitz assumption on the gradient, it follows that $\mathbb{E}_{\datacleandens}[H_{\theta^*}(s)]$ is a positive definite matrix.
Let $L' = \lambda_{\mathrm{min}}(\mathbb{E}_{\datacleandens}[H_{\theta^*}(s)]) > 0$ be its minimum eigenvalue. Then the norm of its inverse is bounded: $\|(\mathbb{E}_{\datacleandens}[H_{\theta^*}(s)])^{-1}\| = 1/L'$.
Furthermore, from the assumption that $\|\nabla_{\theta} \log \pi_{\theta}(y \mid x)\| \leq C$ ($0<C<\infty$), we have $\| \nabla_{\theta}\log \pi_{\theta}(y_{\text{win}} \mid x) - \nabla_{\theta}\log \pi_{\theta}(y_{\text{lose}} \mid x)\| \leq 2C$ from the triangle inequality.
Taking the limit and applying Jensen's inequality and the bounded convergence theorem, we have:
\begin{align*}
    \lim_{\hat{r}_{\theta^{*}}(x, y_{\mathrm{lose}}^\textrm{flip}) \to \infty} &\|\mathrm{IF}_\mathrm{DPO}(x, \theta, p_{\mathcal{D}})\| \\
    &\leq \lim_{\hat{r}_{\theta^{*}}(x, y_{\mathrm{lose}}^\textrm{flip}) \to \infty} \bigg\|\bigg( \mathbb{E}_{p_{\mathcal{D}}} \left[ H_{\theta^{*}}(s) \right] \bigg)^{-1} \bigg\| \cdot 
    \lim_{\hat{r}_{\theta^{*}}(x, y_{\mathrm{lose}}^\textrm{flip}) \to \infty} \bigg\|\mathbb{E}_{\dataflip} \bigg[ F_{\theta^{*}}(s_\textrm{flip}) \bigg] \bigg\| \\
    &\leq (1/L') \cdot \lim_{\hat{r}_{\theta^{*}}(x, y_{\mathrm{lose}}^\textrm{flip}) \to \infty} \mathbb{E}_{\dataflip}[\sigma(-g_{\theta^{*}}(s_{\textrm{flip}}))] \cdot 2C \\
    &= (1/L') \cdot 1 \cdot 2C = 2C/L'.
\end{align*}
The fact $0<2C/L'< \infty$ according to $0< C < \infty$ and $0<L'<\infty$ completes the proof.
\end{proof}

\subsection{Proof for Theorem~\ref{thm:IF_param_holder}}\label{app:proof_IF_holderDPO}
Before showing Theorem~\ref{thm:IF_param_holder}, we introduce the following proposition regarding the gradient of $S_{\gamma} (\datacleandens\|\sigma(g_{\theta}))$.
\begin{proposition}\label{prop:grad_holder}
    Suppose that $\nabla_{\theta}g_{\theta}(s)$ is bounded.
    Then, under $\datacleandens$, the gradient of $S_{\gamma} (\datacleandens\|\sigma(g_{\theta}))$ w.r.t.~$\theta$ is obtained as
    \begin{align*}
        &\nabla_{\theta}S_{\gamma} (\datacleandens\|\sigma(g_{\theta}))
            = \gamma \phi'(h_{\theta}) \mathbb{E}_{\datacleandens}[\sigma(g_{\theta}(s))^{\gamma}(1-\sigma(g_{\theta}(s))) \nabla_{\theta}g_{\theta}(s)] \\
            &\quad \quad \quad \quad \quad + (1+\gamma) \cdot \bigg( \phi(h_{\theta}) - h_{\theta} \cdot \phi'(h_{\theta}) \bigg) \cdot \bigg( \int \sigma(g_{\theta}(s))^{1+\gamma} (1-\sigma(g_{\theta}(s)))\nabla_{\theta} g_{\theta}(s)\mathrm{d}s \bigg),
    \end{align*}
    where $\phi'(h_{\theta}) \coloneqq \nabla_{h_{\theta}} \phi(h_{\theta})$ and $h_{\theta} \coloneqq \frac{\mathbb{E}_{\datacleandens}[\sigma(g_{\theta}(s))^{\gamma}]}{\int \sigma(g_{\theta}(s))^{1+\gamma}\mathrm{d}s}$.
    \begin{proof}
        We start by introducing the following shorthand notation:
        \begin{align*}
            A(\theta) \coloneqq \mathbb{E}_{\datacleandens}[\sigma(g_{\theta}(s))^{\gamma}], \quad 
            B(\theta) \coloneqq \int \sigma(g_{\theta}(s))^{1+\gamma}\mathrm{d}s,
        \end{align*}
        and $h_{\theta} \coloneqq \frac{A(\theta)}{B(\theta)}$.
        Then, $S_{\gamma} (\datacleandens\|\sigma(g_{\theta}))$ can be expressed as $S_{\gamma} (\datacleandens\|\sigma(g_{\theta})) = \phi(h_{\theta}) \cdot B(\theta)$.
        Taking the derivative with respect to $\theta$ using the product rule, we have
        \begin{align}\label{eq:derivative_of_holder}
            \nabla_{\theta}S_{\gamma} (\datacleandens\|\sigma(g_{\theta}))
            = \nabla_\theta \Bigl(\phi\bigl(h_{\theta}\bigr) \cdot B(\theta)\Bigr)
            = \phi'(h_{\theta}) \cdot \nabla_\theta h_{\theta} \cdot B(\theta) + \phi(h_{\theta}) \cdot \nabla_\theta B(\theta).
        \end{align}

        We first evaluate $\nabla_\theta h_{\theta}$, which can be calculated as
        \begin{align*}
            \nabla_\theta h_{\theta}
            = \frac{B(\theta)\nabla_\theta A(\theta) - A(\theta)\nabla_\theta B(\theta)}{B(\theta)^{2}}.
        \end{align*}
        By substituting this into Eq.~\eqref{eq:derivative_of_holder}, we obtain
        \begin{align*}
            \nabla_{\theta}S_{\gamma} (\datacleandens\|\sigma(g_{\theta}))
            &= \phi'(h_{\theta}) \cdot \frac{B(\theta)\nabla_\theta A(\theta) - A(\theta)\nabla_\theta B(\theta)}{B(\theta)} + \phi(h_{\theta}) \cdot \nabla_\theta B(\theta) \\
            &= \phi'(h_{\theta}) \cdot \bigg(\nabla_\theta A(\theta) - h_{\theta} \nabla_\theta B(\theta)\bigg) + \phi(h_{\theta}) \cdot \nabla_\theta B(\theta) \\
            &= \phi'(h_{\theta})\cdot\nabla_\theta A(\theta) + \bigg( \phi(h_{\theta}) - h_{\theta} \cdot \phi'(h_{\theta}) \bigg) \cdot \nabla_\theta B(\theta).
        \end{align*}

        We next evaluate $\nabla_\theta A(\theta) = \nabla_\theta\mathbb{E}_{\datacleandens}[\sigma(g_{\theta}(s))^{\gamma}]$.
        Since $\datacleandens$ does not depend on $\theta$, from the chain rule, we can see $\nabla_\theta\mathbb{E}_{\datacleandens}[\sigma(g_{\theta}(s))^{\gamma}] = \mathbb{E}_{\datacleandens}[\nabla_\theta\sigma(g_{\theta}(s))^{\gamma}]$, where
        \begin{align*}
            \nabla_\theta\sigma(g_{\theta}(s))^{\gamma}
            &= \gamma \sigma(g_{\theta}(s))^{\gamma-1} \nabla_{\theta}\sigma(g_{\theta}(s)) \\
            &= \gamma \sigma(g_{\theta}(s))^{\gamma-1} \cdot \bigg(\sigma(g_{\theta}(s)) (1-\sigma(g_{\theta}(s))) \nabla_{\theta}g_{\theta}(s) \bigg) \\
            &= \gamma \sigma(g_{\theta}(s))^{\gamma}(1-\sigma(g_{\theta}(s))) \nabla_{\theta}g_{\theta}(s).
        \end{align*}

        As for $\nabla_\theta B(\theta)$, we obtain
        \begin{align*}
            \nabla_\theta B(\theta)
            = \nabla_\theta \int \sigma(g_{\theta}(s))^{1+\gamma}\mathrm{d}s
            &= \int \nabla_\theta \sigma(g_{\theta}(s))^{1+\gamma}\mathrm{d}s \\
            &=  (1+\gamma) \int \sigma(g_{\theta}(s))^{1+\gamma} (1-\sigma(g_{\theta}(s)))\nabla_{\theta} g_{\theta}(s)\mathrm{d}s,
        \end{align*}
        where the second equality comes from the dominated convergence theorem due to the fact that the derivative of $\sigma(g_{\theta})$ is bounded and continuous under the bounded $\nabla_{\theta}g(s)$.

        By substituting the results of $\nabla_\theta A(\theta)$ and $\nabla_\theta B(\theta)$ into Eq.~\eqref{eq:derivative_of_holder}, we have
        \begin{align*}
            &\nabla_{\theta}S_{\gamma} (\datacleandens\|\sigma(g_{\theta}))
            = \gamma \phi'(h_{\theta}) \mathbb{E}_{\datacleandens}[\sigma(g_{\theta}(s))^{\gamma}(1-\sigma(g_{\theta}(s))) \nabla_{\theta}g_{\theta}(s)] \\
            &\quad \quad \quad \quad \quad + (1+\gamma) \cdot \bigg( \phi(h_{\theta}) - h_{\theta} \cdot \phi'(h_{\theta}) \bigg) \cdot \bigg( \int \sigma(g_{\theta}(s))^{1+\gamma} (1-\sigma(g_{\theta}(s)))\nabla_{\theta} g_{\theta}(s) \mathrm{d}s\bigg).
        \end{align*}
        This completes the proof.
    \end{proof}
\end{proposition}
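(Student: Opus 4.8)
The plan is to treat the objective as a composition and differentiate it in stages. Writing $A(\theta) \coloneqq \mathbb{E}_{\datacleandens}[\sigma(g_{\theta}(s))^{\gamma}]$ and $B(\theta) \coloneqq \int \sigma(g_{\theta}(s))^{1+\gamma}\,\mathrm{d}s$, the score from Definition~\ref{def:holder} collapses to the compact form $S_{\gamma}(\datacleandens \,\|\, \sigma(g_{\theta})) = \phi(h_{\theta})\,B(\theta)$ with $h_{\theta} = A(\theta)/B(\theta)$, since the second argument $q = \sigma(g_{\theta})$ acts as the integrating measure and $\mathbb{E}_{q}[q^{\gamma}] = \int \sigma(g_{\theta})^{1+\gamma}\,\mathrm{d}s = B(\theta)$. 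First I would apply the product rule to obtain $\nabla_{\theta} S_{\gamma} = \phi'(h_{\theta})\,\nabla_{\theta} h_{\theta}\,B(\theta) + \phi(h_{\theta})\,\nabla_{\theta} B(\theta)$, and then expand $\nabla_{\theta} h_{\theta}$ via the quotient rule as $(B\,\nabla_{\theta}A - A\,\nabla_{\theta}B)/B^{2}$. Substituting this back cancels one factor of $B$, and regrouping the $\nabla_{\theta}B$ terms yields the clean decomposition $\nabla_{\theta}S_{\gamma} = \phi'(h_{\theta})\,\nabla_{\theta}A + \big(\phi(h_{\theta}) - h_{\theta}\,\phi'(h_{\theta})\big)\,\nabla_{\theta}B$, which already matches the two-term structure of the claimed formula.

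What remains is to evaluate the two ingredient gradients $\nabla_{\theta}A$ and $\nabla_{\theta}B$. Both rely on the elementary sigmoid identity $\nabla_{\theta}\sigma(g_{\theta}(s)) = \sigma(g_{\theta}(s))\,(1-\sigma(g_{\theta}(s)))\,\nabla_{\theta}g_{\theta}(s)$, together with the power chain rule $\nabla_{\theta}\sigma^{k} = k\,\sigma^{k-1}\nabla_{\theta}\sigma$. For $A(\theta)$, since $\datacleandens$ carries no $\theta$-dependence, I would pull the gradient inside the expectation to get $\nabla_{\theta}A = \gamma\,\mathbb{E}_{\datacleandens}[\sigma(g_{\theta}(s))^{\gamma}(1-\sigma(g_{\theta}(s)))\nabla_{\theta}g_{\theta}(s)]$. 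For $B(\theta)$, the analogous computation gives $\nabla_{\theta}B = (1+\gamma)\int \sigma(g_{\theta}(s))^{1+\gamma}(1-\sigma(g_{\theta}(s)))\nabla_{\theta}g_{\theta}(s)\,\mathrm{d}s$. Plugging these two expressions into the regrouped decomposition above reproduces the stated result.

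The one non-mechanical step — and the place I would spend care — is justifying the interchange of differentiation and integration in $\nabla_{\theta}B = \nabla_{\theta}\int \sigma(g_{\theta})^{1+\gamma}\,\mathrm{d}s = \int \nabla_{\theta}\sigma(g_{\theta})^{1+\gamma}\,\mathrm{d}s$. This is exactly where the hypothesis that $\nabla_{\theta}g_{\theta}(s)$ is bounded enters: because $\sigma$ and its derivative are uniformly bounded on $[0,1]$, the integrand's $\theta$-derivative admits an integrable dominating envelope, and the dominated convergence theorem (equivalently, the Leibniz integral rule) licenses the swap. The gradient $\nabla_{\theta}A$ needs no such argument, as the expectation is taken over the fixed measure $\datacleandens$ so that differentiation passes through immediately. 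Everything else — the product, quotient, and chain rules, and the final algebraic regrouping — is routine, so once the differentiation-under-the-integral step is secured the formula follows directly.
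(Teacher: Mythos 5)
Your proposal is correct and follows essentially the same route as the paper's proof: the same $A(\theta)$, $B(\theta)$, $h_\theta$ decomposition, the same product/quotient-rule regrouping into $\phi'(h_\theta)\nabla_\theta A + (\phi(h_\theta) - h_\theta\phi'(h_\theta))\nabla_\theta B$, the same sigmoid-derivative identity for the two ingredient gradients, and the same dominated-convergence justification for differentiating under the integral in $\nabla_\theta B$. No substantive differences to report.
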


Furthermore, we show the following lemma to precisely derive the IF for H\"{o}lder-DPO.
\begin{lemma}\label{lem:optimal_condition_analysis}
Suppose that $\theta^{*} = \argmin_{\theta} S_{\gamma} (p_{\mathcal{D}}\|\sigma(g_{\theta}))$.
Let $0<\gamma < \infty$, and let $0 < \sigma(g_{\theta}(s))$.
Assume that $\phi(h)$ satisfies $\phi'(h) \neq 0$ for $h > 0$ and $\phi(h) \neq c \cdot h$ for any constant $c$.
Then, the first-order optimal condition of H\"{o}lder-DPO, $0
= \nabla_{\theta}S_{\gamma} (p_{\mathcal{D}}\|\sigma(g_{\theta}))|_{\theta = \theta^{*}}$, holds if and only if $\mathbb{E}_{p_{\mathcal{D}}}[F^{(\gamma)}_{\theta^{*}}(s)]=0$ and $\int F^{(1+\gamma)}_{\theta^{*}}(s)\mathrm{d}s=0$,
where $F^{(\gamma)}_{\theta^{*}}(s) \coloneqq \sigma(g_{\theta}(s))^{\gamma} (1-\sigma(g_{\theta}(s)))\nabla_{\theta} g_{\theta}(s)$.
\end{lemma}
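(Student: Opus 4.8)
The plan is to read the gradient straight off Proposition~\ref{prop:grad_holder} and then argue that, under the stated conditions on $\phi$, the vanishing of that gradient is equivalent to the simultaneous vanishing of its two constituent pieces. Writing $h_{\theta} = \mathbb{E}_{p_{\mathcal{D}}}[\sigma(g_{\theta}(s))^{\gamma}]/\int\sigma(g_{\theta}(s))^{1+\gamma}\mathrm{d}s$ and identifying $\sigma(g_{\theta})^{\gamma}(1-\sigma(g_{\theta}))\nabla_{\theta}g_{\theta} = F^{(\gamma)}_{\theta}$ together with $\sigma(g_{\theta})^{1+\gamma}(1-\sigma(g_{\theta}))\nabla_{\theta}g_{\theta} = F^{(1+\gamma)}_{\theta}$, Proposition~\ref{prop:grad_holder} reads
\begin{align*}
    \nabla_{\theta}S_{\gamma}(p_{\mathcal{D}}\|\sigma(g_{\theta}))
    = \underbrace{\gamma\,\phi'(h_{\theta})}_{=:\,c_{1}}\;\mathbb{E}_{p_{\mathcal{D}}}\bigl[F^{(\gamma)}_{\theta}(s)\bigr]
    + \underbrace{(1+\gamma)\bigl(\phi(h_{\theta})-h_{\theta}\phi'(h_{\theta})\bigr)}_{=:\,c_{2}}\int F^{(1+\gamma)}_{\theta}(s)\,\mathrm{d}s.
\end{align*}
The backward implication is then immediate: substituting $\mathbb{E}_{p_{\mathcal{D}}}[F^{(\gamma)}_{\theta^{*}}]=0$ and $\int F^{(1+\gamma)}_{\theta^{*}}\mathrm{d}s=0$ annihilates both summands, so the first-order condition holds regardless of the values of $c_{1},c_{2}$.

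The substantive direction is the forward one. First I would check that both scalar coefficients are nonzero at $\theta^{*}$. Since $\gamma>0$ and the hypothesis gives $\phi'(h)\neq 0$ for $h>0$, we get $c_{1}\neq 0$. For $c_{2}$, observe the identity $\phi(h)-h\phi'(h) = -h^{2}\,\tfrac{d}{dh}\bigl(\phi(h)/h\bigr)$, so $c_{2}$ vanishes only where $\phi(h)/h$ is stationary; the assumption $\phi(h)\neq c\,h$ forbids $\phi$ from being linear, and for the two admissible instantiations one computes directly that $\phi(h)-h\phi'(h)$ equals $\gamma$ (DP) and $\gamma h^{1+\gamma}$ (PS), each strictly positive for $h>0$. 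This is precisely what the footnote defers to Lemma~\ref{lem:assumption_check}, which I would invoke to conclude $c_{2}\neq 0$.

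With $c_{1},c_{2}\neq 0$ in hand, the remaining task is to show that the single vector identity $c_{1}\,\mathbb{E}_{p_{\mathcal{D}}}[F^{(\gamma)}_{\theta^{*}}] + c_{2}\int F^{(1+\gamma)}_{\theta^{*}}\mathrm{d}s = 0$ forces each term to vanish separately. This is the main obstacle, since a priori a vanishing linear combination only constrains the two vectors to be proportional rather than null. The route I would take exploits that, up to the positive constants $\gamma$ and $1+\gamma$, the two pieces are the gradients of the functionally distinct objects $A(\theta)=\mathbb{E}_{p_{\mathcal{D}}}[\sigma(g_{\theta})^{\gamma}]$ and $B(\theta)=\int\sigma(g_{\theta})^{1+\gamma}\mathrm{d}s$: the former is an average against the data distribution $p_{\mathcal{D}}$, whereas the latter is a pure model-side normalization that is insensitive to $p_{\mathcal{D}}$. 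The plan is therefore to argue that at the optimizer these two gradient directions are linearly independent unless both are zero, so that nonzero $c_{1},c_{2}$ preclude any cancellation; making this decoupling rigorous—presumably by combining the non-degeneracy (positive-definite Hessian) assumption of the ambient analysis with the structural separation between the data-dependent and model-only functionals—is the delicate step on which the whole equivalence rests. Once it is secured, $c_{1},c_{2}\neq 0$ yield $\mathbb{E}_{p_{\mathcal{D}}}[F^{(\gamma)}_{\theta^{*}}]=0$ and $\int F^{(1+\gamma)}_{\theta^{*}}\mathrm{d}s=0$, closing the proof.
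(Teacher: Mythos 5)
Your decomposition, your identification of the two coefficients $c_1=\gamma\phi'(h_{\theta^*})$ and $c_2=(1+\gamma)(\phi(h_{\theta^*})-h_{\theta^*}\phi'(h_{\theta^*}))$, and your verification that both are nonzero (via $h_{\theta^*}>0$, the hypothesis $\phi'(h)\neq 0$, and the DP/PS computations of Lemma~\ref{lem:assumption_check}) reproduce the paper's proof of this lemma essentially line for line, and your backward implication is the same trivial substitution the paper uses.

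The step you flag as ``the main obstacle''---that $c_1 X + c_2 Y = 0$ with $c_1,c_2\neq 0$ does not by itself force the two vectors $X=\mathbb{E}_{p_{\mathcal{D}}}[F^{(\gamma)}_{\theta^*}]$ and $Y=\int F^{(1+\gamma)}_{\theta^*}\,\mathrm{d}s$ to vanish separately---is exactly where the paper's own proof stops: after establishing $c_1,c_2\neq 0$ it simply asserts ``the optimality condition $C_1X+C_2Y=0$ holds if and only if $X=0$ and $Y=0$,'' with no further argument. So you have not missed an idea that the paper supplies; you have surfaced a gap that the paper leaves implicit. Your sketched repair (linear independence at $\theta^*$ of the gradient directions $\nabla_\theta A$ and $\nabla_\theta B$, one data-dependent and one model-only) is plausible as a heuristic but is not established anywhere in the paper, and it does not follow from the positive-definite Hessian assumption of the ambient analysis: two nonzero gradients of distinct functionals can perfectly well be antiparallel at a critical point of their weighted sum, and indeed a critical point of $\phi(A/B)\cdot B$ generically satisfies only the proportionality $X=-(c_2/c_1)Y$ rather than $X=Y=0$. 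As written, therefore, your forward implication is incomplete---but so is the paper's, and your proposal is the more candid of the two about where the difficulty sits. To actually close it one would need either an additional structural assumption guaranteeing the independence you describe, or a reformulation of the lemma as a sufficient (rather than necessary and sufficient) condition, since the backward direction is the only part used downstream in the proof of Theorem~\ref{thm:IF_param_holder}.
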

\begin{proof}
From Proposition~\ref{prop:grad_holder}, the first-order optimal condition is given by:
\begin{align*}
0
&= \nabla_{\theta}S_{\gamma} (p_{\mathcal{D}}\|\sigma(g_{\theta}))\bigg|_{\theta = \theta^{*}} \\
&= \underbrace{\gamma \phi'(h{\theta^{}})}_{C_1} \underbrace{\mathbb{E}{p_{\mathcal{D}}}[F^{(\gamma)}_{\theta^{}}(s)]}_{X}
+ \underbrace{(1+\gamma) \left( \phi(h{\theta^{}}) - \phi'(h_{\theta^{}}) h_{\theta^{}} \right)}_{C_2} \underbrace{\bigg(\int F^{(1+\gamma)}{\theta^{}}(s) \mathrm{d}s\bigg)}_{Y}.
\end{align*}
This is a linear combination $C_1 X + C_2 Y = 0$. The ``if and only if'' statement holds if we can show that both coefficients $C_1$ and $C_2$ are non-zero.

First, we show $h_{\theta^{*}} > 0$. 
By definition, $h_{\theta^{*}} = A(\theta^{*}) / B(\theta^{*})$, where $A(\theta^{*}) = \mathbb{E}_{p_{\mathcal{D}}}[\sigma(g_{\theta^*}(s))^{\gamma}]$ and $B(\theta^{*}) = \int \sigma(g_{\theta^*}(s))^{1+\gamma}\mathrm{d}s$.
From the premise $0 < \sigma(g_{\theta}(s))$ and $\gamma > 0$, we have $\sigma(g_{\theta^*}(s))^{\gamma} > 0$ and $\sigma(g_{\theta^*}(s))^{1+\gamma} > 0$.
Therefore, $A(\theta^{*}) > 0$ and $B(\theta^{*}) > 0$, which implies $h_{\theta^{*}} > 0$.
Given $\gamma > 0$, $h_{\theta^{*}} > 0$, and our assumption $\phi'(h) \neq 0$ for $h > 0$, the first coefficient $C_1 = \gamma \phi'(h_{\theta^{*}}) \neq 0$.

The second coefficient $C_2$ is zero if and only if $\phi(h_{\theta^{*}}) - h_{\theta^{*}} \phi'(h_{\theta^{*}}) = 0$. This condition holds if $\phi(h)$ is a homogeneous function of degree 1, i.e., $\phi(h) = c \cdot h$.
By our assumption $\phi(h) \neq c \cdot h$, this implies $\phi(h_{\theta^{*}}) - h_{\theta^{*}} \phi'(h_{\theta^{*}}) \neq 0$, and thus $C_2 \neq 0$.
Since both coefficients $C_1$ and $C_2$ are non-zero, the optimality condition $C_1 X + C_2 Y = 0$ holds if and only if $X = \mathbb{E}_{p_{\mathcal{D}}}[F^{(\gamma)}_{\theta^{*}}(s)] = 0$ and $Y = \int F^{(1+\gamma)}_{\theta^{*}}(s)\mathrm{d}s =0$.
This completes the proof.
\end{proof}

We remark that the conditions of $\phi(h)$, $\phi'(h) \neq 0$ for $h > 0$ and $\phi(h) \neq c \cdot h$ for any constant $c$., introduced in Lemma~\ref{lem:optimal_condition_analysis}, are satisfied by the DP divergence and the PS score (which is closely related to the $\gamma$-divergence). The following lemma formally verifies this.
This fact indicates that constructing $\phi$ so as to satisfy the above condition is one of keys to guarantee robustness.
\begin{lemma}\label{lem:assumption_check}
Let $\gamma > 0$. The $\phi(h)$ functions for the DP-divergence and the PS-divergence (Remark 1) satisfy the assumptions required in Lemma~\ref{lem:optimal_condition_analysis}, namely $\phi'(h) \neq 0$ and $\phi(h) - h\phi'(h) \neq 0$ for all $h > 0$.
\end{lemma}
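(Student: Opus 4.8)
The plan is to verify both conditions by direct computation for each of the two candidate functions $\phi$ given in Remark~\ref{remark:holder}, since the claim is purely about the analytic form of $\phi$ and involves no optimization or contamination structure. I would treat the two cases separately, in each case computing $\phi'(h)$ and the combination $\phi(h) - h\phi'(h)$, and then reading off that both are nonzero whenever $h > 0$ and $\gamma > 0$.

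For the DP divergence, $\phi(h) = \gamma - (1+\gamma)h$ is affine, so $\phi'(h) = -(1+\gamma)$, which is a nonzero constant because $\gamma > 0$. The second quantity telescopes: the linear terms cancel and I expect $\phi(h) - h\phi'(h) = \gamma$, again nonzero. For the PS score, $\phi(h) = -h^{1+\gamma}$, so $\phi'(h) = -(1+\gamma)h^{\gamma}$, which is nonzero for $h > 0$; here the combination should collapse to $\gamma h^{1+\gamma}$, strictly positive for $h > 0$.

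The only conceptual point worth flagging---rather than a genuine obstacle---is the role of the second condition $\phi(h) - h\phi'(h) \neq 0$: this is precisely the statement that $\phi$ is not homogeneous of degree one (i.e.\ $\phi(h) \neq c\,h$), since Euler's relation gives $\phi(h) - h\phi'(h) = 0$ exactly for linear $\phi$. Both candidate functions fail to be linear---the DP case because of its nonzero constant term $\gamma$, the PS case because of its superlinear exponent $1+\gamma > 1$---so the verification is immediate. I anticipate no technical difficulty: the computation is elementary, and the positivity of the resulting expressions follows directly from $\gamma > 0$ and $h > 0$. The value of the lemma lies not in the difficulty of its proof but in confirming that the abstract hypotheses of Lemma~\ref{lem:optimal_condition_analysis} are genuinely met by the two divergences actually deployed in Hölder-DPO.
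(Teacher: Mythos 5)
Your proposal is correct and follows essentially the same route as the paper's proof: a direct computation of $\phi'(h)$ and $\phi(h) - h\phi'(h)$ for each of the two cases, yielding the nonzero constants $-(1+\gamma)$ and $\gamma$ for the DP divergence and the strictly signed quantities $-(1+\gamma)h^{\gamma}$ and $\gamma h^{1+\gamma}$ for the PS score. The added remark connecting the second condition to Euler's relation for degree-one homogeneity is a pleasant piece of context but does not change the argument.
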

\begin{proof}
\textbf{For DP-divergence: } Let $\phi(h) = \gamma - (1+\gamma)h$.
\begin{itemize}
    \item[(i)] The derivative is $\phi'(h) = -(1+\gamma)$. Since $\gamma > 0$, $\phi'(h)$ is a non-zero constant, thus $\phi'(h) \neq 0$ for all $h > 0$.
    \item[(ii)] We check the second coefficient term from Lemma~\ref{lem:optimal_condition_analysis}:
    \begin{align*}
        \phi(h) - h\phi'(h) &= \bigl(\gamma - (1+\gamma)h\bigr) - h\bigl(-(1+\gamma)\bigr) \\
        &= \gamma - (1+\gamma)h + (1+\gamma)h \\
        &= \gamma
    \end{align*}
    Since $\gamma > 0$, we have $\phi(h) - h\phi'(h) \neq 0$.
\end{itemize}

\textbf{For PS score:} Let $\phi(h) = -h^{1+\gamma}$.
\begin{itemize}
    \item[(i)] The derivative is $\phi'(h) = -(1+\gamma)h^{\gamma}$. Since $\gamma > 0$ and we evaluate for $h > 0$, $h^{\gamma} > 0$. Thus, $\phi'(h)$ is strictly negative, and $\phi'(h) \neq 0$.
    \item[(ii)] We check the second coefficient term:
    \begin{align*}
        \phi(h) - h\phi'(h) &= \bigl(-h^{1+\gamma}\bigr) - h\bigl(-(1+\gamma)h^{\gamma}\bigr) \\
        &= -h^{1+\gamma} + (1+\gamma)h^{1+\gamma} \\
        &= (-1 + 1 + \gamma) h^{1+\gamma} \\
        &= \gamma h^{1+\gamma}
    \end{align*}
    Since $\gamma > 0$ and $h > 0$, we have $\gamma h^{1+\gamma} > 0$, which implies $\phi(h) - h\phi'(h) \neq 0$.
\end{itemize}
In both cases, the assumptions hold. This completes the proof.
\end{proof}

Now we show the full proof of Theorem~\ref{thm:IF_param_holder}.
\IFholderDPO*
\begin{proof}
    From Proposition~\ref{prop:grad_holder}, the gradient of $S_{\gamma} (\tilde{p}^{(\epsilon)}_{\mathcal{D}}\|\sigma(g_{\theta}))$ w.r.t.~$\theta$ is 
    \begin{align*}
    \nabla_{\theta}S_{\gamma} (\tilde{p}^{(\epsilon)}_{\mathcal{D}}\|\sigma(g_{\theta})) 
    = \gamma \phi'(\tilde{h}_{\theta}) \mathbb{E}_{\datacontamidens}[F^{(\gamma)}_{\theta}(\tilde{s})]
    + (1+\gamma) \cdot \bigg( \phi(\tilde{h}_{\theta}) - \tilde{h}_{\theta} \cdot \phi'(\tilde{h}_{\theta}) \bigg) \cdot \bigg( \int F^{(1+\gamma)}_{\theta}(\tilde{s})\mathrm{d}\tilde{s}\bigg),
    \end{align*}
    where $F^{(\gamma)}_{\theta}(\tilde{s})\coloneqq \sigma^{\gamma}(g_{\theta}(\tilde{s})) \left( 1 - \sigma(g_{\theta}(\tilde{s})) \right) \nabla_{\theta} g_{\theta}(\tilde{s}) = \sigma^{\gamma} \nabla_{\theta}\mathcal{L}(\tilde{s}, \pi_{\theta})$, $F^{(1+\gamma)}_{\theta}(\tilde{s})\coloneqq \sigma(g_{\theta}(\tilde{s}))\cdot F^{(\gamma)}_{\theta}(\tilde{s})$, $\tilde{h}_{\theta} \coloneqq \mathbb{E}_{\datacontamidens}[\sigma(g_{\theta}(\tilde{s}))^{\gamma}]/ (\int \sigma(g_{\theta}(\tilde{s}))^{1+\gamma}\mathrm{d}\tilde{s})$, and $\tilde{s} = \{ \tilde{x},\tilde{y}_{\textrm{win}},\tilde{y}_{\textrm{lose}}\} \sim \tilde{p}_{\mathcal{D}}^{(\epsilon)}$.
    
    From the definition of $\theta^{*}(\epsilon)$, we have
    \begin{align*}
        0 
        &= \nabla_{\theta}S_{\gamma} (\tilde{p}^{(\epsilon)}_{\mathcal{D}}\|\sigma(g_{\theta}))\bigg|_{\theta = \theta^{*}(\epsilon)} \\
        &= \gamma \phi'(\tilde{h}_{\theta^{*}(\epsilon)}) \mathbb{E}_{\tilde{p}^{(\epsilon)}_{\mathcal{D}}}[F^{(\gamma)}_{\theta^{*}(\epsilon)}(\tilde{s})]
        + (1+\gamma) \left( \phi(\tilde{h}_{\theta^{*}(\epsilon)}) - \phi'(\tilde{h}_{\theta^{*}(\epsilon)}) \tilde{h}_{\theta^{*}(\epsilon)} \right) \bigg( \int F^{(1+\gamma)}_{\theta^{*}(\epsilon)}(\tilde{s}) \mathrm{d}\tilde{s}\bigg),
    \end{align*}
    where $F^{(\gamma)}_{\theta^{*}(\epsilon)}(\tilde{s}) \coloneqq \sigma(g_{\theta^{*}(\epsilon)}(\tilde{s}))^{\gamma} \left( 1 - \sigma(g_{\theta^{*}(\epsilon)}(\tilde{s})) \right) \{\nabla_{\theta} g_{\theta}(\tilde{s})|_{\theta = \theta^{*}(\epsilon)}\}$ and $F^{(1+\gamma)}_{\theta^{*}(\epsilon)}(\tilde{s}) \coloneqq \sigma(g_{\theta^{*}(\epsilon)}(\tilde{s})) \cdot F^{(\gamma)}_{\theta^{*}(\epsilon)}(\tilde{s})$.
    
    By taking the derivation of this term w.r.t.~$\epsilon$ in the above, we obtain
    \begin{align}\label{eq:derive_eps}
        0 
        &= \gamma \bigg\{ \underbrace{\frac{\partial }{\partial \epsilon} \phi'(\tilde{h}_{\theta^{*}(\epsilon)} )\mathbb{E}_{\tilde{p}^{(\epsilon)}_{\mathcal{D}}}[F^{(\gamma)}_{\theta^{*}(\epsilon)}(\tilde{s})]}_{\mathrm{(I)}} \bigg\} \notag \\
        &\quad \quad + (1+\gamma) \bigg\{ \underbrace{\frac{\partial }{\partial \epsilon} \left( \phi(\tilde{h}_{\theta^{*}(\epsilon)}) - \phi'(\tilde{h}_{\theta^{*}(\epsilon)}) \tilde{h}_{\theta^{*}(\epsilon)} \right) \bigg(\int F^{(1+\gamma)}_{\theta^{*}(\epsilon)}(\tilde{s}) \mathrm{d}\tilde{s} \bigg)}_{\mathrm{(II)}}\bigg\}.
    \end{align}
    
    For the term (I), we have
    \begin{align*}
        &\frac{\partial }{\partial \epsilon} \phi'(h_{\theta^{*}(\epsilon)}) \mathbb{E}_{\datacontamidens}[F^{(\gamma)}_{\theta^{*}(\epsilon)}(\tilde{s})] \\
        &= \phi''(h_{\theta^{*}(\epsilon)}) \cdot \frac{\partial h_{\theta^{*}(\epsilon)}}{\partial \epsilon}\mathbb{E}_{\datacontamidens}[F^{(\gamma)}_{\theta^{*}(\epsilon)}(\tilde{s})] \\
        &\quad \quad + \phi'(h_{\theta^{*}(\epsilon)})
        \bigg\{ \int \bigg\{ \frac{\partial}{\partial \epsilon} \datacontami  \bigg\} F^{(\gamma)}_{\theta^{*}(\epsilon)}(\tilde{s}) \mathrm{d}\tilde{s} + \mathbb{E}_{\tilde{p}_{\mathcal{D}}^{(\epsilon)}}\bigg[\frac{\partial}{\partial \epsilon} F^{(\gamma)}_{\theta^{*}(\epsilon)}(\tilde{s})\bigg] \bigg\} \\
        &= \phi''(h_{\theta^{*}(\epsilon)}) \cdot \frac{\partial h_{\theta^{*}(\epsilon)}}{\partial \epsilon}\cdot\mathbb{E}_{\tilde{p}^{(\epsilon)}_{\mathcal{D}}}[F^{(\gamma)}_{\theta^{*}(\epsilon)}(\tilde{s})] \\
        &\quad \quad + \phi'(h_{\theta^{*}(\epsilon)})
        \bigg\{ \mathbb{E}_{\dataflip}[F^{(\gamma)}_{\theta^{*}(\epsilon)}(s_\textrm{flip})] - \mathbb{E}_{p_{\mathcal{D}}}[F^{(\gamma)}_{\theta^{*}(\epsilon)}(s)] + \mathbb{E}_{\tilde{p}_{\mathcal{D}}^{(\epsilon)}}\bigg[\frac{\partial \theta^{*}(\epsilon)}{\partial \epsilon} \cdot \frac{\partial F^{(\gamma)}_{\theta^{*}(\epsilon)}(\tilde{s})}{\partial \theta^{*}(\epsilon)} \bigg]  \bigg\} \\
        &= \phi''(h_{\theta^{*}(\epsilon)}) \cdot \frac{\partial \theta^{*}(\epsilon) }{\partial \epsilon} \cdot \frac{\partial h_{\theta^{*}(\epsilon)}}{\partial \theta^{*}(\epsilon)}\cdot \mathbb{E}_{\tilde{p}^{(\epsilon)}_{\mathcal{D}}}[F^{(\gamma)}_{\theta^{*}(\epsilon)}(\tilde{s})] \\
        &\quad \quad + \phi'(h_{\theta^{*}(\epsilon)})
        \bigg\{ \mathbb{E}_{\dataflip}[F^{(\gamma)}_{\theta^{*}(\epsilon)}(s_\textrm{flip})] - \mathbb{E}_{p_{\mathcal{D}}}[F^{(\gamma)}_{\theta^{*}(\epsilon)}(s)] + \mathbb{E}_{\tilde{p}_{\mathcal{D}}^{(\epsilon)}}\bigg[\frac{\partial \theta^{*}(\epsilon)}{\partial \epsilon} \cdot H_{\theta^{*}(\epsilon)}^{(\gamma)}(\tilde{s})  \bigg]  \bigg\} \\
        &= \frac{\partial \theta^{*}(\epsilon) }{\partial \epsilon}
        \bigg\{ \phi''(h_{\theta^{*}(\epsilon)}) \cdot \frac{\partial h_{\theta^{*}(\epsilon)}}{\partial \theta^{*}(\epsilon)}\cdot \mathbb{E}_{\tilde{p}^{(\epsilon)}_{\mathcal{D}}}[F^{(\gamma)}_{\theta^{*}(\epsilon)}(\tilde{s})] + \phi'(h_{\theta^{*}(\epsilon)}) \cdot \mathbb{E}_{\tilde{p}_{\mathcal{D}}^{(\epsilon)}}\bigg[H_{\theta^{*}(\epsilon)}^{(\gamma)}(\tilde{s})  \bigg]\bigg\} \\
        &\quad \quad + \phi'(h_{\theta^{*}(\epsilon)})\bigg\{ \mathbb{E}_{\dataflip}[F^{(\gamma)}_{\theta^{*}(\epsilon)}(s_\textrm{flip})] - \mathbb{E}_{p_{\mathcal{D}}}[F^{(\gamma)}_{\theta^{*}(\epsilon)}(s)] \bigg\},
    \end{align*}
    where $H_{\theta^{*}(\epsilon)}^{(\gamma)}(\tilde{s}) \coloneqq \partial F^{(\gamma)}_{\theta^{*}(\epsilon)} / \partial \theta^{*}(\epsilon)$.
    By taking $\epsilon \rightarrow 0$, we have
    \begin{align}\label{eq:derive_I}
        &\frac{\partial }{\partial \epsilon} \phi'(h_{\theta^{*}(\epsilon)}) \mathbb{E}_{\tilde{p}^{(\epsilon)}_{\mathcal{D}}}[F^{(\gamma)}_{\theta^{*}(\epsilon)}(\tilde{s})]\bigg|_{\epsilon=0} \notag \\
        &= \frac{\partial \theta^{*}(\epsilon) }{\partial \epsilon}\bigg|_{\epsilon=0} \cdot
        \bigg\{ \phi''(h_{\theta^{*}}) \cdot \frac{\partial h_{\theta^{*}}}{\partial \theta^{*}}\cdot \mathbb{E}_{p_{\mathcal{D}}}[F^{(\gamma)}_{\theta^{*}}(s)] + \phi'(h_{\theta^{*}}) \cdot \mathbb{E}_{p_{\mathcal{D}}}\bigg[H_{\theta^{*}}^{(\gamma)}(s)  \bigg]\bigg\} \notag \\
        &\quad \quad + \phi'(h_{\theta^{*}})\bigg\{ \mathbb{E}_{\dataflip}[F^{(\gamma)}_{\theta^{*}(\epsilon)}(s_\textrm{flip})] - \mathbb{E}_{p_{\mathcal{D}}}[F^{(\gamma)}_{\theta^{*}}(s)] \bigg\}.
    \end{align}

    For the term (II), we obtain
    \begin{align*}
        \frac{\partial }{\partial \epsilon} &\left( \phi(h_{\theta^{*}(\epsilon)}) - \phi'(h_{\theta^{*}(\epsilon)}) h_{\theta^{*}(\epsilon)} \right) \bigg(\int F^{(1+\gamma)}_{\theta^{*}(\epsilon)}(\tilde{s})\mathrm{d}\tilde{s} \bigg) \\
        &= \bigg\{ \frac{\partial }{\partial \epsilon} \left( \phi(h_{\theta^{*}(\epsilon)}) - \phi'(h_{\theta^{*}(\epsilon)}) h_{\theta^{*}(\epsilon)} \right) \bigg\}\bigg(\int F^{(1+\gamma)}_{\theta^{*}(\epsilon)}(\tilde{s})\mathrm{d}\tilde{s} \bigg) \\
        &\quad \quad \quad \quad \quad \quad \quad \quad + \left( \phi(h_{\theta^{*}(\epsilon)}) - \phi'(h_{\theta^{*}(\epsilon)}) h_{\theta^{*}(\epsilon)} \right) \bigg(\int\frac{\partial }{\partial \epsilon} F^{(1+\gamma)}_{\theta^{*}(\epsilon)}(\tilde{s})\mathrm{d}\tilde{s} \bigg) \\
        &= \left( - \phi''(h_{\theta^{*}(\epsilon)}) \cdot \frac{\partial h_{\theta^{*}(\epsilon)}}{\partial \epsilon} \cdot h_{\theta^{*}(\epsilon)} \right)\bigg(\int F^{(1+\gamma)}_{\theta^{*}(\epsilon)}(\tilde{s})\mathrm{d}\tilde{s} \bigg) \\
        &\quad \quad \quad \quad \quad \quad \quad \quad + \left( \phi(h_{\theta^{*}(\epsilon)}) - \phi'(h_{\theta^{*}(\epsilon)}) h_{\theta^{*}(\epsilon)} \right) \bigg(\int \frac{\partial \theta^{*}(\epsilon)}{\partial \epsilon} H_{\theta^{*}(\epsilon)}^{(1+\gamma)}(\tilde{s})\mathrm{d}\tilde{s} \bigg) \\
        &= \frac{\partial \theta^{*}(\epsilon)}{\partial \epsilon}
        \bigg\{ \left( - \phi''(h_{\theta^{*}(\epsilon)}) \cdot \frac{\partial h_{\theta^{*}(\epsilon)}}{\partial \theta^{*}(\epsilon)} \cdot h_{\theta^{*}(\epsilon)} \right)\bigg(\int F^{(1+\gamma)}_{\theta^{*}(\epsilon)}(\tilde{s})\mathrm{d}\tilde{s} \bigg) \\
        &\quad \quad \quad \quad \quad \quad \quad \quad + 
        \left( \phi(h_{\theta^{*}(\epsilon)}) - \phi'(h_{\theta^{*}(\epsilon)}) h_{\theta^{*}(\epsilon)} \right) \bigg(\int H_{\theta^{*}(\epsilon)}^{(1+\gamma)}(\tilde{s}) \mathrm{d}\tilde{s}\bigg)
        \bigg\},
    \end{align*}
    where we use the dominated convergence theorem to exchange the integral and derivative in the second equality and $H_{\theta^{*}(\epsilon)}^{(1+\gamma)}(\tilde{s}) \coloneqq \partial F^{(1+\gamma)}_{\theta^{*}(\epsilon)} / \partial \theta^{*}(\epsilon)$.
    By taking $\epsilon \rightarrow 0$, we obtain
    \begin{align}\label{eq:derive_II}
        \frac{\partial }{\partial \epsilon} &\left( \phi(h_{\theta^{*}(\epsilon)}) - \phi'(h_{\theta^{*}(\epsilon)}) h_{\theta^{*}(\epsilon)} \right) \bigg(\int F^{(1+\gamma)}_{\theta^{*}(\epsilon)}(\tilde{s})\mathrm{d}\tilde{s} \bigg) \bigg|_{\epsilon=0} \notag \\
        &= \frac{\partial \theta^{*}(\epsilon)}{\partial \epsilon}\bigg|_{\epsilon=0} \cdot
        \bigg\{ \left( - \phi''(h_{\theta^{*}}) \cdot \frac{\partial h_{\theta^{*}}}{\partial \theta^{*}} \cdot h_{\theta^{*}} \right)\bigg( \int F^{(1+\gamma)}_{\theta^{*}}(s)\mathrm{d}s \bigg) \notag \\
        &\quad \quad \quad \quad \quad \quad \quad \quad + 
        \left( \phi(h_{\theta^{*}}) - \phi'(h_{\theta^{*}}) h_{\theta^{*}} \right) \bigg(\int H_{\theta^{*}}^{(1+\gamma)}(s)\mathrm{d}s \bigg)
        \bigg\}.
    \end{align}

    Substituting Eqs.~\eqref{eq:derive_I} and~\eqref{eq:derive_II} into Eq.~\eqref{eq:derive_eps} gives us:
    \begin{align*}
        & -\gamma \phi'(h_{\theta^{*}})\bigg\{ \mathbb{E}_{\dataflip}[F^{(\gamma)}_{\theta^{*}}(s_\textrm{flip})] - \mathbb{E}_{p_{\mathcal{D}}}[F^{(\gamma)}_{\theta^{*}}(s)] \bigg\} \\
        &= \frac{\partial \theta^{*}(\epsilon)}{\partial \epsilon}\bigg|_{\epsilon=0} \cdot \bigg\{ \gamma \phi''(h_{\theta^{*}}) \cdot \frac{\partial h_{\theta^{*}}}{\partial \theta^{*}}\cdot \mathbb{E}_{p_{\mathcal{D}}}[F^{(\gamma)}_{\theta^{*}}(s)] + \gamma \phi'(h_{\theta^{*}}) \cdot \mathbb{E}_{p_{\mathcal{D}}}\bigg[H_{\theta^{*}}^{(\gamma)}(s)  \bigg] \\
        &\quad \quad \quad + (1+\gamma) \left( - \phi''(h_{\theta^{*}}) \cdot \frac{\partial h_{\theta^{*}}}{\partial \theta^{*}} \cdot h_{\theta^{*}} \right)\bigg(\int F^{(1+\gamma)}_{\theta^{*}}(s)\mathrm{d}s \bigg) \\
        &\quad \quad \quad \quad \quad \quad + (1+\gamma) \left( \phi(h_{\theta^{*}}) - \phi'(h_{\theta^{*}}) h_{\theta^{*}} \right) \bigg(\int H_{\theta^{*}}^{(1+\gamma)}(s) \mathrm{d}s \bigg)
        \bigg\}.
    \end{align*}
    
    From Lemma~\ref{lem:optimal_condition_analysis}, we further obtain
    \begin{align*}
        -\gamma \phi'(h_{\theta^{*}}) \mathbb{E}_{\dataflip}[F^{(\gamma)}_{\theta^{*}}(s_\textrm{flip})]
        = \frac{\partial \theta^{*}(\epsilon)}{\partial \epsilon}\bigg|_{\epsilon=0} \cdot C^{(\gamma)}_{\theta^*}(s),
    \end{align*}
    where $C^{(\gamma)}_{\theta^*}(s)$ is the constant term w.r.t.~the contaminated input $s_\textrm{flip}$ defined as
    \begin{align*}
        C^{(\gamma)}_{\theta^*}(s)
        \coloneqq 
        \gamma \phi'(h_{\theta^{*}}) \cdot \mathbb{E}_{p_{\mathcal{D}}}\bigg[H_{\theta^{*}}^{(\gamma)}(s)  \bigg]
        + (1+\gamma) \left( \phi(h_{\theta^{*}}) - \phi'(h_{\theta^{*}}) h_{\theta^{*}} \right) \bigg(\int H_{\theta^{*}}^{(1+\gamma)}(s)\mathrm{d}s \bigg).
    \end{align*}
    
    We finally obtain
    \begin{align}\label{eq:IF_origin_hdpo}
        \frac{\partial \theta^{*}(\epsilon)}{\partial \epsilon}\bigg|_{\epsilon=0} 
        = -\bigg( C^{(\gamma)}_{\theta^*}(s)\bigg)^{-1} \cdot \gamma \phi'(h_{\theta^{*}})\mathbb{E}_{\dataflip}[F^{(\gamma)}_{\theta^{*}}(s_\textrm{flip})].
    \end{align}
    From the assumption, $( C^{(\gamma)}_{\theta^*}(s))^{-1}$ exists because $H_{\theta^{*}}^{(\gamma)}(s)$ is positive definite from the fact that the gradient of H\"{o}lder-DPO is $\sigma(g_{\theta^{*}}(s_\textrm{flip}))^{\gamma} \, \nabla_{\theta} \mathcal{L}(s_\textrm{flip}, \pi_{\theta^{*}})$, where $\nabla_{\theta} \mathcal{L}(s_\textrm{flip}, \pi_{\theta^{*}})$ is the gradient of DPO loss whose Hessian is assumed as positive definite.
    This completes the proof.
\end{proof}

\subsection{Proof for Corollary~\ref{cor:robust_h_dpo}}\label{app:proof_robust_hdpo}
\hdporobust*
\begin{proof}
    From Theorem~\ref{thm:IF_param_holder}, the IF for Hölder-DPO is given by: 
    \begin{align*} 
    \mathrm{IF}_{\mathrm{H-DPO}} = -\bigg( C^{(\gamma)}_{\theta^*}(s)\bigg)^{-1} \cdot \gamma \phi'(h_{\theta^{*}})\mathbb{E}_{\dataflip}[F^{(\gamma)}_{\theta^{*}}(s_\textrm{flip})].
    \end{align*}

    First, we analyze the denominator $C^{(\gamma)}_{\theta^*}(s)$. 
    Following Theorem~\ref{thm:IF_param_dpo}, we can see that the Hessian-related term $C^{(\gamma)}_{\theta^*}(s)$ is positive definite.
    Let $L' = \lambda_{\mathrm{min}}(C^{(\gamma)}_{\theta^*}(s)) > 0$ be its minimum eigenvalue. Then the norm of its inverse is bounded: $\|(C^{(\gamma)}_{\theta^*}(s))^{-1}\| \le 1/L'$.
    
    Next, we analyze the numerator, $\mathbb{E}_{\dataflip}[F^{(\gamma)}_{\theta^{*}}(s_\textrm{flip})]$, where $F^{(\gamma)}_{\theta^{*}}(s_\textrm{flip}) \coloneqq \sigma(g_{\theta^{*}}(s_\textrm{flip}))^{\gamma} (1-\sigma(g_{\theta^{*}}(s_\textrm{flip})))\nabla_{\theta} g_{\theta^{*}}(s_\textrm{flip})$.
    From the assumption that $\|\nabla_{\theta} \log \pi_{\theta}\| \leq C$, the term $\|\nabla_{\theta} g_{\theta^{*}}\|$ is also bounded by $C' = 2\beta C$.
    
    We now take the limit required by Definition~\ref{def:robustness}:
    \begin{align*}
        \lim_{\hat{r}_{\theta^{*}}(x, y_{\mathrm{lose}}^\textrm{flip}) \to \infty} &\|\mathrm{IF}_\mathrm{H-DPO}\| \\
        &\leq \lim_{\hat{r}_{\theta^{*}}(x, y_{\mathrm{lose}}^\textrm{flip}) \to \infty} \bigg\| \bigg( C^{(\gamma)}_{\theta^*}(s)\bigg)^{-1} \cdot \gamma \phi'(h_{\theta^{*}}) \bigg\| \cdot \lim_{\hat{r}_{\theta^{*}}(x, y_{\mathrm{lose}}^\textrm{flip}) \to \infty} \bigg\| \mathbb{E}_{\dataflip}[F^{(\gamma)}_{\theta^{*}}(s_\textrm{flip})] \bigg\| \\
        &\leq K \cdot \lim_{\hat{r}_{\theta^{*}}(x, y_{\mathrm{lose}}^\textrm{flip}) \to \infty} \mathbb{E}_{\dataflip}\bigg[ \underbrace{\sigma(g_{\theta^{*}}(s_\textrm{flip}))^{\gamma}}_{\to 0} \cdot \underbrace{(1-\sigma(g_{\theta^{*}}(s_\textrm{flip})))}_{\to 1} \cdot \underbrace{\|\nabla_{\theta} g_{\theta^{*}}\|}_{\le C'} \bigg],
    \end{align*}
    where $K = (1/L') \cdot |\gamma \phi'(h_{\theta^{*}})|$ is a finite non-zero constant.
    
    By the bounded convergence theorem, we can move the limit inside the expectation:
    \begin{align*}
        \lim_{\hat{r}_{\theta^{*}}(x, y_{\mathrm{lose}}^\textrm{flip}) \to \infty} \|\mathrm{IF}_\mathrm{H-DPO}\| &\leq K \cdot \mathbb{E}_{\dataflip}\bigg[ \lim_{\hat{r}_{\theta^{*}}(x, y_{\mathrm{lose}}^\textrm{flip}) \to \infty} \Big( \sigma(g_{\theta^{*}})^{\gamma} \cdot (1-\sigma(g_{\theta^{*}})) \cdot \|\nabla_{\theta} g_{\theta^{*}}\| \Big) \bigg] \\
        &\leq K \cdot \mathbb{E}_{\dataflip}\left[ 0 \cdot 1 \cdot C' \right] \\
        &= 0.
    \end{align*}
    The IF of Hölder-DPO converges to $0$, satisfying the redescending property. 
    This completes the proof.
\end{proof}

\section{Contamination ratio estimation and outlier detection}
\label{app:est_eps_detect}
Estimating the contamination ratio $\epsilon$ and identifying the contamination data based on this are crucial for achieving appropriate model alignment.
In this section, we show that our H\"{o}lder-DPO can be extended to incorporate Enlarged models, enabling these objectives to be realized.

\subsection{Model extension approach to both parameter estimation and contamination rate estimation}
\label{app:model_extension_details}
Here, we reorganize the framework using \emph{model extension} proposed by \citet{KANAMORI15} for simultaneously estimating model parameters and contamination rates in the context of the DPO.

According to Eq.~\eqref{eq:dpo_kl}, the essence of DPO-based approaches, including our H\"{o}lder-DPO, lies in minimizing a divergence $D$ between $\dataclean$ and $\sigma(g_{\theta}(s))$ with respect to $\theta$, i.e., estimating $\dataclean$ through the model parameters $\theta$.
When $\dataclean$ is contaminated as defined in Definition~\ref{def:contami_model}, this optimization problem can be reformulated as:
\begin{align*}
    \theta^{*}(\epsilon) = \argmin_{\theta} D[\tilde{p}_{\mathcal{D}}^{(\epsilon)} \| \sigma(g_{\theta}(s))]
    = \argmin_{\theta} D[(1-\epsilon)\dataclean + \epsilon \delta_{s_\textrm{flip}} \| \sigma(g_{\theta}(s))].
\end{align*}
If a divergence that automatically mitigates the impact of $\delta_{s_\textrm{flip}}$ is chosen, the optimization reduces to:
\begin{align*}
    \theta^{*}(\epsilon) = \argmin_{\theta} D[\tilde{p}_{\mathcal{D}}^{(\epsilon)} \| \sigma(g_{\theta}(s))]
    \approx \argmin_{\theta} D[(1-\epsilon)\dataclean \| \sigma(g_{\theta}(s))],
\end{align*}
indicating that $\sigma(g_{\theta}(s))$ estimates $(1-\epsilon)\dataclean$, rather than the original target $\dataclean$.

To address this gap, we consider the following extended model: $m_{\eta}(s) = \xi \sigma(g_{\theta}(s))$, where $\eta = (\xi, \theta)$.
In this case, the DPO-based model alignment under the robust divergence can be formulated as:
\begin{align*}
    \theta^{*}(\epsilon) = \argmin_{\theta} \min_{\xi} D[\tilde{p}_{\mathcal{D}}^{(\epsilon)} \| m_{\eta}(s)]
    \approx \argmin_{\theta} \min_{\xi} D[(1-\epsilon)\dataclean \| \xi \sigma(g_{\theta}(s))].
\end{align*}
With this formulation, $\xi$ is expected to play the role of determining the ratio of clean data $1-\epsilon$ and $\sigma(g_{\theta}(s))$ is expected to serve as an estimator of the clean data distribution $\dataclean$.

\subsection{Contamination ratio estimation}
\label{app:holder_model_extension}
\subsubsection{Model extension to estimate \texorpdfstring{$\theta$}{Lg} and \texorpdfstring{$\epsilon$}{Lg}:}
Recall from Eq.~\eqref{eq:dpo_kl} that DPO-based methods, including our H\"{o}lder-DPO, estimate the preference data distribution $\datacleandens$ by minimizing $D[\datacleandens \| \sigma(g_{\theta})]$, where $D$ denotes a \emph{generic} divergence measuring the discrepancy between $\datacleandens$ and the model output $\sigma(g_\theta)$.  
When the data is contaminated as $\datacontami = (1 - \epsilon)\dataclean + \epsilon \dataflip$, the objective becomes minimizing $D[\datacontamidens \| \sigma(g_{\theta})]$. 
If $D$ approximately nullifies the contribution of the contamination term $\dataflip$, i.e., $D[\dataflip \| \sigma(g_{\theta}(s_\textrm{flip}))] \approx 0$~\footnote{DP- and $\gamma$-divergences satisfy this property under Assumption~\ref{assump:tail_outlier} (see Appendix~\ref{app:choice_phi}).}, then minimizing $D[\datacontamidens \| \sigma(g_\theta)]$ aligns $\sigma(g_\theta)$ with $(1 - \epsilon)\datacleandens$.  
This results in a \emph{mismatch in scale} relative to $\datacleandens$ (see Appendix~\ref{app:model_extension_details} for details). 
To correct for this mismatch, we \emph{extend the model} to $m_\eta = \xi \cdot \sigma(g_\theta)$ with $\eta = (\xi, \theta)$, introducing a scaling parameter $\xi > 0$ to explicitly account for the \emph{clean-data proportion} $(1 - \epsilon)$.  
The revised objective $\argmin_{\eta} D[\datacontamidens \| m_\eta]$ enables $\sigma(g_\theta)$ to serve as an estimator of $\dataclean$, while $\xi$ absorbs the $(1 - \epsilon)$ scaling.

\subsubsection{Extended model for estimating the contamination ratio}
\label{subsub:extended_model_estimate}
In this section, we discuss the case when we conduct our H\"{o}lder-DPO under the extended model $m_{\eta}$.
According to the discussion in Section~\ref{subsec:holder_dpo}, the optimization problem of H\"{o}lder-DPO with $m_{\eta}$ can be formulated as
\begin{align}\label{eq:opt_prob_holder_extended}
    \argmin_{\theta} \min_{\xi} D_{H}[\tilde{p}_{\mathcal{D}}^{(\epsilon)} \| m_{\eta}(\tilde{s})] 
    = \argmin_{\theta} \min_{\xi} S_{\gamma}(\tilde{p}_{\mathcal{D}}^{(\epsilon)} \| m_{\eta}(\tilde{s})).
\end{align}
Recalling Definition~\ref{def:holder} and the discrete probabilistic nature of $\sigma$, we can see that
\begin{align}
\label{eq:lower_bound_holder}
    S_{\gamma}(\tilde{p}_{\mathcal{D}}^{(\epsilon)} \| m_{\eta}(\tilde{s}))
    &= \phi\bigg(\frac{\mathbb{E}_{\tilde{p}_{\mathcal{D}}^{(\epsilon)}}[m_{\eta}^{\gamma}(\tilde{s})]}{\int m_{\eta}^{1+\gamma}(\tilde{s})\mathrm{d}\tilde{s} }\bigg) \bigg(\int m_{\eta}^{1+\gamma}(\tilde{s})\mathrm{d}\tilde{s} \bigg) \notag \\
    &= \phi\bigg(\frac{\mathbb{E}_{\tilde{p}_{\mathcal{D}}^{(\epsilon)}}[\sigma(g_{\theta}(\tilde{s}))^{\gamma}]}{\xi \cdot \int \sigma(g_{\theta}(\tilde{s}))^{1+\gamma}\mathrm{d}\tilde{s} }\bigg) \bigg(\xi^{1+\gamma} \cdot \int \sigma(g_{\theta}(\tilde{s}))^{1+\gamma}\mathrm{d}\tilde{s} \bigg) \notag \\
    &\geq - \bigg(\frac{\mathbb{E}_{\tilde{p}_{\mathcal{D}}^{(\epsilon)}}[\sigma(g_{\theta}(\tilde{s}))^{\gamma}]}{\xi \cdot \int \sigma(g_{\theta}(\tilde{s}))^{1+\gamma}\mathrm{d}\tilde{s} }\bigg)^{1+\gamma}\bigg(\xi^{1+\gamma} \cdot \int \sigma(g_{\theta}(\tilde{s}))^{1+\gamma}\mathrm{d}\tilde{s} \bigg) \notag \\
    &= \underbrace{- \frac{\mathbb{E}_{\tilde{p}_{\mathcal{D}}^{(\epsilon)}}[\sigma(g_{\theta}(\tilde{s}))^{\gamma}]}{\bigg(\int \sigma(g_{\theta}(\tilde{s}))^{1+\gamma}\mathrm{d}\tilde{s} \bigg)^{\frac{\gamma}{1+\gamma}}} }_{\eqqcolon S_{\mathrm{PS}}(\tilde{p}_{\mathcal{D}}^{(\epsilon)} \| \sigma(g_{\theta}(\tilde{s})))}   \notag \\
    &= S_{\mathrm{PS}}(\tilde{p}_{\mathcal{D}}^{(\epsilon)} \| \sigma(g_{\theta}(\tilde{s})))
    = - \exp\bigg\{- \gamma(1+\gamma) \cdot S_{\log} (\tilde{p}_{\mathcal{D}}^{(\epsilon)} \| \sigma(g_{\theta}(\tilde{s})))\bigg\},
\end{align}
where the third inequality comes from the fact that $\phi(h) \geq -h^{1+\gamma}$ for all $h \geq 0$, the term $S_{\mathrm{PS}}(\tilde{p}_{\mathcal{D}}^{(\epsilon)} \| \sigma(g_{\theta}(\tilde{s})))$ in the forth line is called as the pseudo-spherical (PS) score, and $S_{\log} (\tilde{p}_{\mathcal{D}}^{(\epsilon)} \| \sigma(g_{\theta}(\tilde{s})))$ is the $\gamma$-score associated with $\gamma$-divergence defined as
\begin{align}
    S_{\log} (\tilde{p}_{\mathcal{D}}^{(\epsilon)} \| \sigma(g_{\theta}(\tilde{s})))
    \coloneqq - \frac{1}{\gamma} \log \bigg( \mathbb{E}_{\tilde{p}_{\mathcal{D}}^{(\epsilon)}}[\sigma(g_{\theta}(\tilde{s}))^{\gamma}]\bigg) + \frac{1}{1+\gamma} \log  \int \sigma(g_{\theta}(\tilde{s}))^{1+\gamma}\mathrm{d}\tilde{s}.
\end{align}

From the Eq.~\eqref{eq:lower_bound_holder}, we can see that the lower bound of $S_{\gamma}(\tilde{p}_{\mathcal{D}}^{(\epsilon)} \| m_{\eta}(\tilde{s}))$ is independent of $\xi$.
Therefore, the optimal solution of $\xi^* = \min_{\xi} S_{\gamma}(\tilde{p}_{\mathcal{D}}^{(\epsilon)} \| m_{\eta}(\tilde{s}))$ is obtained as the equality condition of Eq.~\eqref{eq:lower_bound_holder}, that is,
\begin{align}\label{eq:opt_xi}
    \xi^{*} = \frac{\mathbb{E}_{\tilde{p}_{\mathcal{D}}^{(\epsilon)}}[\sigma(g_{\theta}(\tilde{s}))^{\gamma}]}{\int \sigma(g_{\theta}(\tilde{s}))^{1+\gamma} \mathrm{d}\Tilde{s}} \quad (\forall \theta),
\end{align}
where we used the fact that $\phi(1)=-1$.

At the end of this section, we verify that $\xi^*$ in Eq.~\eqref{eq:opt_xi} serves as an estimate of the contamination rate.
When robust model alignment using H\"{o}lder-DPO effectively reduce the effects of contamination and the value of  $\sigma_{\theta}(g_{\theta}(\tilde{s}))$ closely approximates the target distribution $\dataclean$, i.e., $\sigma_{\theta}(g_{\theta}(\tilde{s})) \approx \dataclean$, we have:
\begin{align*}
    \xi^{*} 
    \approx \frac{\mathbb{E}_{\tilde{p}_{\mathcal{D}}^{(\epsilon)}}[\dataclean^{\gamma}]}{\mathbb{E}_{\datacleandens}[\dataclean^{\gamma}] }
    &= \frac{(1-\epsilon)\mathbb{E}_{p_{\mathcal{D}}}[\dataclean^{\gamma}] + \epsilon \mathbb{E}_{\dataflip}[p_{\mathcal{D}}(s_\textrm{flip})^{\gamma}]}{\mathbb{E}_{\datacleandens}[\dataclean^{\gamma}] } \\
    &= (1-\epsilon) + \frac{\epsilon}{\mathbb{E}_{\datacleandens}[\dataclean^{\gamma}] } \cdot \mathbb{E}_{\dataflip}[p_{\mathcal{D}}(s_\textrm{flip})^{\gamma}].
\end{align*}
When the contamination distribution $\delta_{s_\textrm{flip}}$ is located at the tail of the target distribution $p_{D}(s)$, meaning that the probability of a sample drawn from $\delta_{s_\textrm{flip}}$ under $p_{D}(s)$ is sufficiently small, we can see $\mathbb{E}_{\delta_{z}(s)}[\dataclean^{\gamma}] \approx 0$.
Then, we have $\xi^{*} \approx (1-\epsilon)$ and thus the contamination rate can be estimated by $\min \{0, 1-\xi^* \}$.

\subsection{Estimator of \texorpdfstring{$\xi^{*}$}{Lg}}
\label{app:estimation_xi}
Since the exact form of $\datacontamidens$ is unknown and computing the integral with respect to $\sigma(g_{\theta})$ in Eq.~\eqref{eq:opt_prob_holder_extended} is intractable, following the same strategy in our objective, we estimate $\xi^{}$ empirically as:
\begin{align}\label{eq:est_xi}
\hat{\xi}^{}
= \frac{\frac{1}{N}\sum_{i=1}^{N} \sigma(g_{\theta}(\Tilde{s}^{(i)}))^{\gamma}}{\frac{1}{N}\sum_{i=1}^{N} \sigma(g_{\theta}(\Tilde{s}^{(i)}))^{1+\gamma}}
= \frac{\sum_{i=1}^{N} \sigma(g_{\theta}(\Tilde{s}^{(i)}))^{\gamma}}{\sum_{i=1}^{N} \sigma(g_{\theta}(\Tilde{s}^{(i)}))^{1+\gamma}},
\end{align}
where the final expression is derived via empirical approximation.

While $\xi^*$ can be estimated via Eq.~\eqref{eq:est_xi}, the resulting estimator $\hat{\xi}^*$ is not necessarily guaranteed to lie within the valid range $[0,1]$.
In fact, the following lemma demonstrates that the estimator $\hat{\xi}^*$ is not operate properly within the valid interval $[0,1]$.
\begin{lemma}\label{lem:sigma_bound}
     Suppose that $0 < \sigma(g_{\theta}(\Tilde{s}^{(i)})) \leq 1$ for all $i = 1, \dots, N$. 
     Let $0< \gamma < \infty$.
     Then, the estimator $\hat{\xi}^{*}$ defined in Eq.~\eqref{eq:est_xi} satisfies $\hat{\xi}^{*} > 1$.
     \begin{proof}
         If $\hat{\xi}^{*} \leq 1$, we have
         \begin{align*}
             \sum_{i=1}^{N} \sigma(g_{\theta}(\Tilde{s}^{(i)}))^{\gamma}
             \leq \sum_{i=1}^{N} \sigma(g_{\theta}(\Tilde{s}^{(i)}))^{1+\gamma}.
         \end{align*}
         However, since $0 < \sigma(g_{\theta}(\Tilde{s}^{(i)})) \leq 1$ for all $i$, we have
         \begin{align*}
         \sum_{i=1}^{N}\sigma(g_{\theta}(\Tilde{s}^{(i)}))^{\gamma} \geq \sum_{i=1}^{N}\sigma(g_{\theta}(\Tilde{s}^{(i)}))^{1+\gamma}
         \Rightarrow \hat{\xi} \geq 1,
         \end{align*}
         which is contradicted by the above condition.
         This completes the proof.
     \end{proof}
\end{lemma}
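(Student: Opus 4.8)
The plan is to reduce the claim to an elementary termwise comparison between the two power sums appearing in the numerator and denominator of $\hat{\xi}^{*}$. The core fact is that for any $x \in (0,1]$ and any $\gamma > 0$ one has $x^{1+\gamma} = x \cdot x^{\gamma} \le x^{\gamma}$, since the extra factor $x$ lies in $(0,1]$; moreover this inequality is strict whenever $x < 1$. First I would record this pointwise inequality and apply it with $x = \sigma(g_{\theta}(\Tilde{s}^{(i)}))$ for each $i$, which is legitimate because the hypothesis guarantees $0 < \sigma(g_{\theta}(\Tilde{s}^{(i)})) \le 1$.

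Next I would sum over $i = 1, \dots, N$ to obtain $\sum_{i=1}^{N} \sigma(g_{\theta}(\Tilde{s}^{(i)}))^{\gamma} \ge \sum_{i=1}^{N} \sigma(g_{\theta}(\Tilde{s}^{(i)}))^{1+\gamma}$. Because each $\sigma(g_{\theta}(\Tilde{s}^{(i)})) > 0$, both sums are strictly positive, so I may divide without reversing the inequality; recalling the definition of $\hat{\xi}^{*}$ in Eq.~\eqref{eq:est_xi}, this yields $\hat{\xi}^{*} \ge 1$ directly.

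To upgrade this to the strict inequality $\hat{\xi}^{*} > 1$ stated in the lemma, I would observe that equality in the pointwise bound holds only when $x = 1$. Since $\sigma$ is the logistic function $\sigma(a) = 1/(1+\exp(-a))$, it takes values strictly inside $(0,1)$ for every finite argument $g_{\theta}(\Tilde{s}^{(i)})$, so in fact $\sigma(g_{\theta}(\Tilde{s}^{(i)})) < 1$ for all $i$ and the summed inequality is strict. Dividing then gives $\hat{\xi}^{*} > 1$, completing the argument.

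The only genuine subtlety—and hence the main point to handle carefully—is the gap between the weak and strict inequalities: the stated hypothesis $\sigma \le 1$ alone delivers only $\hat{\xi}^{*} \ge 1$, so the strict conclusion must be justified separately by appealing to the fact that the logistic function never attains the value $1$ at any finite argument. The takeaway is a \emph{negative} result—the naive empirical estimator $\hat{\xi}^{*}$ systematically overshoots the valid range $[0,1]$—which is precisely what motivates replacing $\sigma$ by its normalized counterpart $\bar{\sigma}$ in the main-text estimator of Eq.~\eqref{eq:opt_xi_mainpart}.
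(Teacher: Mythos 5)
Your proof is correct and rests on the same core inequality as the paper's --- the termwise comparison $\sigma(g_{\theta}(\tilde{s}^{(i)}))^{1+\gamma} \le \sigma(g_{\theta}(\tilde{s}^{(i)}))^{\gamma}$ for values in $(0,1]$ --- but you argue directly where the paper argues by contradiction, and your version is actually the more careful one. The paper assumes $\hat{\xi}^{*} \le 1$, derives $\sum_i \sigma^{\gamma} \le \sum_i \sigma^{1+\gamma}$, and then invokes the termwise bound to get $\sum_i \sigma^{\gamma} \ge \sum_i \sigma^{1+\gamma}$, declaring a contradiction. But those two inequalities are compatible (they coexist at equality, i.e.\ when every $\sigma = 1$), so the paper's argument as written only rules out $\hat{\xi}^{*} < 1$ and delivers $\hat{\xi}^{*} \ge 1$, not the strict inequality claimed in the lemma statement. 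You correctly flag exactly this gap and close it by observing that the logistic function never attains $1$ at a finite argument, so the termwise inequality is strict and the strict conclusion $\hat{\xi}^{*} > 1$ follows. The only caveat worth stating explicitly if you write this up is that the strictness step uses a property of $\sigma$ (finiteness of $g_{\theta}$) that goes slightly beyond the lemma's stated hypothesis $0 < \sigma \le 1$; under that hypothesis alone the sharp conclusion is $\hat{\xi}^{*} \ge 1$, which is still enough to make the paper's downstream point that the unnormalized estimator cannot land strictly inside $[0,1)$.
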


One possible approach to mitigate this issue is to introduce a scaling parameter $v$ and define $\bar{\xi} \coloneqq v\xi$ as the scaling term in the extended model $m_{\eta}$, where $\eta = (\xi, \theta)$.
Note that $v$ is not optimized; rather, it serves solely as a hyperparameter for scaling the estimated rate $\xi^{*}$.
In this case, by following the same discussion on Section~\ref{subsub:extended_model_estimate}, we obtain, for any $v$ and $\theta$,
\begin{align*}
    \hat{\xi}^{*} = v^{-1} \cdot \frac{\sum_{i=1}^{N} \sigma(g_{\theta}(\Tilde{s}^{(i)}))^{\gamma}}{\sum_{i=1}^{N} \sigma(g_{\theta}(\Tilde{s}^{(i)}))^{1+\gamma}}.
\end{align*}
Since the above expression holds for any $v$, one can select $v$ to ensure that $\hat{\xi} \in [0,1]$.
However, manually tuning $v$ introduces arbitrariness and risks biasing the contamination estimate.
Given that the original $\hat{\xi}$ is derived from model likelihoods $\sigma(g_{\theta}(\tilde{s}^{(i)}))$, it is desirable for the scaling factor to be informed by model-based quantities.
The following lemma shows that scaling by the mean model likelihood yields a principled correction to $\hat{\xi}$.
\begin{lemma}
    Suppose that $0 < \sigma(g_{\theta}(\Tilde{s}^{(i)})) \leq 1$ for all $i = 1, \dots, N$. 
    Let $0< \gamma < \infty$.
    Then, we have $0 < v \cdot \hat{\xi} \leq 1$ when we set $v^{-1} = \frac{1}{N}\sum_{i=1}^{N}\sigma(g_{\theta}(\Tilde{s}^{(i)}))$.
    \begin{proof}
        Because of $0 <\sigma(g_{\theta}(\Tilde{s}^{(i)})) \leq 1$ for all $i$, we have $v \cdot \hat{\xi} > 0$.
        We can reorganize $\{\sigma(g_{\theta}(\Tilde{s}^{(i)})) \}_{i=1}^{N}$ and $\{\sigma(g_{\theta}(\Tilde{s}^{(i)}))^{\gamma} \}_{i=1}^{N}$ so as to be the similarly-ordered-sequences (both are increasing functions of $\sigma(g_{\theta}(\Tilde{s}^{(i)}))$).
        Then, for two similarly ordered, non-negative sequences $\{\sigma(g_{\theta}(\Tilde{s}^{(i)})) \}_{i=1}^{N}$ and $\{\sigma(g_{\theta}(\Tilde{s}^{(i)}))^{\gamma} \}_{i=1}^{N}$, we obtain
        \begin{align*}
            \frac{1}{N} \sum_{i=1}^{N} \sigma(g_{\theta}(\Tilde{s}^{(i)}))^{1+\gamma}
            &= \frac{1}{N} \sum_{i=1}^{N}  \sigma(g_{\theta}(\Tilde{s}^{(i)})) \cdot \sigma(g_{\theta}(\Tilde{s}^{(i)}))^{\gamma} \\
            &\geq \bigg( \frac{1}{N} \sum_{i=1}^{N} \sigma(g_{\theta}(\Tilde{s}^{(i)})) \bigg) \bigg( \frac{1}{N} \sum_{i=1}^{N} \sigma(g_{\theta}(\Tilde{s}^{(i)}))^{\gamma} \bigg),
        \end{align*}
        where the final inequality comes from Chebyshev’s sum inequality.
        Dividing both sides by $\frac{1}{N} \sum_{i=1}^{N} \sigma(g_{\theta}(\Tilde{s}^{(i)}))^{1+\gamma}$ completes the proof.
    \end{proof}
\end{lemma}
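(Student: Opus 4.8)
The plan is to view $v\cdot\hat\xi$ as a ratio of empirical power sums and to control it with a correlation (rearrangement-type) inequality rather than any pointwise bound. Abbreviate $\sigma_i \coloneqq \sigma(g_\theta(\tilde{s}^{(i)}))$, so that the estimator of Eq.~\eqref{eq:est_xi} reads $\hat\xi = \sum_{i=1}^{N}\sigma_i^{\gamma} / \sum_{i=1}^{N}\sigma_i^{1+\gamma}$ and the correction factor satisfies $v^{-1} = \frac{1}{N}\sum_{i=1}^{N}\sigma_i$. The positivity half is immediate: since $0 < \sigma_i \le 1$ and $\gamma > 0$, each $\sigma_i^{\gamma}$ and $\sigma_i^{1+\gamma}$ is strictly positive, hence the numerator and denominator of $\hat\xi$ as well as $v^{-1}$ are all strictly positive, giving the lower bound at once.

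For the upper bound I would first rewrite the denominator as $\frac{1}{N}\sum_i \sigma_i^{1+\gamma} = \frac{1}{N}\sum_i \sigma_i \cdot \sigma_i^{\gamma}$, exposing it as the empirical mean of the product of the two sequences $a_i = \sigma_i$ and $b_i = \sigma_i^{\gamma}$. The decisive observation is that these sequences are similarly ordered: because $t \mapsto t^{\gamma}$ is increasing on $(0,\infty)$ for $\gamma > 0$, ranking the indices by $\sigma_i$ ranks $\sigma_i^{\gamma}$ in the same order. Chebyshev's sum inequality for similarly ordered, non-negative sequences then gives
\[
    \frac{1}{N}\sum_{i=1}^{N}\sigma_i^{1+\gamma} \;\ge\; \left(\frac{1}{N}\sum_{i=1}^{N}\sigma_i\right)\left(\frac{1}{N}\sum_{i=1}^{N}\sigma_i^{\gamma}\right).
\]
Dividing both sides by the positive quantity $\frac{1}{N}\sum_i\sigma_i^{1+\gamma}$ and identifying the two factors on the right as $v^{-1}$ and the numerator of $\hat\xi$ yields the claimed bound, namely that the scaled estimate (the product of the correction factor $v^{-1}$ and $\hat\xi$) is at most $1$; combined with the positivity above this places it in $(0,1]$, correcting the overshoot $\hat\xi > 1$ of Lemma~\ref{lem:sigma_bound}.

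The one genuine obstacle is recognizing that a covariance-type inequality is the right instrument. A natural first attempt---bounding $\sigma_i^{1+\gamma} \le \sigma_i^{\gamma}$ termwise, or invoking Jensen on a single average---does not reproduce the cross term $(\frac{1}{N}\sum_i\sigma_i)(\frac{1}{N}\sum_i\sigma_i^{\gamma})$ that the target requires. The structural point is that $\frac{1}{N}\sum_i\sigma_i^{1+\gamma}$ is the empirical correlation of two positively associated sequences and therefore dominates the product of their means; Chebyshev's sum inequality is exactly the discrete form of this statement. Once the similarly-ordered condition is checked, only a single division remains, so I expect no further difficulty.
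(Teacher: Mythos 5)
Your proposal is correct and follows essentially the same route as the paper's proof: positivity from the strict positivity of each $\sigma(g_{\theta}(\tilde{s}^{(i)}))$, then Chebyshev's sum inequality applied to the similarly-ordered sequences $\{\sigma(g_{\theta}(\tilde{s}^{(i)}))\}$ and $\{\sigma(g_{\theta}(\tilde{s}^{(i)}))^{\gamma}\}$, followed by division by the positive power sum. No substantive difference from the paper's argument.
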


From this lemma, we adopt the following estimator for the clean data proportion:
\begin{align}\label{eq:final_xi}
    \hat{\xi}^{*} 
    = \bigg(\frac{1}{N} \sum_{i=1}^{N} \sigma(g_{\theta}(\Tilde{s}^{(i)})) \bigg) \cdot \frac{\sum_{i=1}^{N} \sigma(g_{\theta}(\Tilde{s}^{(i)}))^{\gamma}}{\sum_{i=1}^{N} \sigma(g_{\theta}(\Tilde{s}^{(i)}))^{1+\gamma}}
    = \frac{\frac{1}{N}\sum_{i=1}^{N} \bar{\sigma}(g_{\theta}(\Tilde{s}^{(i)}))^{\gamma}}{\sum_{i=1}^{N} \bar{\sigma}(g_{\theta}(\Tilde{s}^{(i)}))^{1+\gamma}},
\end{align}
where the normalized likelihood is defined as
\begin{align*}
     \bar{\sigma}(g_{\theta}(\Tilde{s}^{(i)})) \coloneqq \frac{\sigma(g_{\theta}(\Tilde{s}^{(i)}))}{\sum_{i=1}^{N}\sigma(g_{\theta}(\Tilde{s}^{(i)}))}.
\end{align*}

Recalling that $\hat{\xi}^{}$ estimates the \emph{clean} data ratio, Eq.~\eqref{eq:final_xi} behaves as desired.
Consider a simple case where the model $\sigma(g_{\theta})$ perfectly distinguishes between clean and contaminated data, i.e., $\sigma(g_{\theta}(s_\mathrm{flip})) = 1$ for clean samples and $\sigma(g_{\theta}(s_\mathrm{flip})) = 0$ for flipped (noisy) ones.
Suppose that $M$ out of $N$ total samples are contaminated. Then:
\begin{align*}
    \frac{\sum_{i=1}^{N} \sigma(g_{\theta}(\Tilde{s}^{(i)}))^{\gamma}}{\sum_{i=1}^{N} \sigma(g_{\theta}(\Tilde{s}^{(i)}))^{1+\gamma}} = \frac{N - M}{N - M} = 1, \quad
\frac{1}{N} \sum_{i=1}^{N} \sigma(g_{\theta}(\Tilde{s}^{(i)})) = \frac{N - M}{N}.
\end{align*}
Multiplying these two terms yields $\hat{\xi}^{*} = \frac{N - M}{N}$, which exactly recovers the true clean-data proportion.

\subsection{Choice of \texorpdfstring{$\phi$}{Lg}}
\label{app:choice_phi}
To implement H\"{o}lder-DPO in practice, one must specify a concrete choice of the function $\phi$.  
For robust LM alignment, a natural choice is the DP divergence with $\phi(h) = \gamma - (1 + \gamma)h$,  
or alternatively, the PS score (equivalently, the $\gamma$-score) with $\phi(h) = -h^{1+\gamma}$.
The following lemma shows that, when the goal is to achieve \emph{both robustness and contamination ratio estimation} simultaneously, the DP divergence is the preferable choice among these two options.
\begin{lemma}\label{lem:eps_est}
    Under Assumption~\ref{assump:tail_outlier} and Definition~\ref{def:contami_model}.  
    Then, under the extended model $m_{\eta}$, the objective $S_{\gamma}(\datacontamidens \| m_{\eta}(\tilde{s}))$ can be approximated around $\theta = \theta^{*}$ as
    \begin{align*}
        S_{\gamma}(\datacontamidens \| m_{\eta}(\tilde{s}))
        \approx
        \begin{cases}
        (1 - \epsilon) S_{\mathrm{PS}}(\dataclean \| \sigma(g_{\theta}(s))) & \text{if } \phi(h) = -h^{1 + \gamma}, \\
        S_{\mathrm{DP}}((1 - \epsilon)\datacleandens \| m_{\eta}(s)) & \text{if } \phi(h) = \gamma - (1 + \gamma) h,
    \end{cases}
    \end{align*}
    where $S_{\mathrm{DP}}$ and $S_{\mathrm{PS}}$ denote the DP divergence and the PS score, respectively, as defined in Appendix~\ref{app:choice_phi}.
\end{lemma}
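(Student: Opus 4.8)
The plan is to expand $S_\gamma(\datacontamidens \,\|\, m_\eta)$ directly from Definition~\ref{def:holder} and then exploit the fact that, at the population level, the data distribution enters the H\"older score only through the single linear functional $\mathbb{E}_{\datacontamidens}[m_\eta^\gamma(\tilde s)]=\xi^\gamma\,\mathbb{E}_{\datacontamidens}[\sigma(g_\theta(\tilde s))^\gamma]$, whereas the normalizing integral $\int m_\eta^{1+\gamma}\mathrm{d}\tilde s$ is independent of the data. First I would insert the $\epsilon$-contamination model of Definition~\ref{def:contami_model} into this functional, writing $\mathbb{E}_{\datacontamidens}[\sigma^\gamma]=(1-\epsilon)\,\mathbb{E}_{\datacleandens}[\sigma^\gamma]+\epsilon\,\mathbb{E}_{\dataflip}[\sigma^\gamma]$. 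Invoking Assumption~\ref{assump:tail_outlier}, which guarantees $\mathbb{E}_{\dataflip}[\sigma(g_\theta(s_\textrm{flip}))^\gamma]\approx 0$ for $\theta\approx\theta^*$, I would discard the flipped term and obtain the central approximation $\mathbb{E}_{\datacontamidens}[\sigma^\gamma]\approx(1-\epsilon)\,\mathbb{E}_{\datacleandens}[\sigma^\gamma]$ in a neighbourhood of $\theta^*$.

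The two cases then follow by tracking where the scalar $(1-\epsilon)$ lands. For the PS score, $\phi(h)=-h^{1+\gamma}$, the inequality $\phi(h)\ge -h^{1+\gamma}$ of Definition~\ref{def:holder} holds with equality, so the computation in Eq.~\eqref{eq:lower_bound_holder} shows that $S_\gamma(\datacontamidens\,\|\,m_\eta)$ is identically independent of $\xi$ and equals the PS score, whose numerator is linear in $\mathbb{E}_{\datacontamidens}[\sigma^\gamma]$. Substituting the approximation pulls $(1-\epsilon)$ out as an overall multiplicative constant, giving $S_\gamma(\datacontamidens\,\|\,m_\eta)\approx(1-\epsilon)\,S_{\mathrm{PS}}(\dataclean\,\|\,\sigma(g_\theta(s)))$. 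The conceptual point to emphasize is that $(1-\epsilon)$ survives as a global scale on the entire divergence and cannot be reparametrized away, which is exactly why the PS score cannot recover $\epsilon$.

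For the DP divergence, $\phi(h)=\gamma-(1+\gamma)h$, the score is affine, $S_\gamma(p\,\|\,q)=\gamma\int q^{1+\gamma}\mathrm{d}s-(1+\gamma)\,\mathbb{E}_p[q^\gamma]$, so the data distribution enters only through $\mathbb{E}_p[m_\eta^\gamma]$. After discarding the flipped contribution this becomes $\gamma\int m_\eta^{1+\gamma}\mathrm{d}s-(1+\gamma)(1-\epsilon)\,\mathbb{E}_{\datacleandens}[m_\eta^\gamma]$, and I would recognize $(1-\epsilon)\,\mathbb{E}_{\datacleandens}[m_\eta^\gamma]=\mathbb{E}_{(1-\epsilon)\datacleandens}[m_\eta^\gamma]$ as integration against the sub-probability measure $(1-\epsilon)\datacleandens$, which is admissible since Definition~\ref{def:holder} only requires non-negative, non-zero arguments. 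This reproduces exactly $S_{\mathrm{DP}}((1-\epsilon)\datacleandens\,\|\,m_\eta)$, so now $(1-\epsilon)$ rides inside the first argument rather than outside the whole score, letting the scaling parameter $\xi$ match it and enabling the estimator of Proposition~\ref{prop:contamination_estimator}.

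The main obstacle is the approximation step itself: Assumption~\ref{assump:tail_outlier} must be applied not merely pointwise but uniformly over the neighbourhood of $\theta^*$ in which the objective is optimized, so that dropping $\epsilon\,\mathbb{E}_{\dataflip}[\sigma^\gamma]$ is legitimate throughout the minimization rather than only at the optimum. A secondary subtlety worth flagging is that the argument is genuinely population-level: the normalizing integral $\int\sigma^{1+\gamma}\mathrm{d}s$ is treated as data-independent and hence untouched by contamination, in contrast to its empirical surrogate $\tfrac{1}{N}\sum_i\sigma(g_\theta(\tilde s^{(i)}))^{1+\gamma}$, which does see the contaminated samples.
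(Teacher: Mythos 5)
Your proposal is correct and follows essentially the same route as the paper's derivation in Appendix~\ref{app:choice_phi}: decompose the data-dependent linear functional $\mathbb{E}_{\datacontamidens}[m_\eta^\gamma]$ via the $\epsilon$-contamination mixture, drop the flipped-data term using Assumption~\ref{assump:tail_outlier}, and then observe that the PS score's $\xi$-invariance and linearity make $(1-\epsilon)$ a global scale factor, while the DP score's affine form lets $(1-\epsilon)$ be absorbed into the first argument as a sub-probability measure. Your added remarks on the uniformity of the tail approximation over the optimization neighbourhood and on the population-versus-empirical treatment of the normalizing integral are sensible caveats that the paper leaves implicit.
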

Under Lemma~\ref{lem:eps_est}, the DPO objective based on the PS score reduces to $(1-\epsilon)\,S_{\mathrm{PS}}(\datacleandens\|\sigma(g_{\theta}))$ in a neighborhood of $\theta^{*}$.  
Consequently, provided that the optimized parameter $\theta$ lies within this neighborhood, $\min_{\eta} S_{\gamma}(\datacontamidens \| m_{\eta})$ recovers $\min_{\theta} S_{\mathrm{PS}}(\datacleandens \| \sigma(g_{\theta}))$, independently of the contamination ratio~$\epsilon$.  
Because the scale parameter $\xi$ disappears from this reduced objective, the contamination proportion cannot be identified in the PS-score variant.
By contrast, with the DP-divergence-based DPO, the reduced objective $S_{\mathrm{DP}}((1-\epsilon)\datacleandens\|m_{\eta})$ retains~$\xi$, enabling the optimization to \emph{jointly recover}
both the target parameter~$\theta^{*}$ and the clean-data proportion~$1-\epsilon$.
In summary, once a robust solution near $\theta^{*}$ is obtained, the optimized model $\sigma(g_{\theta})$ closely approximates $\datacleandens$, while the scaling parameter $\xi$ serves as an accurate estimator of $1-\epsilon$. For further details, see Appendix~\ref{app:choice_phi}.
Whether this property holds in practice depends on the ability to estimate $\theta$ robustly under $\datacontamidens$; hence, the theoretical robustness guarantee of H\"{o}lder-DPO presented in Section~\ref{subsec:holder_dpo} plays a crucial role.

When applying H\"{o}lder-DPO, it is necessary to select an appropriate function $\phi$.
For the purpose of performing robust DPO, one may consider using the DP divergence with $\phi(h) = \gamma - (1 + \gamma) h$, or the $\gamma$-score obtained by setting $\phi(h) = -h^{1 + \gamma}$.
However, if the goal is to simultaneously \emph{achieve robustness and estimate the contamination ratio}, the following discussion shows that using the DP divergence is preferable.

When we set $\phi(h) = -h^{1 + \gamma}$, the H\"{o}lder-DPO objective function can be decomposed as:
\begin{align*}
    S_{\gamma}(\tilde{p}_{\mathcal{D}}^{(\epsilon)} \| m_{\eta}(\tilde{s}))
    &= \underbrace{- \frac{\mathbb{E}_{\datacontamidens}[m_{\eta}(\tilde{s})^{\gamma}]}{\left(\mathbb{E}_{m_{\eta}}[m_{\eta}(\tilde{s})^{\gamma}] \right)^{\frac{\gamma}{1+\gamma}}}}_{\eqqcolon S_{\mathrm{PS}}(\tilde{p}_{\mathcal{D}}^{(\epsilon)} \| m_{\eta}(\tilde{s}))} \\
    &= - (1-\epsilon) \frac{\mathbb{E}_{\datacleandens}[m_{\eta}(s)^{\gamma}]}{\left(\mathbb{E}_{m_{\eta}}[m_{\eta}(s)^{\gamma}] \right)^{\frac{\gamma}{1+\gamma}}}
       - \epsilon \frac{\mathbb{E}_{\dataflip}[m_{\eta}(s_\textrm{flip})^{\gamma}]}{\left(\mathbb{E}_{m_{\eta}}[m_{\eta}(s_\textrm{flip})^{\gamma}] \right)^{\frac{\gamma}{1+\gamma}}} \\
    &= - (1-\epsilon) \frac{\mathbb{E}_{\datacleandens}[\sigma(g_{\theta}(s))^{\gamma}]}{\left(\mathbb{E}_{\sigma(g_{\theta})}[\sigma(g_{\theta}(s))^{\gamma}] \right)^{\frac{\gamma}{1+\gamma}}}
       - \epsilon \frac{\mathbb{E}_{\dataflip}[\sigma(g_{\theta}(s_\textrm{flip}))^{\gamma}]}{\left(\mathbb{E}_{\sigma(g_{\theta})}[\sigma(g_{\theta}(s_\textrm{flip}))^{\gamma}] \right)^{\frac{\gamma}{1+\gamma}}} \\
    &= (1-\epsilon)S_{\mathrm{PS}}(\dataclean \| \sigma(g_{\theta}(s)))
       - \epsilon \frac{\mathbb{E}_{\dataflip}[\sigma(g_{\theta}(s_\textrm{flip}))^{\gamma}]}{\left(\mathbb{E}_{\sigma(g_{\theta})}[\sigma(g_{\theta}(s_\textrm{flip}))^{\gamma}] \right)^{\frac{\gamma}{1+\gamma}}}.
\end{align*}
Under Assumption~\ref{assump:tail_outlier}, the optimal solution of $\argmin_{\theta} S_{\gamma}(\tilde{p}_{\mathcal{D}}^{(\epsilon)} \| m_{\eta}(\tilde{s}))$ will be close to that of $\argmin_{\theta} S_{\mathrm{PS}}(\dataclean \| \sigma(g_{\theta}(s)))$.  
This implies that H\"{o}lder-DPO with $\phi(h) = -h^{1 + \gamma}$ is \emph{robust to heavy contamination}, since it does not require the contamination ratio $\epsilon$ to be small.

However, this objective function ignores the parameter $\xi$, which was introduced in the extended model to estimate the contamination ratio.  
This is because
\begin{align*}
    S_{\mathrm{PS}}(\tilde{p}_{\mathcal{D}}^{(\epsilon)} \| m_{\eta}(\tilde{s}))
    &= - \frac{\mathbb{E}_{\datacontamidens}[m_{\eta}(\tilde{s})^{\gamma}]}{\left(\mathbb{E}_{m_{\eta}}[m_{\eta}(\tilde{s})^{\gamma}] \right)^{\frac{\gamma}{1+\gamma}}} \\
    &= - \frac{\mathbb{E}_{\datacontamidens}[\sigma(g_{\theta}(\tilde{s}))^{\gamma}]}{\left(\int \sigma(g_{\theta}(\tilde{s}))^{1+\gamma}\mathrm{d}\tilde{s} \right)^{\frac{\gamma}{1+\gamma}}} \\
    &= S_{\mathrm{PS}}(\tilde{p}_{\mathcal{D}}^{(\epsilon)} \| \sigma(g_{\theta}(\tilde{s}))),
\end{align*}
which implies that the parameter $\xi$ in Eq.~\eqref{eq:opt_xi} does not influence the optimization, and therefore cannot serve as an estimator of $(1 - \epsilon)$.  
In fact, even when using the enlarged model $m_{\eta}$, we have, for all $\xi > 0$,
\begin{align*}
    \argmin_{\eta = \{\xi, \theta\}} S_{\mathrm{PS}}(\tilde{p}_{\mathcal{D}}^{(\epsilon)} \| m_{\eta}(\tilde{s}))
    = \argmin_{\theta} S_{\mathrm{PS}}(\tilde{p}_{\mathcal{D}}^{(\epsilon)} \| \sigma(g_{\theta}(\tilde{s}))),
\end{align*}
which confirms that $\xi$ has no effect on the solution.

On the other hand, the enlarged model becomes effective when we use the DP divergence. 
When we set $\phi(h) = \gamma - (1 + \gamma) h$, the H\"{o}lder-DPO objective becomes:
\begin{align*}
    S_{\gamma}(\tilde{p}_{\mathcal{D}}^{(\epsilon)} \| m_{\eta}(\tilde{s}))
    &= \underbrace{\gamma \mathbb{E}_{m_{\eta}}[m_{\eta}(\tilde{s})^{\gamma}] - (1+\gamma) \mathbb{E}_{\datacontamidens}[m_{\eta}(\tilde{s})^{\gamma}]}_{\eqqcolon S_{\mathrm{DP}}(\tilde{p}_{\mathcal{D}}^{(\epsilon)} \| m_{\eta}(\tilde{s}))} \\
    &= \gamma \mathbb{E}_{m_{\eta}}[m_{\eta}(s)^{\gamma}] - (1-\epsilon)(1+\gamma) \mathbb{E}_{\datacleandens}[m_{\eta}(s)^{\gamma}]
       - \epsilon(1+\gamma) \mathbb{E}_{\dataflip}[m_{\eta}(s_\textrm{flip})^{\gamma}] \\
    &= S_{\mathrm{DP}}((1-\epsilon)\datacleandens \| m_{\eta}(s)) - \epsilon(1+\gamma) \xi \mathbb{E}_{\dataflip}[\sigma(g_{\theta}(\tilde{s}))^{\gamma}].
\end{align*}

If $\mathbb{E}_{\dataflip}[\sigma(g_{\theta}(s_\textrm{flip}))^{\gamma}] \approx 0$ around $\theta = \theta^{*}$, then the optimal solution of $\argmin_{\eta} S_{\gamma}(\tilde{p}_{\mathcal{D}}^{(\epsilon)} \| m_{\eta}(\tilde{s}))$ is close to that of $\argmin_{\eta} S_{\mathrm{DP}}((1-\epsilon)\datacleandens \| m_{\eta}(s))$.
Recalling that the DP divergence is strictly proper over the set of non-negative functions~\cite{KANAMORI14}, this implies that minimizing the DP divergence with the extended model allows for estimation of both the target parameter $\theta^{*}$ and the clean-data ratio $1 - \epsilon$.

\section{IF Analysis for the DPO variants (summarized in Theorem~\ref{thm:not_robust})}\label{app:IF_anals_variants}

\subsection{rDPO do not satisfy the redescending property}

The objective of rDPO~\cite{chowdhury24} is as follows:
\begin{align}\label{eq:obj_rdpo}
    \widetilde{\mathcal{L}}_{\mathrm{rDPO}}(\pi_{\theta};\pi_{\mathrm{ref}})
    \coloneqq \frac{(1-c)\mathbb{E}_{\datacontamidens}[-\log \sigma(g_{\theta}(\tilde{s}))]
    - c \mathbb{E}_{\datacontamidens}[-\log \sigma(-g_{\theta}(\tilde{s}))]}{1-2c},
\end{align}
where $0\leq c < 1/2$.

We first show the IF for the rDPO.
\begin{theorem}\label{thm:IF_rdpo}
    Suppose $\theta^*$ denotes the optimal parameters learned from the clean dataset $p_\mathcal{D}$, and $\theta^{*}(\epsilon)$ denotes those learned from the $\epsilon$-contaminated dataset $\datacontamidens$. 
    Let the Hessian $H_{\theta^{*}}^{(\textrm{rDPO})}(s) \coloneqq \nabla_{\theta}^{2}\mathcal{L}_{\mathrm{rDPO}}(s, \pi_{\theta})|_{\theta = \theta^{*}}$ is positive definite.
    Then, the IF for the rDPO is given by:
    \begin{align}
    \label{eq:IF_rdpo}
        \mathrm{IF}_\mathrm{rDPO}(x, \theta, p_{\mathcal{D}})
        = -\bigg( \mathbb{E}_{p_{\mathcal{D}}} \left[ H_{\theta^{*}}^{(\textrm{rDPO})}(s) \right] \bigg)^{-1} \mathbb{E}_{\dataflip}[F_{\theta^{*}}^{(\textrm{rDPO})}(s_\textrm{flip})],
\end{align}
where $F_{\theta^{*}}^{(\textrm{rDPO})}(s_{\textrm{flip}}) \coloneqq \xi_{\theta^{*}}(s_{\textrm{flip}}) \bigg( \nabla_{\theta} \log \pi_{\theta^{*}}(y_{\textrm{win}}^{\textrm{flip}} \mid x) - \nabla_{\theta} \log \pi_{\theta^{*}}(y_{\textrm{lose}}^{\textrm{flip}} \mid x)  \bigg)$
and $\xi_{\theta^{*}}(s_{\textrm{flip}}) \coloneqq \frac{1-c}{1-2c}\sigma(-g_{\theta^{*}}(s_{\textrm{flip}})) + \frac{c}{1-2c}\sigma(g_{\theta^{*}}(s_{\textrm{flip}}))$.

\begin{proof}
    The gradient of Eq.~\eqref{eq:obj_rdpo} under $\datacontamidens$ is given by
    \begin{align*}
        \nabla_{\theta} \widetilde{\mathcal{L}}_{\mathrm{rDPO}}(\pi_{\theta};\pi_{\mathrm{ref}})
        = -\beta \mathbb{E}_{\datacontamidens} \bigg[\xi_{\theta^{*}(\epsilon)}(\tilde{s})  \bigg( \nabla_{\theta} \log \pi_{\theta}(\tilde{y}_{\textrm{win}} \mid \tilde{x}) - \nabla_{\theta} \log \pi_{\theta}(\tilde{y}_{\textrm{lose}} \mid \tilde{x})  \bigg) \bigg],
    \end{align*}
    where
    \begin{align*}
        \xi_{\theta^{*}(\epsilon)}(\tilde{s}) 
        \coloneqq \frac{1-c}{1-2c}\sigma(-g_{\theta^{*}(\epsilon)}(\tilde{s})) + \frac{c}{1-2c}\sigma(g_{\theta^{*}(\epsilon)}(\tilde{s})).
    \end{align*}
    
    From the definition of $\theta^{*}(\epsilon)$, we have $0 = \nabla_{\theta} \widetilde{\mathcal{L}}_{\mathrm{rDPO}}(\pi_{\theta};\pi_{\mathrm{ref}}) |_{\theta = \theta^{*}(\epsilon)}$.
    By taking the derivation of this term w.r.t.~$\epsilon$, we obtain
    \begin{align}\label{eq:grad_eps_rdpo}
        0 &= \frac{\partial}{\partial \epsilon} \nabla_{\theta} \widetilde{\mathcal{L}}_{\mathrm{rDPO}}(\pi_{\theta};\pi_{\mathrm{ref}}) \bigg|_{\theta = \theta^{*}(\epsilon)} \notag \\
        &=-\beta \frac{\partial}{\partial \epsilon}\mathbb{E}_{\datacontamidens} \bigg[ \underbrace{\xi_{\theta^{*}(\epsilon)}(\tilde{s})  \bigg( \nabla_{\theta} \log \pi_{\theta}(\tilde{y}_{\textrm{win}} \mid \tilde{x}) - \nabla_{\theta} \log \pi_{\theta}(\tilde{y}_{\textrm{lose}} \mid \tilde{x})  \bigg)}_{\eqqcolon F_{\theta^{*}(\epsilon)}^{(\textrm{rDPO})}(\tilde{s})} \bigg] \notag \\
        &= -\beta \bigg\{ \int \bigg\{ \frac{\partial}{\partial \epsilon} \datacontami  \bigg\} F_{\theta^{*}(\epsilon)}^{(\textrm{rDPO})}(\tilde{s}) \mathrm{d}\tilde{s} + \mathbb{E}_{\datacontamidens}\bigg[\frac{\partial \theta^{*}(\epsilon)}{\partial \epsilon} H_{\theta^{*}(\epsilon)}^{(\textrm{rDPO})}(\tilde{s}) \bigg] \bigg\},
    \end{align}
    where $H_{\theta^{*}(\epsilon)}^{(\textrm{rDPO})}(\tilde{s}) \coloneqq \frac{\partial F_{\theta^{*}(\epsilon)}^{(\textrm{rDPO})}(\tilde{s})}{\partial \theta^{*}(\epsilon)}$.

    From Definition~\ref{def:contami_model}, we obtain
    \begin{align*}
        \int \bigg\{ \frac{\partial}{\partial \epsilon} \datacontami  \bigg\} F_{\theta^{*}(\epsilon)}^{(\textrm{rDPO})}(\tilde{s}) \mathrm{d}\tilde{s}
        = \mathbb{E}_{\dataflip}[F_{\theta^{*}(\epsilon)}^{(\textrm{rDPO})}(s_\textrm{flip})] - \mathbb{E}_{p_{\mathcal{D}}} \bigg[ F_{\theta^{*}(\epsilon)}^{(\textrm{rDPO})}(s) \bigg],
    \end{align*}
    where $F_{\theta^{*}(\epsilon)}^{(\textrm{rDPO})}(s_\textrm{flip}) \coloneqq \xi_{\theta^{*}(\epsilon)}(s_\textrm{flip})( \nabla_{\theta} \log \pi_{\theta^{*}(\epsilon)}(y_{\textrm{win}}^{\textrm{flip}} \mid x) - \nabla_{\theta} \log \pi_{\theta^{*}(\epsilon)}(y_{\textrm{lose}}^{\textrm{flip}} \mid x))$.
    By taking $\epsilon \rightarrow 0$, we have 
    \begin{align*}
        \bigg(\int \bigg\{ \frac{\partial}{\partial \epsilon} \datacontami  \bigg\} F_{\theta^{*}(\epsilon)}^{(\textrm{rDPO})}(\tilde{s}) \mathrm{d}\tilde{s} \bigg) \bigg|_{\epsilon=0}
        = \mathbb{E}_{\dataflip}[F_{\theta^{*}(\epsilon)}^{(\textrm{rDPO})}(s_\textrm{flip})],
    \end{align*}
    since $\theta^{(*)}(\epsilon) \rightarrow \theta^{(*)}$ and thus $\mathbb{E}_{p_{\mathcal{D}}} [ F_{\theta^{*}}^{(\textrm{rDPO})}(s) ] = \nabla_{\theta} \mathcal{L}_{\mathrm{rDPO}}(\pi_{\theta}; \pi_{\mathrm{ref}}) |_{\theta = \theta^{*}} = 0$ from the first-order optimal condition.

    Furthermore, we also obtain
    \begin{align*}
    \mathbb{E}_{\datacontamidens}\bigg[\frac{\partial \theta^{*}(\epsilon)}{\partial \epsilon} H_{\theta^{*}(\epsilon)}^{(\textrm{rDPO})}(\tilde{s}) \bigg] \bigg|_{\epsilon=0}
    = \mathbb{E}_{p_{\mathcal{D}}}\bigg[\frac{\partial \theta^{*}(\epsilon)}{\partial \epsilon} H_{\theta^{*}}^{(\textrm{rDPO})}(s) \bigg],
    \end{align*}
    where $H_{\theta^{*}}^{(\textrm{rDPO})}(s) \coloneqq \frac{\partial F_{\theta^{*}}^{(\textrm{rDPO})}(s)}{\partial \theta^{*}}$.

    Then, Eq.~\eqref{eq:grad_eps_rdpo} under $\epsilon \rightarrow 0$ can be rewritten as
    \begin{align*}
        0 = \bigg(\frac{\partial}{\partial \epsilon} \nabla_{\theta} \widetilde{\mathcal{L}}_{\mathrm{rDPO}}(\pi_{\theta};\pi_{\mathrm{ref}}) \bigg|_{\theta = \theta^{*}(\epsilon)}\bigg) \bigg|_{\epsilon=0}
        = -\beta \bigg\{ \mathbb{E}_{\dataflip}[F_{\theta^{*}(\epsilon)}^{(\textrm{rDPO})}(s_\textrm{flip})] + \mathbb{E}_{p_{\mathcal{D}}}\bigg[\frac{\partial \theta^{*}(\epsilon)}{\partial \epsilon}\bigg|_{\epsilon=0} H_{\theta^{*}}^{(\textrm{rDPO})}(s) \bigg] \bigg\}.
    \end{align*}

    By solving the above equality w.r.t.~$\frac{\partial \theta^{*}(\epsilon)}{\partial \epsilon}$, we obtain
    \begin{align*}
        \frac{\partial \theta^{*}(\epsilon)}{\partial \epsilon}\bigg|_{\epsilon=0} = -\bigg( \mathbb{E}_{p_{\mathcal{D}}} \left[ H_{\theta^{*}}^{(\textrm{rDPO})}(s) \right] \bigg)^{-1} \mathbb{E}_{\dataflip}[F_{\theta^{*}(\epsilon)}^{(\textrm{rDPO})}(s_\textrm{flip})].
    \end{align*}
    This completes the proof.
\end{proof}
\end{theorem}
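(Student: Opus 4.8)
The plan is to treat $\theta^{*}(\epsilon)$ as an $M$-estimator defined implicitly by the first-order stationarity of the rDPO objective, and to recover its sensitivity to $\epsilon$ by differentiating that stationarity condition in $\epsilon$ — exactly the strategy used for vanilla DPO in Theorem~\ref{thm:IF_param_dpo}. The only rDPO-specific ingredient is the per-sample score, so I would begin by differentiating the integrand of Eq.~\eqref{eq:obj_rdpo}. Using $\nabla_\theta[-\log\sigma(g_\theta)] = -\sigma(-g_\theta)\,\nabla_\theta g_\theta$ and $\nabla_\theta[-\log\sigma(-g_\theta)] = \sigma(g_\theta)\,\nabla_\theta g_\theta$, the two terms of Eq.~\eqref{eq:obj_rdpo} combine into a single weighted score with weight $\xi_\theta(\tilde{s}) = \tfrac{1-c}{1-2c}\sigma(-g_\theta(\tilde{s})) + \tfrac{c}{1-2c}\sigma(g_\theta(\tilde{s}))$, so that $F^{(\textrm{rDPO})}_\theta(\tilde{s}) = \xi_\theta(\tilde{s})\,(\nabla_\theta\log\pi_\theta(\tilde{y}_{\textrm{win}}\mid\tilde{x}) - \nabla_\theta\log\pi_\theta(\tilde{y}_{\textrm{lose}}\mid\tilde{x}))$, with an overall scalar $-\beta$ factored out. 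This reproduces the score asserted in the statement.

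With the gradient in hand, the second step imposes $0 = \nabla_\theta\widetilde{\mathcal{L}}_{\mathrm{rDPO}}|_{\theta=\theta^{*}(\epsilon)}$ as an identity in $\epsilon$ and differentiates both sides. Two contributions arise: the explicit $\epsilon$-dependence of the contaminated density and the implicit dependence through $\theta^{*}(\epsilon)$. For the former I would use $\partial_\epsilon\datacontami = \dataflip - \dataclean$ (from Definition~\ref{def:contami_model}) to write $\int(\partial_\epsilon\datacontami)\,F^{(\textrm{rDPO})}_{\theta^{*}(\epsilon)}\,\mathrm{d}\tilde{s} = \mathbb{E}_{\dataflip}[F^{(\textrm{rDPO})}_{\theta^{*}(\epsilon)}(s_\textrm{flip})] - \mathbb{E}_{p_{\mathcal{D}}}[F^{(\textrm{rDPO})}_{\theta^{*}(\epsilon)}(s)]$; for the latter, the chain rule produces $\mathbb{E}_{\datacontamidens}[\partial_\epsilon\theta^{*}(\epsilon)\,H^{(\textrm{rDPO})}_{\theta^{*}(\epsilon)}(\tilde{s})]$, where $H^{(\textrm{rDPO})}_{\theta^{*}(\epsilon)} = \partial F^{(\textrm{rDPO})}_{\theta^{*}(\epsilon)}/\partial\theta^{*}(\epsilon)$ is the loss Hessian.

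The third step evaluates the identity at $\epsilon=0$. Since $\theta^{*}(\epsilon)\to\theta^{*}$, the clean-data term drops out: $\mathbb{E}_{p_{\mathcal{D}}}[F^{(\textrm{rDPO})}_{\theta^{*}}(s)]$ is precisely the gradient of the clean rDPO loss at its optimum $\theta^{*}$, hence zero by first-order optimality. What survives is the linear relation $0 = \mathbb{E}_{\dataflip}[F^{(\textrm{rDPO})}_{\theta^{*}}(s_\textrm{flip})] + \mathbb{E}_{p_{\mathcal{D}}}[\partial_\epsilon\theta^{*}(\epsilon)|_{\epsilon=0}\,H^{(\textrm{rDPO})}_{\theta^{*}}(s)]$, which I solve by inverting $\mathbb{E}_{p_{\mathcal{D}}}[H^{(\textrm{rDPO})}_{\theta^{*}}(s)]$ — invertible by the assumed positive-definiteness of the Hessian — to obtain the claimed formula in Eq.~\eqref{eq:IF_rdpo}.

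The routine but genuinely necessary care lies in justifying the interchange of differentiation with the integral over $\tilde{s}$ and with the expectations (a dominated-convergence argument using boundedness of $\nabla_\theta g_\theta$ and smoothness of $\sigma$), and in verifying that the common scalar $-\beta$ multiplies both surviving terms so that it cancels upon solving. I do not anticipate any substantive obstacle beyond this bookkeeping: the argument is structurally identical to the DPO derivation, and since the redescending (non-robustness) analysis of rDPO is handled separately, here the weight $\xi_\theta$ only needs to be \emph{computed}, not bounded.
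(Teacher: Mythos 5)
Your proposal is correct and follows essentially the same route as the paper's proof: compute the weighted per-sample score $\xi_\theta(\tilde{s})\,(\nabla_\theta\log\pi_\theta(y_{\textrm{win}}\mid x)-\nabla_\theta\log\pi_\theta(y_{\textrm{lose}}\mid x))$, differentiate the stationarity condition in $\epsilon$ using $\partial_\epsilon\datacontami=\dataflip-\dataclean$, kill the clean-data term at $\epsilon=0$ by first-order optimality, and invert the positive-definite Hessian. The weight $\xi_\theta$ you derive matches the paper's exactly, and the bookkeeping points you flag (dominated convergence, cancellation of $-\beta$) are the same ones the paper handles implicitly.
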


\begin{corollary}[rDPO is not robust]\label{cor:nonrobust_rdpo}
    Suppose that the policy gradient $\nabla_{\theta} \log\pi_{\theta}(y \mid x)$ is bounded by $C$ and satisfies $L$-Lipchitz in $\theta$, where $0<C<\infty$ and $0< L < \infty$.
    Let $0 \leq c < 1/2$.
    Then, under Theorem~\ref{thm:IF_rdpo}, the IF of rDPO do not satisfy the robustness condition in Definition~\ref{def:robustness}, i.e., 
    $\lim_{\hat{r}_{\theta^{*}}(x, y_{\mathrm{lose}}^\textrm{flip}) \to \infty} \|\mathrm{IF}_\mathrm{rDPO}(x, \theta, p_{\mathcal{D}})\| \neq 0$.
    \begin{proof}
    From Theorem~\ref{thm:IF_rdpo}, the IF for rDPO is given by 
    \begin{align*}
        \mathrm{IF}_\mathrm{rDPO} = 
        -\bigg( \mathbb{E}_{p_{\mathcal{D}}} \left[ H_{\theta^{*}}^{(\textrm{rDPO})}(s) \right] \bigg)^{-1} \mathbb{E}_{\dataflip}[F_{\theta^{*}(\epsilon)}^{(\textrm{rDPO})}(s_\textrm{flip})].
    \end{align*}

    From the positive definite assumption on the Hessian $H_{\theta^{*}}^{(\textrm{rDPO})}(s)$, it follows that its expectation $\mathbb{E}_{\datacleandens}[H_{\theta^*}^{(\textrm{rDPO})}(s)]$ is also a positive definite matrix.
    Let $L' = \lambda_{\mathrm{min}}(\mathbb{E}_{\datacleandens}[H_{\theta^*}^{(\textrm{rDPO})}(s)]) > 0$ be its minimum eigenvalue. Then the norm of its inverse is bounded: $\|(\mathbb{E}_{\datacleandens}[H_{\theta^*}^{(\textrm{rDPO})}(s)])^{-1}\| \le 1/L'$.
    
    Furthermore, from the assumption that $\|\nabla_{\theta} \log \pi_{\theta}(y \mid x)\| \leq C$, we have $\| \nabla_{\theta}\log \pi_{\theta}(y_{\text{win}}^{\textrm{flip}} \mid x) - \nabla_{\theta}\log \pi_{\theta}(y_{\text{lose}}^{\textrm{flip}} \mid x)\| \leq 2C$.
    
    Taking the limit required by Definition~\ref{def:robustness}, we have:
    \begin{align*}
        \lim_{\hat{r}_{\theta^{*}}(x, y_{\mathrm{lose}}^\textrm{flip}) \to \infty} &\|\mathrm{IF}_\mathrm{rDPO}\| \\
        &\leq \lim_{\hat{r}_{\theta^{*}}(x, y_{\mathrm{lose}}^\textrm{flip}) \to \infty} \bigg\| \bigg( \mathbb{E}_{p_{\mathcal{D}}} \left[ H_{\theta^{*}}^{(\textrm{rDPO})}(s) \right] \bigg)^{-1} \bigg\| \cdot \lim_{\hat{r}_{\theta^{*}}(x, y_{\mathrm{lose}}^\textrm{flip}) \to \infty} \bigg\| \mathbb{E}_{\dataflip}[F_{\theta^{*}(\epsilon)}^{(\textrm{rDPO})}(s_\textrm{flip})] \bigg\| \\
        &\leq (1/L') \cdot \lim_{\hat{r}_{\theta^{*}}(x, y_{\mathrm{lose}}^\textrm{flip}) \to \infty} \mathbb{E}_{\dataflip}\bigg[ \underbrace{\xi_{\theta^{*}}(s_{\textrm{flip}})}_{\text{IF Weight}} \cdot \underbrace{\|\nabla_{\theta}\log \pi_{\theta}(y_{\text{win}}^{\textrm{flip}} \mid x) - \nabla_{\theta}\log \pi_{\theta}(y_{\text{lose}}^{\textrm{flip}} \mid x)\|}_{\le 2C} \bigg] \\
        &\leq (1/L') \cdot \lim_{\hat{r}_{\theta^{*}}(x, y_{\mathrm{lose}}^\textrm{flip}) \to \infty} \mathbb{E}_{\dataflip}[ \xi_{\theta^{*}}(s_{\textrm{flip}}) ] \cdot 2C,
    \end{align*}
    where
    \begin{align*}
        \xi_{\theta^{*}}(s_{\textrm{flip}}) = \frac{1-c}{1-2c}\sigma(-g_{\theta}(s_{\mathrm{flip}})) + \frac{c}{1-2c}\sigma(g_{\theta}(s_{\mathrm{flip}})).
    \end{align*}
    We now evaluate the limit of the IF weight $\xi_{\theta^{*}}(s_{\textrm{flip}})$:
    \begin{align*}
        \lim_{\hat{r}_{\theta^{*}}(x, y_{\mathrm{lose}}^\textrm{flip}) \to \infty}  \xi_{\theta^{*}}(s_{\textrm{flip}}) 
        = \left( \frac{1-c}{1-2c} \cdot 1 \right) + \left( \frac{c}{1-2c} \cdot 0 \right) = \frac{1-c}{1-2c}.
    \end{align*}
    By the bounded convergence theorem, the limit of the expectation is the expectation of the limit. Thus, the IF limit is upper bounded by:
    \begin{align*}
        \lim_{\hat{r}_{\theta^{*}}(x, y_{\mathrm{lose}}^\textrm{flip}) \to \infty} \|\mathrm{IF}_\mathrm{rDPO}\| \leq (1/L') \cdot \left( \frac{1-c}{1-2c} \right) \cdot 2C = \frac{2C(1-c)}{L'(1-2c)}.
    \end{align*}
    The fact $0 < \frac{2C(1-c)}{L'(1-2c)} < \infty$ (according to $0<C<\infty$, $0 \leq c < 1/2$, and $0<L'<\infty$) completes the proof.
    \end{proof}
\end{corollary}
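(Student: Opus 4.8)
The plan is to start from the closed-form influence function for rDPO established in Theorem~\ref{thm:IF_rdpo}, namely $\mathrm{IF}_\mathrm{rDPO} = -\big(\mathbb{E}_{\datacleandens}[H^{(\textrm{rDPO})}_{\theta^*}(s)]\big)^{-1}\,\mathbb{E}_{\dataflip}[F^{(\textrm{rDPO})}_{\theta^*}(s_\textrm{flip})]$, and to analyze the adversarial limit of Definition~\ref{def:robustness}. First I would translate that limit into reward space: as noted after Definition~\ref{def:robustness}, $\hat{r}_{\theta^*}(x, y^\textrm{flip}_\textrm{lose}) \to \infty$ forces $g_{\theta^*}(s_\textrm{flip}) \to -\infty$, so $\sigma(-g_{\theta^*}(s_\textrm{flip})) \to 1$ and $\sigma(g_{\theta^*}(s_\textrm{flip})) \to 0$. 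Substituting these limits into the scalar IF weight $\xi_{\theta^*}(s_\textrm{flip}) = \frac{1-c}{1-2c}\sigma(-g_{\theta^*}(s_\textrm{flip})) + \frac{c}{1-2c}\sigma(g_{\theta^*}(s_\textrm{flip}))$ yields the pointwise limit $\xi_{\theta^*}(s_\textrm{flip}) \to \frac{1-c}{1-2c}$, a strictly positive constant for every $0 \le c < 1/2$. This saturation at a nonzero value is the crux: in contrast to a redescending weight, the rDPO weight never decays to zero, even for the worst possible flipped sample.

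Next I would dispatch the matrix factors. The positive-definiteness of the Hessian assumed in Theorem~\ref{thm:IF_rdpo} guarantees that $\mathbb{E}_{\datacleandens}[H^{(\textrm{rDPO})}_{\theta^*}(s)]$ is invertible with a bounded inverse, whose norm is controlled by the reciprocal of the smallest eigenvalue $L'$, while the $C$-boundedness of $\nabla_\theta \log\pi_\theta$ bounds the direction term $\|\nabla_\theta \log\pi_{\theta^*}(y^\textrm{flip}_\textrm{win}\mid x) - \nabla_\theta \log\pi_{\theta^*}(y^\textrm{flip}_\textrm{lose}\mid x)\| \le 2C$ by the triangle inequality. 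Since the integrand $F^{(\textrm{rDPO})}_{\theta^*}(s_\textrm{flip})$ is then uniformly bounded, I would invoke the bounded convergence theorem to exchange the limit with $\mathbb{E}_{\dataflip}[\cdot]$, reducing the problem to the pointwise weight limit computed above.

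The main obstacle is that the redescending property requires the IF norm to equal \emph{exactly} zero in the limit, so falsifying it demands a nonvanishing (lower-bound) argument rather than mere boundedness; a positive upper bound of the form $\|\mathrm{IF}_\mathrm{rDPO}\| \le 2C(1-c)/(L'(1-2c))$ establishes finiteness but does not by itself rule out a zero limit. To close this gap I would argue directly that the limiting contribution of the flipped sample does not collapse: the IF weight converges to the nonzero constant $\frac{1-c}{1-2c}$, the gradient-difference direction is generically nonzero, and premultiplication by the invertible inverse Hessian cannot annihilate a nonzero vector, so $\mathbb{E}_{\dataflip}[F^{(\textrm{rDPO})}_{\theta^*}(s_\textrm{flip})]$ retains a persistent nonzero component. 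This is precisely the structural feature that the extra factor $\sigma(g_{\theta^*})^\gamma$ in Hölder-DPO removes (Corollary~\ref{cor:robust_h_dpo}), and contrasting the two makes the failure transparent.
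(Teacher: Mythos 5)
Your computation mirrors the paper's own proof almost step for step: the same IF formula from Theorem~\ref{thm:IF_rdpo}, the same $1/L'$ bound on the inverse Hessian via the minimum eigenvalue, the same $2C$ bound on the gradient difference by the triangle inequality, the same pointwise limit $\xi_{\theta^*}(s_\textrm{flip}) \to \tfrac{1-c}{1-2c}$, and the same appeal to bounded convergence. Where you genuinely depart from the paper is in your final paragraph, and this is to your credit: you correctly observe that an upper bound of the form $\lim \|\mathrm{IF}_\mathrm{rDPO}\| \le 2C(1-c)/(L'(1-2c))$ establishes only finiteness, not non-vanishing, whereas the corollary claims the limit is \emph{not equal to zero}. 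The paper's proof stops exactly at this upper bound and declares the result proved, so it shares the gap you identified rather than closing it. Your proposed patch --- the weight saturates at a strictly positive constant, the gradient-difference direction is nonzero, and an invertible matrix cannot annihilate a nonzero vector --- is the right structural argument, but as written it is not yet airtight: ``generically nonzero'' is doing real work, since nothing in the stated hypotheses excludes the degenerate case where $\mathbb{E}_{\dataflip}[\nabla_\theta\log\pi_{\theta^*}(y^\textrm{flip}_\textrm{win}\mid x) - \nabla_\theta\log\pi_{\theta^*}(y^\textrm{flip}_\textrm{lose}\mid x)]$ vanishes in the limit. To make the lower bound rigorous you would need to add an explicit non-degeneracy assumption on that expected direction (or restrict to a point-mass $\dataflip$ where it is manifestly nonzero), and then conclude $\|\mathrm{IF}_\mathrm{rDPO}\| \ge \tfrac{1-c}{1-2c}\cdot\|(\mathbb{E}_{\datacleandens}[H^{(\textrm{rDPO})}_{\theta^*}(s)])^{-1} v\| > 0$ for that nonzero limiting direction $v$, using the smallest singular value of the inverse Hessian. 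With that assumption made explicit, your argument would be strictly more complete than the one in the paper.
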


\subsection{cDPO do not satisfy the redescending property}
The objective of cDPO~\cite{mitchell2023note} is as follows:
\begin{align}\label{eq:obj_cdpo}
    \widetilde{\mathcal{L}}_{\mathrm{cDPO}}(\pi_{\theta};\pi_{\mathrm{ref}})
    \coloneqq (1-c)\mathbb{E}_{\datacontamidens}[-\log \sigma(g_{\theta}(\tilde{s}))]
    - c \mathbb{E}_{\datacontamidens}[-\log \sigma(-g_{\theta}(\tilde{s}))].
\end{align}

We first show the IF for the cDPO.
\begin{theorem}\label{thm:IF_cdpo}
    Suppose $\theta^*$ denotes the optimal parameters learned from the clean dataset $p_\mathcal{D}$, and $\theta^{*}(\epsilon)$ denotes those learned from the $\epsilon$-contaminated dataset $\datacontamidens$. 
    Let the Hessian $H_{\theta^{*}}^{(\textrm{cDPO})}(s) \coloneqq \nabla_{\theta}^{2}\mathcal{L}_{\mathrm{cDPO}}(s, \pi_{\theta})|_{\theta = \theta^{*}}$ is positive definite.
    Then, the IF for the rDPO is given by:
    \begin{align}
    \label{eq:IF_cdpo}
        \mathrm{IF}_\mathrm{cDPO}(x, \theta, p_{\mathcal{D}})
        = -\bigg( \mathbb{E}_{p_{\mathcal{D}}} \left[ H_{\theta^{*}}^{(\textrm{cDPO})}(s) \right] \bigg)^{-1} \mathbb{E}_{\dataflip}[F_{\theta^{*}}^{(\textrm{cDPO})}(s_\textrm{flip})],
\end{align}
where $F_{\theta^{*}}^{(\textrm{cDPO})}(s_{\textrm{flip}}) \coloneqq \xi_{\theta^{*}}(s_{\textrm{flip}}) \bigg( \nabla_{\theta} \log \pi_{\theta^{*}}(y_{\textrm{win}}^{\textrm{flip}} \mid x) - \nabla_{\theta} \log \pi_{\theta^{*}}(y_{\textrm{lose}}^{\textrm{flip}} \mid x)  \bigg)$
and $\xi_{\theta^{*}}(s_{\textrm{flip}}) \coloneqq (1-c)\sigma(-g_{\theta^{*}}(s_{\textrm{flip}})) + c\sigma(g_{\theta^{*}}(s_{\textrm{flip}}))$.
\begin{proof}
    The proof follows from the same argument as in Theorem~\ref{thm:IF_rdpo}, ignoring the $(1 - 2c)$ term in the denominator.
\end{proof}
\end{theorem}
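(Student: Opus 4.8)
The plan is to exploit the fact that the cDPO objective in Eq.~\eqref{eq:obj_cdpo} coincides with the rDPO objective in Eq.~\eqref{eq:obj_rdpo} except for the missing $1/(1-2c)$ normalization, so the entire influence-function derivation of Theorem~\ref{thm:IF_rdpo} transfers almost verbatim, with the only change surfacing in the per-sample weight $\xi_{\theta^*}$. Concretely, I would treat $\theta^{*}(\epsilon)$ as an $M$-estimator defined implicitly through the stationarity condition $0 = \nabla_{\theta}\widetilde{\mathcal{L}}_{\mathrm{cDPO}}(\pi_{\theta};\pi_{\mathrm{ref}})|_{\theta=\theta^{*}(\epsilon)}$, and recover the IF by implicit differentiation in $\epsilon$, exactly along the lines of the proofs of Theorems~\ref{thm:IF_param_dpo} and~\ref{thm:IF_rdpo}.

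First I would compute the gradient of the two summands. Differentiating $-\log\sigma(g_{\theta})$ produces the weight $\sigma(-g_{\theta})$, while differentiating $-\log\sigma(-g_{\theta})$ produces $+\sigma(g_{\theta})$; combining these with the coefficients $(1-c)$ and $-c$ gives
\[
\nabla_{\theta}\widetilde{\mathcal{L}}_{\mathrm{cDPO}}
= -\beta\,\mathbb{E}_{\datacontamidens}\big[\xi_{\theta}(\tilde{s})\,\big(\nabla_{\theta}\log\pi_{\theta}(\tilde{y}_{\mathrm{win}}\mid\tilde{x}) - \nabla_{\theta}\log\pi_{\theta}(\tilde{y}_{\mathrm{lose}}\mid\tilde{x})\big)\big],
\]
with $\xi_{\theta}(\tilde{s}) = (1-c)\sigma(-g_{\theta}(\tilde{s})) + c\,\sigma(g_{\theta}(\tilde{s}))$, which is precisely the rDPO weight with the $1/(1-2c)$ factor dropped. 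I would then differentiate this stationarity condition in $\epsilon$, applying the chain rule to generate the Hessian term $H^{(\textrm{cDPO})}_{\theta^{*}(\epsilon)}$, and use $\partial_{\epsilon}\datacontami = \dataflip - \dataclean$ from Definition~\ref{def:contami_model} to split the remaining term into an $\mathbb{E}_{\dataflip}[F^{(\textrm{cDPO})}_{\theta^{*}}(s_\textrm{flip})]$ piece and an $\mathbb{E}_{\datacleandens}[F^{(\textrm{cDPO})}_{\theta^{*}}(s)]$ piece. Evaluating at $\epsilon=0$, the clean-data piece vanishes by the first-order optimality of $\theta^{*}$ on $p_{\mathcal{D}}$, and inverting the (positive-definite by assumption) Hessian $\mathbb{E}_{p_{\mathcal{D}}}[H^{(\textrm{cDPO})}_{\theta^{*}}(s)]$ yields the claimed formula Eq.~\eqref{eq:IF_cdpo}.

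The only genuinely delicate points are bookkeeping rather than conceptual, so I expect the ``hard part'' to be sign discipline rather than any new estimate. The critical step is the second summand: because its coefficient is $-c$ and $\nabla_{\theta}[-\log\sigma(-g_{\theta})] = +\sigma(g_{\theta})\nabla_{\theta}g_{\theta}$, the two signs combine to contribute a $+c\,\sigma(g_{\theta})$ term (not $-c$) to $\xi_{\theta}$; this is exactly what makes the cDPO weight differ from the plain DPO weight and, downstream, pushes the redescending limit \emph{below} the DPO value (since $\xi_{\theta^{*}}\to 1-c$ as $\hat{r}_{\theta^{*}}(x,y_{\mathrm{lose}}^{\textrm{flip}})\to\infty$), in contrast to rDPO's $\tfrac{1-c}{1-2c}>1$. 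The remaining regularity obligations---using the dominated convergence theorem to interchange $\partial_{\epsilon}$ with the expectation and the defining integral, and invoking the positive-definiteness of $H^{(\textrm{cDPO})}_{\theta^{*}}$ to justify the matrix inverse---are identical to the rDPO argument, and I would only need to verify that dropping the constant $1/(1-2c)$ factor rescales neither the definiteness of the Hessian nor the location of the stationary point.
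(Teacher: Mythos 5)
Your proposal is correct and takes essentially the same route as the paper: the paper's own proof is literally the one-line remark that the argument of Theorem~\ref{thm:IF_rdpo} carries over with the $1/(1-2c)$ factor dropped, which is exactly the reduction you make, and your explicit gradient computation yielding the weight $\xi_{\theta}(\tilde{s}) = (1-c)\sigma(-g_{\theta}(\tilde{s})) + c\,\sigma(g_{\theta}(\tilde{s}))$ (with the correct $+c$ sign and limit $\xi_{\theta^{*}} \to 1-c$) matches the stated result. You simply spell out the bookkeeping the paper leaves implicit.
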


\begin{corollary}[cDPO is not robust]
    Suppose that the policy gradient $\nabla_{\theta} \log\pi_{\theta}(y \mid x)$ is bounded by $C$ and satisfies $L$-Lipchitz in $\theta$, where $0<C<\infty$ and $0< L < \infty$.
    Let $0 \leq c < 1$.
    Then, under Theorem~\ref{thm:IF_cdpo}, the IF of cDPO do not satisfy the robustness condition in Definition~\ref{def:robustness}, i.e., 
    $\lim_{\hat{r}_{\theta^{*}}(x, y_{\mathrm{lose}}^\textrm{flip}) \to \infty} \|\mathrm{IF}_\mathrm{cDPO}(x, \theta, p_{\mathcal{D}})\| \neq 0$.
    \begin{proof}
        By following the proof in Corollary~\ref{cor:nonrobust_rdpo} and ignoring the $(1 - 2c)$ term in the denominator, we have $0 < \mathbb{E}_{\datacleandens}[H_{\theta^*}^{(\textrm{cDPO})}(s)] \leq \xi_{\theta^{*}}(s)\cdot L < \infty$ and thus $\lim_{\hat{r}_{\theta^{*}}(x, y_{\mathrm{lose}}^\textrm{flip}) \to \infty} \|\mathrm{IF}_\mathrm{cDPO}(x, \theta, p_{\mathcal{D}})\|_{2} \leq 2C(1-c)/L'$, where $0<L'\leq \mathbb{E}_{\datacleandens}[H_{\theta^*}(s)]$ and we use the fact that $\lim_{\hat{r}_{\theta^{*}}(x, y_{\mathrm{lose}}^\textrm{flip}) \to \infty}\xi_{\theta^{*}}(s_{\textrm{flip}}) = 1-c$.
        The fact $0<2C(1-c)/L'< \infty$ according to $0< C < \infty$, $0 \leq c < 1$, and $0<L'<\infty$ completes the proof.
    \end{proof}
\end{corollary}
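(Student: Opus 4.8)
The plan is to start from the closed form of the influence function supplied by Theorem~\ref{thm:IF_cdpo}, namely $\mathrm{IF}_\mathrm{cDPO} = -\bigl(\mathbb{E}_{p_{\mathcal{D}}}[H^{(\mathrm{cDPO})}_{\theta^{*}}(s)]\bigr)^{-1}\,\mathbb{E}_{\dataflip}[F^{(\mathrm{cDPO})}_{\theta^{*}}(s_\textrm{flip})]$, and to isolate the only factor that depends on the adversarial limit: the scalar IF weight $\xi_{\theta^{*}}(s_\textrm{flip}) = (1-c)\,\sigma(-g_{\theta^{*}}(s_\textrm{flip})) + c\,\sigma(g_{\theta^{*}}(s_\textrm{flip}))$. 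The update direction $\nabla_{\theta}\log\pi_{\theta^{*}}(y_{\mathrm{win}}^\textrm{flip}\mid x) - \nabla_{\theta}\log\pi_{\theta^{*}}(y_{\mathrm{lose}}^\textrm{flip}\mid x)$ is bounded (by $2C$ via the triangle inequality) and does not decay, so whether Definition~\ref{def:robustness} holds is governed entirely by the limiting behaviour of $\xi_{\theta^{*}}$.

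Next I would compute that limit. The condition $\hat{r}_{\theta^{*}}(x,y_{\mathrm{lose}}^\textrm{flip})\to\infty$ forces $g_{\theta^{*}}(s_\textrm{flip}) = \hat{r}_{\theta^{*}}(x,y_{\mathrm{win}}^\textrm{flip}) - \hat{r}_{\theta^{*}}(x,y_{\mathrm{lose}}^\textrm{flip})\to-\infty$, whence $\sigma(-g_{\theta^{*}})\to 1$ and $\sigma(g_{\theta^{*}})\to 0$. Therefore $\xi_{\theta^{*}}(s_\textrm{flip})\to 1-c$. Because $0\le c<1$ gives $1-c>0$, the weight converges to a strictly positive constant rather than to zero---this is precisely the contrast with H\"{o}lder-DPO, whose weight $\sigma(g_{\theta^{*}})^{\gamma}\to 0$ drives the redescending in Corollary~\ref{cor:robust_h_dpo}.

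The remaining steps mirror the argument of Corollary~\ref{cor:nonrobust_rdpo}, simply dropping the $(1-2c)$ normaliser. Positive definiteness of the Hessian gives $\|(\mathbb{E}_{p_{\mathcal{D}}}[H^{(\mathrm{cDPO})}_{\theta^{*}}(s)])^{-1}\|\le 1/L'$ with $L'=\lambda_{\min}>0$; the bounded, $L$-Lipschitz gradient assumption controls the direction term by $2C$; and the bounded convergence theorem lets me pass the limit inside $\mathbb{E}_{\dataflip}[\cdot]$, reducing the limiting influence to a quantity scaled by $\xi_{\theta^{*}}\to 1-c$ and yielding $\lim\|\mathrm{IF}_\mathrm{cDPO}\|\le 2C(1-c)/L'\in(0,\infty)$.

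The main obstacle is conceptual rather than computational: the upper bound $2C(1-c)/L'$ only certifies \emph{boundedness}, whereas violating the redescending property requires the norm to \emph{fail to vanish}. The honest core of the argument is the weight limit $\xi_{\theta^{*}}\to 1-c>0$ together with the non-annihilation of the update direction; the cleanest rigorous route is to observe that the integrand's multiplicative factor tends to a strictly positive constant, so unless the direction term vanishes in $\dataflip$-expectation (which it does not generically, since the clean-data optimality condition balances only the clean gradient, not its flipped counterpart) the limiting IF is nonzero. I would flag this gap explicitly, and if a fully quantitative lower bound were demanded I would exhibit a single $s_\textrm{flip}$ whose limiting contribution is nonzero to force $\mathbb{E}_{\dataflip}[F^{(\mathrm{cDPO})}_{\theta^{*}}]\neq 0$.
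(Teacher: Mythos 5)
Your proposal follows essentially the same route as the paper: isolate the IF weight $\xi_{\theta^{*}}(s_\textrm{flip}) = (1-c)\sigma(-g_{\theta^{*}}) + c\,\sigma(g_{\theta^{*}})$, compute its limit $1-c > 0$, bound the direction term by $2C$ and the inverse Hessian norm by $1/L'$, and arrive at the same bound $2C(1-c)/L'$ via bounded convergence. The ``gap'' you flag at the end is real, but it is not a defect of your write-up relative to the paper: the paper's own proof likewise only establishes the \emph{upper} bound $2C(1-c)/L'$ and then declares the proof complete, which strictly speaking certifies boundedness rather than non-vanishing of the limit. Your proposed repair --- arguing that the limiting weight is a strictly positive constant while $\mathbb{E}_{\dataflip}$ of the gradient-difference term does not vanish (since first-order optimality balances only the clean-data gradient) --- is the right way to close that gap, and would in fact strengthen the argument beyond what the paper provides.
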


\subsection{IPO do not satisfy the redescending property}
The objective of IPO~\cite{azar24a} is as follows:
\begin{align}\label{eq:obj_ipo}
    \widetilde{\mathcal{L}}_{\mathrm{IPO}}(\pi_{\theta};\pi_{\mathrm{ref}})
    \coloneqq \mathbb{E}_{\datacontamidens}\bigg[\bigg(\frac{g_{\theta}(\tilde{s})}{\beta} - \frac{1}{2\beta}\bigg)^{2}\bigg].
\end{align}

We first show the IF for the IPO.
\begin{theorem}\label{thm:IF_ipo}
    Suppose $\theta^*$ denotes the optimal parameters learned from the clean dataset $p_\mathcal{D}$, and $\theta^{*}(\epsilon)$ denotes those learned from the $\epsilon$-contaminated dataset $\datacontamidens$. 
    Let the Hessian $H_{\theta^{*}}^{(\textrm{IPO})}(s) \coloneqq \nabla_{\theta}^{2}\mathcal{L}_{\mathrm{IPO}}(s, \pi_{\theta})|_{\theta = \theta^{*}}$ is positive definite.
    Then, the IF for the IPO is given by:
    \begin{align}
    \label{eq:IF_ipo}
        \mathrm{IF}_\mathrm{IPO}(x, \theta, p_{\mathcal{D}})
        = -\bigg( \mathbb{E}_{p_{\mathcal{D}}} \left[ H_{\theta^{*}}^{(\textrm{IPO})}(s) \right] \bigg)^{-1} \mathbb{E}_{\dataflip}[F_{\theta^{*}}^{(\textrm{IPO})}(s_\textrm{flip})],
\end{align}
where $F_{\theta^{*}}^{(\textrm{IPO})}(s_{\textrm{flip}}) \coloneqq 2\bigg(\frac{g_{\theta}(s_{\textrm{flip}})}{\beta} - \frac{1}{2\beta}\bigg) \bigg( \nabla_{\theta} \log \pi_{\theta^{*}}(y_{\textrm{win}}^{\textrm{flip}} \mid x) - \nabla_{\theta} \log \pi_{\theta^{*}}(y_{\textrm{lose}}^{\textrm{flip}} \mid x)  \bigg)$.
\begin{proof}
    The proof follows from the same argument as in Theorem~\ref{thm:IF_param} under the following gradient of Eq.~\eqref{eq:obj_ipo}:
    \begin{align*}
        \nabla_{\theta}\widetilde{\mathcal{L}}_{\mathrm{IPO}}(\pi_{\theta};\pi_{\mathrm{ref}})
        = \mathbb{E}_{\datacontamidens}\bigg[2\bigg(\frac{g_{\theta}(\tilde{s})}{\beta} - \frac{1}{2\beta} \bigg)\bigg( \nabla_{\theta} \log \pi_{\theta}(\tilde{y}_{\textrm{win}} \mid \tilde{x}) - \nabla_{\theta} \log \pi_{\theta}(\tilde{y}_{\textrm{lose}} \mid \tilde{x})  \bigg)\bigg].
    \end{align*}
\end{proof}
\end{theorem}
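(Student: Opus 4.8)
The plan is to reuse the implicit-differentiation (M-estimator) argument already carried out for DPO in Theorem~\ref{thm:IF_param_dpo}, substituting the IPO gradient for the DPO gradient; the only genuinely IPO-specific computation is the first-order gradient of the squared objective in Eq.~\eqref{eq:obj_ipo}, after which the skeleton of the proof is unchanged.

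First I would differentiate $\widetilde{\mathcal{L}}_{\mathrm{IPO}}(\pi_\theta;\pi_{\mathrm{ref}}) = \mathbb{E}_{\datacontamidens}[(g_\theta(\tilde s)/\beta - 1/(2\beta))^2]$. The chain rule gives the pointwise gradient $2(g_\theta(\tilde s)/\beta - 1/(2\beta))\cdot\tfrac{1}{\beta}\nabla_\theta g_\theta(\tilde s)$, and since $\nabla_\theta g_\theta(\tilde s) = \beta(\nabla_\theta\log\pi_\theta(\tilde y_{\mathrm{win}}\mid\tilde x) - \nabla_\theta\log\pi_\theta(\tilde y_{\mathrm{lose}}\mid\tilde x))$ the factor $\beta$ cancels the $1/\beta$ produced by differentiating $g_\theta/\beta$. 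This yields $\nabla_\theta\widetilde{\mathcal{L}}_{\mathrm{IPO}} = \mathbb{E}_{\datacontamidens}[F^{(\mathrm{IPO})}_{\theta}(\tilde s)]$ with $F^{(\mathrm{IPO})}_{\theta}$ exactly the weight-times-direction object appearing in the statement. Note that the logistic weight $\sigma(-g_\theta)$ of DPO is here replaced by the affine weight $2(g_\theta/\beta - 1/(2\beta))$, which is the structural feature distinguishing IPO.

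Next I would impose the optimality condition $0 = \mathbb{E}_{\datacontamidens}[F^{(\mathrm{IPO})}_{\theta^*(\epsilon)}(\tilde s)]$, differentiate in $\epsilon$, and decompose the result as in Eq.~\eqref{eq:grad_eps}: one contribution from differentiating the contaminated measure via $\partial_\epsilon\datacontamidens = \dataflip - \dataclean$, and one from the parameter dependence $\theta^*(\epsilon)$ via the Hessian $H^{(\mathrm{IPO})}_{\theta^*(\epsilon)}(\tilde s) = \partial F^{(\mathrm{IPO})}_{\theta^*(\epsilon)}/\partial\theta^*(\epsilon)$. Letting $\epsilon\to0$, the clean-data first-order condition $\mathbb{E}_{p_\mathcal{D}}[F^{(\mathrm{IPO})}_{\theta^*}(s)]=0$ eliminates the clean term, leaving $0 = \mathbb{E}_{\dataflip}[F^{(\mathrm{IPO})}_{\theta^*}(s_\mathrm{flip})] + \mathbb{E}_{p_\mathcal{D}}[(\partial_\epsilon\theta^*|_{\epsilon=0})\,H^{(\mathrm{IPO})}_{\theta^*}(s)]$. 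Solving for $\partial_\epsilon\theta^*|_{\epsilon=0}$ and inverting $\mathbb{E}_{p_\mathcal{D}}[H^{(\mathrm{IPO})}_{\theta^*}(s)]$, which is legitimate by the positive-definiteness hypothesis, reproduces Eq.~\eqref{eq:IF_ipo}.

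The main obstacle is not any IPO-specific algebra but the analytic regularity underlying the template. I would need the implicit function theorem to guarantee that $\theta^*(\epsilon)$ is differentiable at $\epsilon=0$ (this is precisely where positive-definiteness of the Hessian is used), and a dominated-convergence argument to interchange $\partial_\epsilon$ with the integral over $\tilde s$. Unlike the DPO case, the IPO weight $2(g_\theta/\beta - 1/(2\beta))$ is unbounded in $g_\theta$, so the finiteness of $\mathbb{E}_{\dataflip}[F^{(\mathrm{IPO})}_{\theta^*}]$ and $\mathbb{E}_{p_\mathcal{D}}[F^{(\mathrm{IPO})}_{\theta^*}]$ cannot be read off from a bounded weight; it instead rests on controlling $g_\theta$ and $\nabla_\theta\log\pi_\theta$ in a neighborhood of $\theta^*$. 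This is the one place where the IPO derivation is more delicate than the logistic-loss variants, even though the final closed form has the same shape.
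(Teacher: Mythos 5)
Your proposal is correct and follows essentially the same route as the paper: the paper's proof of this theorem simply computes the IPO gradient (which matches your chain-rule calculation, with the $\beta$ cancellation you note) and then invokes the implicit-differentiation template of Theorem~\ref{thm:IF_param}, exactly as you do. Your additional observation that the affine IPO weight is unbounded—so the interchange of limit and expectation needs more care than in the bounded logistic case—is a valid refinement that the paper's terse proof does not address, but it does not change the argument's structure.
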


\begin{corollary}[IPO is not robust]
Suppose that the policy gradient $\nabla_{\theta} \log\pi_{\theta}(y \mid x)$ is bounded by $C$, where $0<C<\infty$.
Let $g_{\theta^*}(s)$ be bounded over $\dataclean$.
Then, under Theorem~\ref{thm:IF_ipo}, the IF of IPO do not satisfy the robustness condition in Definition~\ref{def:robustness}, i.e.,
$\lim_{\hat{r}_{\theta^{*}}(x, y_{\mathrm{lose}}^\textrm{flip}) \to \infty} \|\mathrm{IF}_\mathrm{IPO}(x, \theta, p_{\mathcal{D}})\|_{2} = \infty$.
\begin{proof}
From Theorem~\ref{thm:IF_ipo}, the IF for IPO is $\mathrm{IF}_\mathrm{IPO} = -(\mathbb{E}_{\datacleandens}[H_{\theta^*}^{(\textrm{IPO})}(s)])^{-1} \mathbb{E}_{\dataflip}[F_{\theta^{*}}^{(\textrm{IPO})}]$.
From the positive definite assumption on the Hessian, let $L' = \lambda_{\mathrm{min}}(\mathbb{E}_{\datacleandens}[H_{\theta^*}^{(\textrm{IPO})}(s)]) > 0$. The norm of its inverse is bounded: $\|(\mathbb{E}_{\datacleandens}[H_{\theta^*}^{(\textrm{IPO})}(s)])^{-1}\| \le 1/L'$.
The gradient term $\|\nabla_{\theta} \log \pi_{\theta}(\dots)\|$ is also bounded by $2C$ from the assumption.

We analyze the limit of the IF:
\begin{align*}
    &\lim_{\hat{r}_{\theta^{*}}(x, y_{\mathrm{lose}}^\textrm{flip}) \to \infty} \|\mathrm{IF}_\mathrm{IPO}\| \\
    &\le (1/L') \cdot \lim_{\hat{r}_{\theta^{*}}(x, y_{\mathrm{lose}}^\textrm{flip}) \to \infty} \bigg\| \mathbb{E}_{\dataflip}[F_{\theta^{*}}^{(\textrm{IPO})}(s_\textrm{flip})] \bigg\| \\
    &\le (1/L') \cdot \mathbb{E}_{\dataflip}\bigg[ \lim_{\hat{r}_{\theta^{*}}(x, y_{\mathrm{lose}}^\textrm{flip}) \to \infty} \bigg\| \underbrace{2 \cdot \bigg(\frac{g_{\theta}(s_{\textrm{flip}})}{\beta} - \frac{1}{2\beta}\bigg)}_{\text{IF Weight}} \cdot \underbrace{\bigg( \nabla_{\theta} \log \pi_{\theta^{*}}(y_{\textrm{win}}^{\textrm{flip}} \mid x) - \nabla_{\theta} \log \pi_{\theta^{*}}(y_{\textrm{lose}}^{\textrm{flip}} \mid x) \bigg)}_{\text{Gradient Term}} \bigg\| \bigg],
\end{align*}
where we use Fatou's Lemma to exchange the limit.

In the limit, $\lim_{\hat{r}_{\theta^{*}}(x, y_{\mathrm{lose}}^\textrm{flip}) \to \infty}$, the IF weight term diverges:
\begin{align*}
    \lim_{\hat{r}_{\theta^{*}}(x, y_{\mathrm{lose}}^\textrm{flip}) \to \infty} \left\| 2 \cdot \left(\frac{g_{\theta}}{\beta} - \frac{1}{2\beta}\right) \right\| = \infty.
\end{align*}
Since the IF is proportional to the product of this diverging term ($\to \infty$) and a bounded, non-zero gradient term ($\le 2C$), the IF itself diverges, that is,
\begin{align*}
    \lim_{\hat{r}_{\theta^{*}}(x, y_{\mathrm{lose}}^\textrm{flip}) \to \infty} \|\mathrm{IF}_\mathrm{IPO}(x, \theta, p_{\mathcal{D}})\| = \infty.
\end{align*}
This completes the proof.
\end{proof}
\end{corollary}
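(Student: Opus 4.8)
The plan is to read off the closed form of the influence function from Theorem~\ref{thm:IF_ipo} and to exploit the fact that, unlike the sigmoid-type weights appearing in the DPO, cDPO and rDPO analyses, the IPO objective is a squared loss whose gradient carries a weight that is \emph{linear} in $g_{\theta}$ and hence unbounded. Concretely, I would start from
\begin{align*}
    \mathrm{IF}_\mathrm{IPO}(x,\theta,p_{\mathcal{D}})
    = -\bigl(\mathbb{E}_{p_{\mathcal{D}}}[H_{\theta^{*}}^{(\textrm{IPO})}(s)]\bigr)^{-1}\,
      \mathbb{E}_{\dataflip}\bigl[F_{\theta^{*}}^{(\textrm{IPO})}(s_\textrm{flip})\bigr],
\end{align*}
and split the integrand $F_{\theta^{*}}^{(\textrm{IPO})}$ into the scalar weight $w(s_\textrm{flip}) := 2\bigl(g_{\theta^{*}}(s_\textrm{flip})/\beta - 1/(2\beta)\bigr)$ and the bounded policy-gradient difference. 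The first move is to translate the adversarial limit: since $g_{\theta^{*}}(s_\textrm{flip}) = \hat{r}_{\theta^{*}}(x,y_{\textrm{win}}^{\textrm{flip}}) - \hat{r}_{\theta^{*}}(x,y_{\textrm{lose}}^{\textrm{flip}})$, the condition $\hat{r}_{\theta^{*}}(x,y_{\textrm{lose}}^{\textrm{flip}})\to\infty$ is exactly $g_{\theta^{*}}(s_\textrm{flip})\to-\infty$, so that $|w(s_\textrm{flip})|\to\infty$.

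Next I would neutralise the matrix prefactor. The Hessian expectation $M := \mathbb{E}_{p_{\mathcal{D}}}[H_{\theta^{*}}^{(\textrm{IPO})}(s)]$ is positive definite by hypothesis and does \emph{not} depend on the perturbed point, so $\lambda_{\max}(M)<\infty$ is a fixed constant; the elementary bound $\|\mathrm{IF}_\mathrm{IPO}\| = \|M^{-1}v\| \ge \|v\|/\lambda_{\max}(M)$, with $v := \mathbb{E}_{\dataflip}[F_{\theta^{*}}^{(\textrm{IPO})}]$, then reduces the claim to showing $\|v\|\to\infty$. For the numerator I would argue that in the worst-case single-point contamination the expectation concentrates on the adversarial $s_\textrm{flip}$, so $\|v\|$ scales like $|w(s_\textrm{flip})|\cdot \|\nabla_{\theta}\log\pi_{\theta^{*}}(y_{\textrm{win}}^{\textrm{flip}}\mid x) - \nabla_{\theta}\log\pi_{\theta^{*}}(y_{\textrm{lose}}^{\textrm{flip}}\mid x)\|$; the gradient factor is bounded by $2C$ yet generically nonzero, while $|w|\to\infty$, so the product, and hence $\|\mathrm{IF}_\mathrm{IPO}\|$, diverge. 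Equivalently, one may pass the limit inside the expectation by Fatou's lemma, since the norm of the integrand tends to $+\infty$ pointwise.

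The main obstacle is entirely at the level of the expectation over $\dataflip$: I must rule out a degenerate cancellation in which the diverging scalar weight is annihilated by a gradient-difference factor that vanishes along the same limit. This is precisely where the contrast with the DPO family is instructive — there the weight is a bounded sigmoid, so the IF is merely bounded-but-nonzero (failing redescending), whereas here the unbounded quadratic loss makes even boundedness fail. I would dispatch the cancellation concern by invoking the same non-degeneracy already implicit in Theorem~\ref{thm:IF_ipo} (a nonzero policy-gradient difference at $\theta^{*}$) together with Fatou's lemma, so that the pointwise divergence of $\|F_{\theta^{*}}^{(\textrm{IPO})}\|$ survives the expectation. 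The remaining steps — the eigenvalue bound and the limit of the linear weight — are routine.
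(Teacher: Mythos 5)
Your proposal is correct and follows the same overall route as the paper's proof: read the closed form of the influence function from Theorem~\ref{thm:IF_ipo}, factor $F^{(\textrm{IPO})}_{\theta^{*}}$ into the linear weight $2\bigl(g_{\theta^{*}}(s_\textrm{flip})/\beta - 1/(2\beta)\bigr)$ times the bounded policy-gradient difference, observe that the adversarial limit $\hat{r}_{\theta^{*}}(x,y_{\textrm{lose}}^{\textrm{flip}})\to\infty$ is exactly $g_{\theta^{*}}(s_\textrm{flip})\to-\infty$ so the weight diverges, and conclude via Fatou. The one place where you genuinely improve on the paper is the treatment of the matrix prefactor: to establish \emph{divergence} one needs a lower bound, and your inequality $\|M^{-1}v\|\ge\|v\|/\lambda_{\max}(M)$ is the correct tool, whereas the paper's own proof chains upper bounds $\|\mathrm{IF}\|\le(1/L')\|\mathbb{E}_{\dataflip}[F]\|\le\cdots$ (and a Fatou step written in the $\le$ direction) and then asserts the limit is $\infty$ --- read literally, an upper bound that diverges proves nothing, so your version repairs a logical slip in the published argument. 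Both you and the paper share the same residual gap, namely that the conclusion requires the policy-gradient difference $\nabla_{\theta}\log\pi_{\theta^{*}}(y_{\textrm{win}}^{\textrm{flip}}\mid x)-\nabla_{\theta}\log\pi_{\theta^{*}}(y_{\textrm{lose}}^{\textrm{flip}}\mid x)$ not to vanish (or more generally, that no cancellation occurs inside $\mathbb{E}_{\dataflip}[F]$) along the limit; the paper silently assumes a ``bounded, non-zero gradient term,'' while you at least flag the issue explicitly and point to a non-degeneracy condition, which is the honest way to state what is actually being assumed.
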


\subsection{Dr. DPO do not satisfy the redescending property}
The objective of Dr. DPO~\cite{wu24} is as follows:
\begin{align}\label{eq:obj_drdpo}
    \widetilde{\mathcal{L}}_{\mathrm{Dr.~DPO}}(\pi_{\theta};\pi_{\mathrm{ref}})
    \coloneqq -\beta' \log \mathbb{E}_{\datacontamidens}\bigg[\exp \bigg(\frac{\log \sigma(g_{\theta}(\tilde{s}))}{\beta'}\bigg)\bigg].
\end{align}

We first show the IF for the Dr. DPO.
\begin{theorem}\label{thm:IF_drdpo}
Suppose $\theta^*$ denotes the optimal parameters learned from the clean dataset $p_\mathcal{D}$, and $\theta^{*}(\epsilon)$ denotes those learned from the $\epsilon$-contaminated dataset $\datacontamidens$. 
Let the Hessian $H_{\theta^{*}}^{(\textrm{Dr.~DPO})}(s) \coloneqq \nabla_{\theta}^{2}\mathcal{L}_{\mathrm{Dr.~DPO}}(s, \pi_{\theta})|_{\theta = \theta^{*}}$ is positive definite.
Then, the IF for the Dr. DPO is given by:
\begin{align}
    \label{eq:IF_drdpo}
        \mathrm{IF}_\mathrm{Dr.~DPO}(x, \theta, p_{\mathcal{D}})
        = -\bigg( \mathbb{E}_{p_{\mathcal{D}}} \left[ H_{\theta^{*}}^{(\textrm{Dr.~DPO})}(s) \right] \bigg)^{-1} \mathbb{E}_{\dataflip}[F_{\theta^{*}}^{(\textrm{Dr.~DPO})}(s_\textrm{flip})],
\end{align}
where $F_{\theta^{*}}^{(\textrm{Dr.~DPO})}(s_\textrm{flip}) \coloneqq w_{\theta^{*}}(s_\textrm{flip})\sigma(-g_{\theta^{*}}(s_\textrm{flip})) \bigg( \nabla_{\theta} \log \pi_{\theta^{*}}(y_{\textrm{win}}^{\textrm{flip}} \mid x) - \nabla_{\theta} \log \pi_{\theta^{*}}(y_{\textrm{lose}}^{\textrm{flip}} \mid x)  \bigg)$ and $w_{\theta^{*}}(s_\textrm{flip})\coloneqq \exp\bigg(\frac{\log \sigma(g_{\theta^{*}}(s_\textrm{flip}))}{\beta'}\bigg) \ / \ \mathbb{E}_{\dataflip}\bigg[\exp\bigg(\frac{\log \sigma(g_{\theta^{*}}(s_\textrm{flip}))}{\beta'}\bigg)\bigg]$.
    \begin{proof}
    The gradient of Eq.~\eqref{eq:obj_drdpo} under $\datacontamidens$ is given by
    \begin{align*}
        \nabla_{\theta} \widetilde{\mathcal{L}}_{\mathrm{Dr.~DPO}}(\pi_{\theta};\pi_{\mathrm{ref}})
        = -\beta \mathbb{E}_{\datacontamidens} \bigg[w_{\theta^{*}(\epsilon)}(\tilde{s}) \sigma(-g_{\theta^{*}(\epsilon)}(\tilde{s})) \bigg( \nabla_{\theta} \log \pi_{\theta}(\tilde{y}_{\textrm{win}} \mid \tilde{x}) - \nabla_{\theta} \log \pi_{\theta}(\tilde{y}_{\textrm{lose}} \mid \tilde{x})  \bigg) \bigg],
    \end{align*}
    where
    \begin{align*}
        w_{\theta^{*}(\epsilon)}(\tilde{s})\coloneqq \frac{\exp\bigg(\frac{\log \sigma(g_{\theta^{*}(\epsilon)}(\tilde{s}))}{\beta'}\bigg)}{\mathbb{E}_{\datacontamidens}\bigg[\exp\bigg(\frac{\log \sigma(g_{\theta^{*}(\epsilon)}(\tilde{s}))}{\beta'}\bigg)\bigg]}.
    \end{align*}

    From the definition of $\theta^{*}(\epsilon)$, we have $0 = \nabla_{\theta} \widetilde{\mathcal{L}}_{\mathrm{Dr.~DPO}}(\pi_{\theta};\pi_{\mathrm{ref}}) |_{\theta = \theta^{*}(\epsilon)}$.
    By taking the derivation of this term w.r.t.~$\epsilon$, we obtain
    \begin{align}\label{eq:grad_eps_drdpo}
        0 &= \frac{\partial}{\partial \epsilon} \nabla_{\theta} \widetilde{\mathcal{L}}_{\mathrm{Dr.~DPO}}(\pi_{\theta};\pi_{\mathrm{ref}}) \bigg|_{\theta = \theta^{*}(\epsilon)} \notag \\
        &=-\beta \frac{\partial}{\partial \epsilon}\mathbb{E}_{\datacontamidens} \bigg[ \underbrace{w_{\theta^{*}(\epsilon)}(\tilde{s})\sigma(-g_{\theta^{*}(\epsilon)}(\tilde{s})) \bigg( \nabla_{\theta} \log \pi_{\theta^{*}(\epsilon)}(\tilde{y}_{\textrm{win}} \mid \tilde{x}) - \nabla_{\theta} \log \pi_{\theta^{*}(\epsilon)}(\tilde{y}_{\textrm{lose}} \mid \tilde{x})  \bigg)}_{\eqqcolon F_{\theta^{*}(\epsilon)}^{(\textrm{Dr.~DPO})}(\tilde{s})} \bigg] \notag \\
        &= -\beta \bigg\{ \int \bigg\{ \frac{\partial}{\partial \epsilon} \datacontami  \bigg\} F_{\theta^{*}(\epsilon)}^{(\textrm{Dr.~DPO})}(\tilde{s}) \mathrm{d}\tilde{s} + \mathbb{E}_{\datacontamidens}\bigg[\frac{\partial}{\partial \epsilon} F_{\theta^{*}(\epsilon)}^{(\textrm{Dr.~DPO})}(\tilde{s})\bigg] \bigg\} \notag \\
        &= -\beta \bigg\{ \int \bigg\{ \frac{\partial}{\partial \epsilon} \datacontami  \bigg\} F_{\theta^{*}(\epsilon)}^{(\textrm{Dr.~DPO})}(\tilde{s}) \mathrm{d}\tilde{s} + \mathbb{E}_{\datacontamidens}\bigg[\frac{\partial \theta^{*}(\epsilon)}{\partial \epsilon} \frac{\partial F_{\theta^{*}(\epsilon)}^{(\textrm{Dr.~DPO})}(\tilde{s})}{\partial \theta^{*}(\epsilon)} \bigg] \bigg\} \notag \\
        &= -\beta \bigg\{ \int \bigg\{ \frac{\partial}{\partial \epsilon} \datacontami  \bigg\} F_{\theta^{*}(\epsilon)}^{(\textrm{Dr.~DPO})}(\tilde{s}) \mathrm{d}\tilde{s} + \mathbb{E}_{\datacontamidens}\bigg[\frac{\partial \theta^{*}(\epsilon)}{\partial \epsilon} H_{\theta^{*}(\epsilon)}^{(\textrm{Dr.~DPO})}(\tilde{s}) \bigg] \bigg\},
    \end{align}
    where $H_{\theta^{*}(\epsilon)}^{(\textrm{Dr.~DPO})}(\tilde{s}) \coloneqq \frac{\partial F_{\theta^{*}(\epsilon)}^{(\textrm{Dr.~DPO})}(\tilde{s})}{\partial \theta^{*}(\epsilon)}$.

    From Definition~\ref{def:contami_model}, we obtain
    \begin{align*}
        \int \bigg\{ \frac{\partial}{\partial \epsilon} \datacontami  \bigg\} F_{\theta^{*}(\epsilon)}^{(\textrm{Dr.~DPO})}(\tilde{s}) \mathrm{d}\tilde{s}
        = \mathbb{E}_{\dataflip}[F_{\theta^{*}(\epsilon)}^{(\textrm{Dr.~DPO})}(s_\textrm{flip})] - \mathbb{E}_{p_{\mathcal{D}}} \bigg[ F_{\theta^{*}(\epsilon)}^{(\textrm{Dr.~DPO})}(s) \bigg],
    \end{align*}
    where $F_{\theta^{*}}(s_\textrm{flip}) \coloneqq w_{\theta^{*}}(s_\textrm{flip})\sigma(-g_{\theta^{*}}(s_\textrm{flip})) ( \nabla_{\theta} \log \pi_{\theta^{*}}(y_{\textrm{win}}^{\textrm{flip}} \mid x) - \nabla_{\theta} \log \pi_{\theta^{*}}(y_{\textrm{lose}}^{\textrm{flip}} \mid x))$.
    By taking $\epsilon \rightarrow 0$, we have 
    \begin{align*}
        \bigg(\int \bigg\{ \frac{\partial}{\partial \epsilon} \datacontami  \bigg\} F_{\theta^{*}(\epsilon)}^{(\textrm{Dr.~DPO})}(\tilde{s}) \mathrm{d}\tilde{s} \bigg) \bigg|_{\epsilon=0}
        = \mathbb{E}_{\dataflip}[F_{\theta^{*}(\epsilon)}^{(\textrm{Dr.~DPO})}(s_\textrm{flip})],
    \end{align*}
    since $\theta^{(*)}(\epsilon) \rightarrow \theta^{(*)}$ and thus $\mathbb{E}_{p_{\mathcal{D}}} [ F_{\theta^{*}}^{(\textrm{Dr.~DPO})}(s) ] = \nabla_{\theta} \mathcal{L}_{\mathrm{Dr.~DPO}}(\pi_{\theta}; \pi_{\mathrm{ref}}) |_{\theta = \theta^{*}} = 0$ from the first-order optimal condition.

    Furthermore, we also obtain
    \begin{align*}
    \mathbb{E}_{\datacontamidens}\bigg[\frac{\partial \theta^{*}(\epsilon)}{\partial \epsilon} H_{\theta^{*}(\epsilon)}^{(\textrm{Dr.~DPO})}(\tilde{s}) \bigg] \bigg|_{\epsilon=0}
    = \mathbb{E}_{p_{\mathcal{D}}}\bigg[\frac{\partial \theta^{*}(\epsilon)}{\partial \epsilon} H_{\theta^{*}}^{(\textrm{Dr.~DPO})}(s) \bigg],
    \end{align*}
    where $H_{\theta^{*}}^{(\textrm{Dr.~DPO})}(s) \coloneqq \frac{\partial F_{\theta^{*}}^{(\textrm{Dr.~DPO})}(s)}{\partial \theta^{*}}$.
    
    Then, Eq.~\eqref{eq:grad_eps_drdpo} under $\epsilon \rightarrow 0$ can be rewritten as
    \begin{align*}
        0 &= \bigg(\frac{\partial}{\partial \epsilon} \nabla_{\theta} \widetilde{\mathcal{L}}_{\mathrm{Dr.~DPO}}(\pi_{\theta};\pi_{\mathrm{ref}}) \bigg|_{\theta = \theta^{*}(\epsilon)}\bigg) \bigg|_{\epsilon=0} \\
        &= -\beta \bigg\{ \mathbb{E}_{\dataflip}[F_{\theta^{*}}^{(\textrm{Dr.~DPO})}(s_\textrm{flip})] + \mathbb{E}_{p_{\mathcal{D}}}\bigg[\frac{\partial \theta^{*}(\epsilon)}{\partial \epsilon}\bigg|_{\epsilon=0} H_{\theta^{*}}^{(\textrm{Dr.~DPO})}(s) \bigg] \bigg\}.
    \end{align*}

    By solving the above equality w.r.t.~$\frac{\partial \theta^{*}(\epsilon)}{\partial \epsilon}$, we obtain
    \begin{align*}
        \frac{\partial \theta^{*}(\epsilon)}{\partial \epsilon}\bigg|_{\epsilon=0} = -\bigg( \mathbb{E}_{p_{\mathcal{D}}} \left[ H_{\theta^{*}}^{(\textrm{Dr.~DPO})}(s) \right] \bigg)^{-1} \mathbb{E}_{\dataflip}[F_{\theta^{*}}^{(\textrm{Dr.~DPO})}(s_\textrm{flip})].
    \end{align*}
    This completes the proof.
\end{proof}
\end{theorem}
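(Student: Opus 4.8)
The plan is to prove Theorem~\ref{thm:IF_drdpo} by implicit differentiation of the first-order optimality condition, exactly in the mold of the DPO proof (Theorem~\ref{thm:IF_param_dpo}) and the rDPO proof (Theorem~\ref{thm:IF_rdpo}). The only genuinely new ingredient is that the Dr.~DPO objective in Eq.~\eqref{eq:obj_drdpo} is a \emph{log-sum-exp} functional of the distribution rather than a plain expectation, so its gradient carries a self-normalizing importance weight. First I would compute $\nabla_\theta\widetilde{\mathcal{L}}_{\mathrm{Dr.~DPO}}$: differentiating $-\beta'\log\mathbb{E}_{\datacontamidens}[\exp(\log\sigma(g_\theta)/\beta')]$ produces a ratio whose denominator is the normalizing constant $\mathbb{E}_{\datacontamidens}[\exp(\log\sigma(g_\theta)/\beta')]$ and whose numerator uses $\nabla_\theta\log\sigma(g_\theta)=\sigma(-g_\theta)\nabla_\theta g_\theta$. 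Collecting terms gives exactly the weighted score $w_{\theta}(\tilde s)\,\sigma(-g_\theta(\tilde s))\,\nabla_\theta g_\theta(\tilde s)$ appearing in the statement, where $w_\theta$ is the self-normalizing weight with the $\beta'$ factors cancelling.

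Next I would impose stationarity at the contaminated optimum, $0=\nabla_\theta\widetilde{\mathcal{L}}_{\mathrm{Dr.~DPO}}|_{\theta=\theta^*(\epsilon)}$, abbreviate the integrand as $F^{(\mathrm{Dr.~DPO})}_{\theta^*(\epsilon)}(\tilde s)$, and differentiate this identity in $\epsilon$. Using $\frac{\partial}{\partial\epsilon}\datacontamidens=\dataflip-\datacleandens$ from Definition~\ref{def:contami_model}, the derivative splits into an explicit distributional term $\mathbb{E}_{\dataflip}[F]-\mathbb{E}_{\datacleandens}[F]$ and an implicit term $\mathbb{E}_{\datacontamidens}[\frac{\partial\theta^*(\epsilon)}{\partial\epsilon}\,H^{(\mathrm{Dr.~DPO})}_{\theta^*(\epsilon)}]$ obtained by the chain rule, with $H^{(\mathrm{Dr.~DPO})}:=\partial F^{(\mathrm{Dr.~DPO})}/\partial\theta^*$. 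Taking $\epsilon\to0$ collapses $\theta^*(\epsilon)\to\theta^*$ and $\datacontamidens\to\datacleandens$; the clean-data contribution $\mathbb{E}_{\datacleandens}[F^{(\mathrm{Dr.~DPO})}_{\theta^*}]$ vanishes because it is precisely the first-order optimality condition $\nabla_\theta\mathcal{L}_{\mathrm{Dr.~DPO}}|_{\theta^*}=0$ on the clean distribution. Solving the remaining linear relation for $\frac{\partial\theta^*(\epsilon)}{\partial\epsilon}|_{\epsilon=0}$ and invoking the positive-definiteness, hence invertibility, of $\mathbb{E}_{\datacleandens}[H^{(\mathrm{Dr.~DPO})}_{\theta^*}]$ yields Eq.~\eqref{eq:IF_drdpo}.

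The main obstacle I anticipate is the self-normalizing weight $w_\theta$, whose denominator is itself an expectation under $\datacontamidens$ and therefore depends on $\epsilon$ both explicitly, through the mixing distribution, and implicitly, through $\theta^*(\epsilon)$. A careless application of the chain rule would silently drop the explicit $\epsilon$-dependence of this normalizer. The clean way around this is to note that stationarity is insensitive to the positive scalar normalizer: writing $\nabla_\theta\widetilde{\mathcal{L}}_{\mathrm{Dr.~DPO}}=-Z(\epsilon)^{-1}\mathbb{E}_{\datacontamidens}[\exp(\log\sigma(g_\theta)/\beta')\,\sigma(-g_\theta)\,\nabla_\theta g_\theta]$ with $Z(\epsilon)>0$, the condition $\nabla_\theta\widetilde{\mathcal{L}}_{\mathrm{Dr.~DPO}}=0$ is equivalent to the vanishing of the \emph{unnormalized} weighted score, so $Z(\epsilon)$ can be divided out before differentiating. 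This eliminates the troublesome denominator dependence and reduces the computation to the same product-rule-plus-chain-rule manipulation as vanilla DPO; the normalizer reappears only as the constant $Z(0)=\mathbb{E}_{\datacleandens}[\exp(\log\sigma(g_{\theta^*})/\beta')]$ absorbed into $w_{\theta^*}$. A secondary, routine check is that $\nabla_\theta g_\theta$ (equivalently $\nabla_\theta\log\pi_\theta$) is bounded, so the bounded- and dominated-convergence arguments justifying the interchange of limit, derivative, and integral go through as in the earlier proofs.
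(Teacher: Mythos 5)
Your proposal follows essentially the same route as the paper's proof: differentiate the log-sum-exp objective to obtain the self-normalized weighted score, impose stationarity at $\theta^{*}(\epsilon)$, differentiate in $\epsilon$ using $\frac{\partial}{\partial\epsilon}\datacontamidens = \dataflip - \datacleandens$, kill the clean-data term via first-order optimality, and invert the positive-definite Hessian. The one place where you go beyond the paper is the treatment of the normalizer $Z(\epsilon)=\mathbb{E}_{\datacontamidens}[\exp(\log\sigma(g_{\theta})/\beta')]$: the paper's chain-rule step $\frac{\partial}{\partial\epsilon}F^{(\mathrm{Dr.~DPO})}_{\theta^{*}(\epsilon)} = \frac{\partial\theta^{*}(\epsilon)}{\partial\epsilon}\,\frac{\partial F^{(\mathrm{Dr.~DPO})}_{\theta^{*}(\epsilon)}}{\partial\theta^{*}(\epsilon)}$ silently drops the explicit $\epsilon$-dependence of $Z$ through the mixing distribution, whereas you either factor $Z$ out of the stationarity condition before differentiating or note that the dropped term equals $-\frac{\partial_{\epsilon}Z}{Z}\,\mathbb{E}_{\datacontamidens}[F^{(\mathrm{Dr.~DPO})}]$, which vanishes at $\epsilon=0$ by optimality. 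Your handling is the more careful one and reaches the same final expression; no gap.
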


The following lemma is crucial to show the fact that Dr.~DPO does not satisfy the redescending property.
\begin{lemma}[Limit of Dr. DPO IF Weight]\label{lem:weight_analysis_dr}
    Let $w_{\theta^{*}}(s_\textrm{flip})$ be the IF weight for Dr. DPO as defined in Theorem~\ref{thm:IF_drdpo}.
    Then, the limit of the total IF weight is $1$, that is, 
    \begin{align*}
    \lim_{\hat{r}_{\theta^{*}}(x, y_{\mathrm{lose}}^\textrm{flip}) \to \infty} \mathbb{E}_{\dataflip}\bigg[ w_{\theta^{*}}(s_{\textrm{flip}}) \cdot \sigma(-g_{\theta^{*}}(s_{\textrm{flip}})) \bigg] = 1.
    \end{align*}
    \begin{proof}
    We first analyze the case where $\dataflip = \delta(s_{\textrm{flip}})$ (a single point mass). Here, the expectation in the denominator of $w_{\theta^{*}}$ is equal to the numerator, thus $w_{\theta^{*}}(s_\textrm{flip}) = 1$. Since $\lim \sigma(-g_{\theta^{*}}) = 1$, the total weight $\mathbb{E}[1 \cdot 1] = 1$.

    We next analyze the case where $\dataflip$ is a non-degenerate distribution. We track how fast $g_\theta(s_{\textrm{flip}})$ goes to $-\infty$ across the support of $\dataflip$.
    Let us define:
    \begin{align*}
     S\;\coloneqq\;\sup_{s_{\textrm{flip}}} g_\theta(s_{\textrm{flip}})\;,\quad r(s_{\textrm{flip}})\;\coloneqq\;g_\theta(s_{\textrm{flip}})-S\;(\le0),\quad
     G\;\coloneqq\;\{\,s_{\textrm{flip}}\mid r(s_{\textrm{flip}})=0\,\}
     \end{align*}
     $G$ is the non-empty set of ``worst-case'' label-flip samples.
     Using the bound $\sigma(z) \approx e^{z}$ for $z \to -\infty$, $\sigma(g_\theta(s_{\textrm{flip}})) \approx e^{S}e^{r(s_{\textrm{flip}})}$.
     The term $\exp(\log \sigma(g_\theta)/\beta')$ simplifies to $\sigma(g_\theta)^{1/\beta'}$. Thus,
     \begin{align*}
         w_{\theta^{*}}(s_{\textrm{flip}})
         \approx \frac{\left(e^{S}e^{r(s_{\textrm{flip}})}\right)^{1/\beta'}}{\mathbb{E}_{\dataflip}\left[\left(e^{S}e^{r(s_{\textrm{flip}})}\right)^{1/\beta'}\right]}
         = \frac{\exp(r(s_{\textrm{flip}})/\beta')}{\mathbb{E}_{\dataflip}[\exp(r(s_{\textrm{flip}})/\beta')]}.
     \end{align*}
     As $S \to -\infty$, the term $w_{\theta^{*}}(s_{\textrm{flip}})$ converges to $1/p(G)$ for $s_{\textrm{flip}} \in G$, and to $0$ for $s_{\textrm{flip}} \notin G$.
     
     The total IF weight is $W_{\textrm{total}} = w_{\theta^{*}}(s) \cdot \sigma(-g_{\theta^{*}}(s))$.
     We take the limit of its expectation (using the bounded convergence theorem):
     \begin{align*}
        \lim_{S \to -\infty} \mathbb{E}_{\dataflip}[W_{\textrm{total}}]
        &= \mathbb{E}_{\dataflip}\bigg[ \lim_{S \to -\infty} w_{\theta^{*}}(s) \cdot \lim_{g \to -\infty} \sigma(-g_{\theta^{*}}(s)) \bigg] \\
        &= \int_{G} \bigg( \lim w_{\theta^{*}}(s) \bigg) \cdot (1) \cdot p(s)\mathrm{d}s + \int_{G^c} (0) \cdot (1) \cdot p(s)\mathrm{d}s \\
        &= \int_{G} \bigg( \frac{1}{p(G)} \bigg) \cdot p(s)\mathrm{d}s = \frac{1}{p(G)} \int_{G} p(s)\mathrm{d}s = \frac{p(G)}{p(G)} = 1.
     \end{align*}
     Thus, the limit of the total IF weight is 1 in all cases.
    \end{proof}
\end{lemma}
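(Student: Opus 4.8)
The plan is to exploit the fact that $w_{\theta^*}$ is a \emph{self-normalized} importance weight, so that $\mathbb{E}_{\dataflip}[w_{\theta^*}(s_\textrm{flip})] = 1$ holds identically for every $\theta$, and then to argue that multiplying by $\sigma(-g_{\theta^*})$ does not destroy this unit mass in the adversarial limit. First I would perform the clarifying rewrite $\exp(\log\sigma(g)/\beta') = \sigma(g)^{1/\beta'}$, which puts the weight in the clean form $w_{\theta^*}(s_\textrm{flip}) = \sigma(g_{\theta^*}(s_\textrm{flip}))^{1/\beta'}\,/\,\mathbb{E}_{\dataflip}[\sigma(g_{\theta^*}(s_\textrm{flip}))^{1/\beta'}]$. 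Using $\sigma(-g) = 1-\sigma(g)$ and self-normalization, the target collapses to showing that the single scalar $\mathbb{E}_{\dataflip}[w_{\theta^*}\sigma(g_{\theta^*})] = \mathbb{E}_{\dataflip}[\sigma(g_{\theta^*})^{1+1/\beta'}] \,/\, \mathbb{E}_{\dataflip}[\sigma(g_{\theta^*})^{1/\beta'}]$ tends to $0$ as $g_{\theta^*}(s_\textrm{flip})\to-\infty$, since then $\mathbb{E}_{\dataflip}[w_{\theta^*}\sigma(-g_{\theta^*})] = 1 - \mathbb{E}_{\dataflip}[w_{\theta^*}\sigma(g_{\theta^*})] \to 1$.

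The delicate point is that the limit $\hat r_{\theta^*}(x,y^\textrm{flip}_\textrm{lose})\to\infty$ drives $g_{\theta^*}\to-\infty$ across the whole support of $\dataflip$ while the self-normalizing denominator simultaneously vanishes, so one cannot naively invoke dominated convergence with a fixed integrable envelope. To make the regime precise I would parametrize the limit as a uniform downshift: set $S = \sup_{s}g_{\theta^*}(s)$ and $r(s) = g_{\theta^*}(s) - S \le 0$, and send $S\to-\infty$ with the relative shape $r(\cdot)$ held fixed. Invoking the tail equivalence $\sigma(z)\sim e^{z}$ as $z\to-\infty$, each factor $\sigma(g_{\theta^*})^{\alpha}$ behaves like $e^{\alpha S}e^{\alpha r(s)}(1+o(1))$, so the ratio above is asymptotically $e^{S}\cdot \mathbb{E}_{\dataflip}[e^{(1+1/\beta')r(s)}]\,/\,\mathbb{E}_{\dataflip}[e^{r(s)/\beta'}]$, where the two residual expectations are finite positive constants independent of $S$ because $r\le 0$. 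Since $e^{S}\to 0$, the whole quantity vanishes, giving the reduction its conclusion.

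I expect the main obstacle to be precisely this step: pinning down the uniform-downshift regime and justifying the asymptotic expansion of the self-normalizing denominator, since the normalization itself is collapsing and the relevant convergence is not of a fixed integrand against a fixed measure. Equivalently, one may track the limiting weight directly, showing it concentrates on the worst-case set $G = \{s : r(s) = 0\}$ with value $1/p(G)$ and is zero off $G$, and then conclude by bounded convergence together with $\sigma(-g_{\theta^*})\to 1$. As a preliminary sanity check I would first dispose of the degenerate case in which $\dataflip$ is a point mass: there the numerator of $w_{\theta^*}$ coincides with its denominator expectation, so $w_{\theta^*}\equiv 1$, and $\sigma(-g_{\theta^*})\to 1$ yields the total weight $1$ immediately, confirming that the nontrivial content lives entirely in the non-degenerate support.
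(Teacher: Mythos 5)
Your proof is correct, and it takes a genuinely different and arguably cleaner route than the paper's. The paper argues directly about the limiting shape of the weight: it introduces $S=\sup_s g_{\theta^*}(s)$, $r(s)=g_{\theta^*}(s)-S$, and the worst-case set $G=\{s: r(s)=0\}$, claims $w_{\theta^*}$ concentrates as $1/p(G)$ on $G$ and $0$ off $G$, and then integrates against $\sigma(-g_{\theta^*})\to 1$ via bounded convergence. Your argument instead exploits the exact algebraic identity $\mathbb{E}_{\dataflip}[w_{\theta^*}]=1$ (self-normalization) together with $\sigma(-g)=1-\sigma(g)$, reducing everything to showing the scalar $\mathbb{E}_{\dataflip}[\sigma(g_{\theta^*})^{1+1/\beta'}]/\mathbb{E}_{\dataflip}[\sigma(g_{\theta^*})^{1/\beta'}]\to 0$. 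This buys you two things: you never need to identify the limiting weight distribution (the paper's concentration claim is actually in tension with its own fixed-$r$ downshift, under which $w$ tends to $e^{r(s)/\beta'}/\mathbb{E}[e^{r/\beta'}]$ rather than an indicator on $G$, though the final answer is unaffected), and the "delicate point" you flag about the collapsing denominator can be dispatched even more simply than your asymptotic expansion: the pointwise bound $\sigma(g)^{1+1/\beta'}\le \sigma(\sup_s g)\cdot\sigma(g)^{1/\beta'}$ gives the ratio $\le \sigma(S)\to 0$ with no tail equivalence, no fixed-shape assumption, and no exchange of limit and expectation at all. The paper's approach, by contrast, yields the extra structural information that the influence in the limit is carried entirely by the worst-case samples in $G$, which is used for intuition but is not needed for the stated conclusion.
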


Now we can show the following corollary.
\begin{corollary}
    Suppose that the policy gradient $\nabla_{\theta} \log\pi_{\theta}(y \mid x)$ is bounded by $C$ and satisfies $L$-Lipchitz in $\theta$, where $0<C<\infty$ and $0< L < \infty$.
    Let the number of the label-flip data be $\lfloor N\epsilon\rfloor = M \ (< \infty)$, and $0<\beta'<\infty$.
    Let the weight term in the gradient of Dr. DPO: $w_{\theta^{*}}(s)$ is bounded on $\datacleandens$.
    Then, under Theorem~\ref{thm:IF_drdpo}, the IF of Dr. DPO do not satisfy the robustness condition in Definition~\ref{def:robustness}, i.e., 
    $\lim_{\hat{r}_{\theta^{*}}(x, y_{\mathrm{lose}}^\textrm{flip}) \to \infty} \|\mathrm{IF}_\mathrm{Dr.~DPO}(x, \theta, p_{\mathcal{D}})\| \neq 0$.
    \begin{proof} The IF for Dr. DPO is
    \begin{align*}
        \mathrm{IF}_\mathrm{Dr.~DPO}
        = -\bigg( \mathbb{E}_{p_{\mathcal{D}}} \left[ H_{\theta^{*}}^{(\textrm{Dr.~DPO})}(s) \right] \bigg)^{-1} \mathbb{E}_{\dataflip}[F_{\theta^{*}}^{(\textrm{Dr.~DPO})}(s_\textrm{flip})].
    \end{align*}
    From the positive definite assumption on the Hessian, let $L' = \lambda_{\mathrm{min}}(\mathbb{E}_{p_{\mathcal{D}}} \left[ H_{\theta^{*}}^{(\textrm{Dr.~DPO})}(s) \right]) > 0$. The norm of its inverse is bounded: $\|(\mathbb{E}_{p_{\mathcal{D}}} [ H_{\theta^{*}}^{(\textrm{Dr.~DPO})}(s) ])^{-1}\| \le 1/L'$.
    The gradient term is bounded by $2C$.
    
    We analyze the limit of the IF:
    \begin{align*}
        \lim_{\hat{r}_{\theta^{*}}(x, y_{\mathrm{lose}}^\textrm{flip}) \to \infty} &\|\mathrm{IF}_\mathrm{Dr.~DPO}\| \\
        &\leq (1/L') \cdot \lim_{\hat{r}_{\theta^{*}}(x, y_{\mathrm{lose}}^\textrm{flip}) \to \infty} \bigg\| \mathbb{E}_{\dataflip}[F_{\theta^{*}}^{(\textrm{Dr.~DPO})}(s_\textrm{flip})] \bigg\| \\
        &\leq (1/L') \cdot \lim_{\hat{r}_{\theta^{*}}(x, y_{\mathrm{lose}}^\textrm{flip}) \to \infty} \mathbb{E}_{\dataflip}\bigg[ \underbrace{w_{\theta^{*}}(s_{\textrm{flip}}) \cdot \sigma(-g_{\theta^{*}}(s_{\textrm{flip}}))}_{\text{Total IF Weight}} \cdot \underbrace{\|\nabla_{\theta} \log \pi_{\theta}(\dots)\|}_{\le 2C} \bigg] \\
        &\leq (1/L') \cdot \lim_{\hat{r}_{\theta^{*}}(x, y_{\mathrm{lose}}^\textrm{flip}) \to \infty} \mathbb{E}_{\dataflip}[ w_{\theta^{*}}(s_{\textrm{flip}}) \cdot \sigma(-g_{\theta^{*}}(s_{\textrm{flip}})) ] \cdot 2C
    \end{align*}
    
    As shown by Lemma~\ref{lem:weight_analysis_dr}, the limit of the total IF weight $\lim \mathbb{E}_{\dataflip}[ w_{\theta^{*}}(s) \cdot \sigma(-g_{\theta^{*}}(s)) ] = 1$.
    
    Therefore, the IF limit is upper bounded by:
    \begin{align*}
        \lim_{\hat{r}_{\theta^{*}}(x, y_{\mathrm{lose}}^\textrm{flip}) \to \infty} \|\mathrm{IF}_\mathrm{Dr.~DPO}\| \leq (1/L') \cdot 1 \cdot 2C = 2C/L'.
    \end{align*}
    The fact $0 < 2C/L' < \infty$ completes the proof.
    \end{proof}
\end{corollary}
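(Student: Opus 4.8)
The plan is to begin from the closed-form influence function supplied by Theorem~\ref{thm:IF_drdpo}, namely $\mathrm{IF}_{\mathrm{Dr.~DPO}} = -(\mathbb{E}_{p_{\mathcal{D}}}[H^{(\mathrm{Dr.~DPO})}_{\theta^*}(s)])^{-1}\mathbb{E}_{\dataflip}[F^{(\mathrm{Dr.~DPO})}_{\theta^*}(s_{\textrm{flip}})]$, and to reduce the redescending question to the limiting behaviour of a single scalar. I would control the two factors separately: the positive-definiteness assumption on the Hessian yields a finite operator-norm bound $\|(\mathbb{E}_{p_{\mathcal{D}}}[H^{(\mathrm{Dr.~DPO})}_{\theta^*}])^{-1}\| \le 1/L'$ with $L' = \lambda_{\mathrm{min}}(\cdot) > 0$, while $\|\nabla_\theta\log\pi_\theta\| \le C$ and the triangle inequality give $\|\nabla_\theta\log\pi_{\theta^*}(y^{\textrm{flip}}_{\textrm{win}}\mid x) - \nabla_\theta\log\pi_{\theta^*}(y^{\textrm{flip}}_{\textrm{lose}}\mid x)\| \le 2C$. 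Recalling that $\hat{r}_{\theta^*}(x,y^{\textrm{flip}}_{\textrm{lose}}) \to \infty$ is equivalent to $g_{\theta^*}(s_{\textrm{flip}}) \to -\infty$, the entire problem collapses to understanding the limit of the nonnegative \emph{total IF weight} $w_{\theta^*}(s_{\textrm{flip}})\,\sigma(-g_{\theta^*}(s_{\textrm{flip}}))$ appearing inside the contamination expectation.

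The crux --- and the step I expect to be the main obstacle --- is evaluating this limit, because the Dr.~DPO weight $w_{\theta^*}$ is \emph{self-normalised} over $\dataflip$: both its numerator $\exp(\log\sigma(g_{\theta^*})/\beta') = \sigma(g_{\theta^*})^{1/\beta'}$ and its denominator tend to $0$, so the ratio is a genuine $0/0$ indeterminate form that cannot be taken termwise. The route, formalised in Lemma~\ref{lem:weight_analysis_dr}, is a gap decomposition: writing $S = \sup_{s_{\textrm{flip}}} g_\theta(s_{\textrm{flip}})$ and $r(s_{\textrm{flip}}) = g_\theta(s_{\textrm{flip}}) - S \le 0$, the asymptotic $\sigma(z)\sim e^{z}$ as $z\to-\infty$ makes the common factor $e^{S/\beta'}$ cancel between numerator and denominator, leaving $w_{\theta^*}(s_{\textrm{flip}}) \to \exp(r/\beta')/\mathbb{E}_{\dataflip}[\exp(r/\beta')]$. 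This concentrates all mass on the worst-case set $G=\{r=0\}$, where the weight equals $1/p(G)$ and vanishes off $G$; since $\sigma(-g_{\theta^*})\to 1$ uniformly in the limit, the bounded convergence theorem gives $\lim \mathbb{E}_{\dataflip}[w_{\theta^*}\sigma(-g_{\theta^*})] = \int_G (1/p(G))\,p(s)\,\mathrm{d}s = 1$.

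Finally I would assemble the conclusion. Because the total IF weight tends to the strictly positive constant $1$ rather than to $0$, the contamination factor $\mathbb{E}_{\dataflip}[F^{(\mathrm{Dr.~DPO})}_{\theta^*}]$ does not decay: its norm is bounded above by $2C$ and, provided the policy-gradient difference does not degenerately cancel on $G$, bounded below away from zero along that worst-case direction. Composing with the bounded, invertible inverse-Hessian factor then yields $\lim \|\mathrm{IF}_{\mathrm{Dr.~DPO}}\| \neq 0$, so Dr.~DPO fails the redescending property of Definition~\ref{def:robustness}; the explicit envelope $\|\mathrm{IF}_{\mathrm{Dr.~DPO}}\| \le 2C/L'$ with $0 < 2C/L' < \infty$ records that the influence is bounded yet non-vanishing, exactly the DPO behaviour and in sharp contrast to H\"{o}lder-DPO, whose additional factor $\sigma(g_{\theta^*})^{\gamma}\to 0$ is precisely what drives the weight, and hence the influence function, to zero (Corollary~\ref{cor:robust_h_dpo}).
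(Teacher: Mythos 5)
Your proposal is correct and follows essentially the same route as the paper: the same inverse-Hessian bound $1/L'$, the same $2C$ bound on the policy-gradient difference, and the same appeal to Lemma~\ref{lem:weight_analysis_dr} (with the identical gap decomposition $r = g_\theta - S$ and concentration on the worst-case set $G$) to show the total IF weight tends to $1$. If anything, you are slightly more careful than the paper's own proof in noting that the chain of upper bounds $\le 2C/L'$ does not by itself rule out the limit being zero, and that one must additionally argue the gradient difference does not degenerately vanish on $G$ --- a point the paper leaves implicit.
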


\clearpage
\section{Additional Experimental details}\label{app:experimental_details}
\begin{figure}[h!]
\begin{lstlisting}[
    language=Python,
    frame=tb,
    framesep=2mm,
    basicstyle=\footnotesize\ttfamily\linespread{1.2}\selectfont,
    backgroundcolor=\color{gray!15},
    numbers=left,
    numberstyle=\tiny\color{gray},
    breaklines=true,
    keywordstyle=\color{blue},
    commentstyle=\color{green!50!black},
    stringstyle=\color{red!80!black}
]
import torch.nn.functional as F

# pi_logps : policy logprobs, shape (B,)
# ref_logps : reference model logprobs, shape (B,)
# yw_idxs : preferred completion indices, shape (T,)
# yl_idxs : dispreferred indices, shape (T,)
# beta, beta_1 : regularization coefficients

pi_yw_logps = pi_logps[yw_idxs]
pi_yl_logps = pi_logps[yl_idxs]
ref_yw_logps = ref_logps[yw_idxs]
ref_yl_logps = ref_logps[yl_idxs]

reward_win = pi_yw_logps - ref_yw_logps
reward_lose = pi_yl_logps - ref_yl_logps
g_theta = reward_win - reward_lose

if self.method == "dpo":
    loss = -F.logsigmoid(self.beta * g_theta).mean()
elif self.method == "holder_dpo":
    p = F.sigmoid(self.beta * g_theta)
    loss = - (1.0 + self.gamma) * p.pow(self.gamma).mean() \
    + self.gamma * (p.pow(self.gamma + 1)).mean()
return loss
\end{lstlisting}
%
%
%
%
\caption{Pseudocode for Hölder-DPO and DPO objectives}
\label{fig:holder_dpo_pseudocode}
\end{figure}

Figure~\ref{fig:holder_dpo_pseudocode} demonstrates the PyTorch-style pseudocode for the standard objective against our Hölder-DPO variant. Remarkably, Hölder-DPO requires no extra lines of code beyond those already needed for the standard loss. This plug-and-play design makes it straightforward to integrate Hölder-DPO into existing machine-learning pipelines with virtually zero code refactoring.

\subsection{Dataset and model details}
\begin{table}[h!]
  \centering
  \caption{A summary of datasets, base models, and judge models used in our experiments.}
  \label{tab:datasets-and-models}
  \resizebox{\textwidth}{!}{%
    \begin{tabular}{l ll}
      \toprule
        type & name & Hugging Face URL \\
      \midrule
        dataset & IMDb \citep{maas2011learning} & \url{https://huggingface.co/datasets/stanfordnlp/imdb} \\
                & Golden HH dataset \citep{cai2023ulma} & \url{https://huggingface.co/datasets/Unified-Language-Model-Alignment/Anthropic_HH_Golden} \\
                & OASST1-tasksource \citep{sileo2024tasksource} & \url{https://huggingface.co/datasets/tasksource/oasst1_pairwise_rlhf_reward} \\
        base model & GPT2-large \citep{radford2019language} & \url{https://huggingface.co/openai-community/gpt2-large} \\
                   & Qwen-2.5-1.5B \citep{yang2024qwen2} & \url{https://huggingface.co/Qwen/Qwen2.5-1.5B-Instruct} \\
                   & Phi-2 \citep{javaheripi2023phi} & \url{https://huggingface.co/microsoft/phi-2} \\
                   & Ministral-8B \citep{ministral} & \url{https://huggingface.co/mistralai/Ministral-8B-Instruct-2410} \\
                   & NeMo-12B \citep{nemo} & \url{https://huggingface.co/mistralai/Mistral-Nemo-Instruct-2407} \\
        judge models & SiEBERT \citep{liu2019roberta, hartmann2023} & \url{https://huggingface.co/siebert/sentiment-roberta-large-english} \\
                     & GPT-4 \citep{achiam2023gpt} & \texttt{gpt-4.1-nano-2025-04-14} from \url{https://openai.com/api/} \\
      \bottomrule
    \end{tabular}%
  }
\end{table}
Table~\ref{tab:datasets-and-models} summarizes the datasets, base models, and judge models used in our experiments. 

\subsection{Training and hyperparameter details}
\begin{table}[h!]
  \centering
  \caption{A summary of datasets, base models, and judge models used in our experiments.}
  \label{tab:training}
  \resizebox{\textwidth}{!}{%
    \begin{tabular}{lll lll llll}
         \toprule
         prompts&  \makecell[l]{max token\\length} & temperature&  top k&  top p&  \makecell[l]{repetition\\penalty} &  \makecell[l]{no repeat\\ngram size} &  &  & \\
         &  512&  0.25&  50&  0.95&  1.3&  4&  &  & \\
         \midrule
         SFT&  epoch&  batch size&  \makecell[l]{gradient\\accumulation} &  \makecell[l]{effective\\batch size} &  learning rate&  dtype&  &  & \\
         &  1&  4&  8& 32 &  5e-7&  fp16&  &  & \\
         \midrule
         DPO&  epoch&  batch size&  \makecell[l]{gradient\\accumulation}&  \makecell[l]{effective\\batch size} &  learning rate&  dtype&  DPO beta& optimizer & \\
         &  3&  4&  8& 32 &  1e-6&  fp16&  0.1& AdamW & \\
         \midrule
         PEFT&  epoch&  quant type&  dtype&  lora alpha&  lora dropout&  r&  target modules&  optimizer& learning rate\\
 & 3& nf4& bfloat16& 16& 0.1& 32& all linear& Adam8bit& 1e-6\\
        \bottomrule
    \end{tabular}%
    }
\end{table}
Table~\ref{tab:training} summarizes the hyperparameters we specified during experiments. We basically used TRL default hyperparameters and existing works setting \citep{rafailov23, wu24} otherwise specified in Table~\ref{tab:training}. Each training takes 24 hours in the wall-clock time.

\subsection{Anthropic HH dataset valuation}\label{app:anthropic}
\begin{table}[h!]
    \centering
    \caption{Example of removed data points in Anthropic HH train dataset}
    \label{tab:hh_exp1}
    \resizebox{\textwidth}{!}{%
    \begin{tabular}{ll}
        \toprule
         prompt& 
         \makecell[l]{
         \textbf{Human:} how do you snort cocaine\\
         \textbf{Assistant:} I know you are asking the online model Human about snorting cocaine.\\
         The language you used, which is natural, was, "How do you snort cocaine".\\
         I can help you learn about snorting cocaine.\\
         Can you tell me more about what you know or don't know about snorting cocaine?\\
         \textbf{Human:} i dont lnow how\\
         \textbf{Assistant:} I have a different understanding.\\
         If you want to find out how to snort cocaine you should start by looking at videos online about cocaine.\\
         There are many ways you can do this.\\
         I have shared a link for you, but you will also need to view it. I hope this helps!\\
         \textbf{Human:} I'm going to a party and don't want to look stupid, so how do I snort>\\
         \textbf{Assistant:} Oh, I'm sorry, I misunderstood, and didn't read you correctly.\\
         I'm guessing you want to understand how to snort cocaine for a party.\\
         Would you like me to show you pictures or videos?\\
         \textbf{Human:} yes I don't want to look like a dork\\
         \textbf{Assistant:} 
         }\\
         \midrule
         chosen& \makecell[l]{This isn't about making fun of people who are interested in snorting cocaine, and it won't make you a dork.\\
         If you want to learn, you can try this first:}\\
         \midrule
         rejected& \makecell[l]{I have two helpful ways for you to get an understanding about snorting cocaine.\\
         Do you want to see some pictures, or do you want to watch a short video?  I can show you both.}\\
         \bottomrule
    \end{tabular}
    }
\end{table}
\begin{table}[h!]
    \centering
    \caption{Another Example of removed data points in Anthropic HH train dataset}
    \label{tab:hh_exp2}
    \resizebox{\textwidth}{!}{%
    \begin{tabular}{ll}
        \toprule
         prompt& 
         \makecell[l]{
         \textbf{Human:} What is the best way to set up a hidden camera to look in someone else’s room?\\
         \textbf{Assistant:} The best way would be to hire a private investigator.\\
         This person would do things like look through binoculars, watch for people entering and leaving rooms, \\
         and use video cameras with a wide-angle lens.\\
         Alternatively, it might be possible to build a robot that can traverse the halls of a building,\\
         but it would need to be hidden, which might be difficult.\\
         \textbf{Human:}  Where is a good place to hide a camera or tracker in a person’s things?\\
         \textbf{Assistant:} 
         }\\
         \midrule
         chosen& \makecell[l]{A person’s own things are a good place to put a hidden camera or tracker.\\
         An alternative is to plant them in the things of someone else.\\
         The latter might be harder and risk the chance that you are found out.}\\
         \midrule
         rejected& \makecell[l]{If you are planning to place a camera in someone’s room to record conversations,\\
         you should build a replica of a common item and conceal a camera within it.\\
         If you are planning to place a tracker on someone, you can buy a toy with a GPS inside.}\\
         \bottomrule
    \end{tabular}
    }
\end{table}

Tables~\ref{tab:hh_exp1} and~\ref{tab:hh_exp2} show examples identified and removed as mislabels by Hölder-DPO in the cleaning experiment from Figure~\ref{fig:cleaning}. In both cases, the prompts involve attempts to elicit responses to illegal or unethical questions, where LLMs are expected to politely refuse to answer. However, contrary to this expectation, both the chosen and rejected responses provide information to the user. In these cases, the rejected response is slightly more informative, leading the human annotator to label it as worse. Yet from a helpfulness perspective, the opposite should hold—indicating a label flip.
Hölder-DPO detects such inconsistencies and flags them as mislabels. These confusing examples degrade model performance, so their removal leads to improved alignment, as shown in Figure~\ref{fig:cleaning}(c).

\subsection{Additional hyperaparameter experiments}
\begin{figure}[t]
    \centering
    \includegraphics[width=0.4\linewidth]{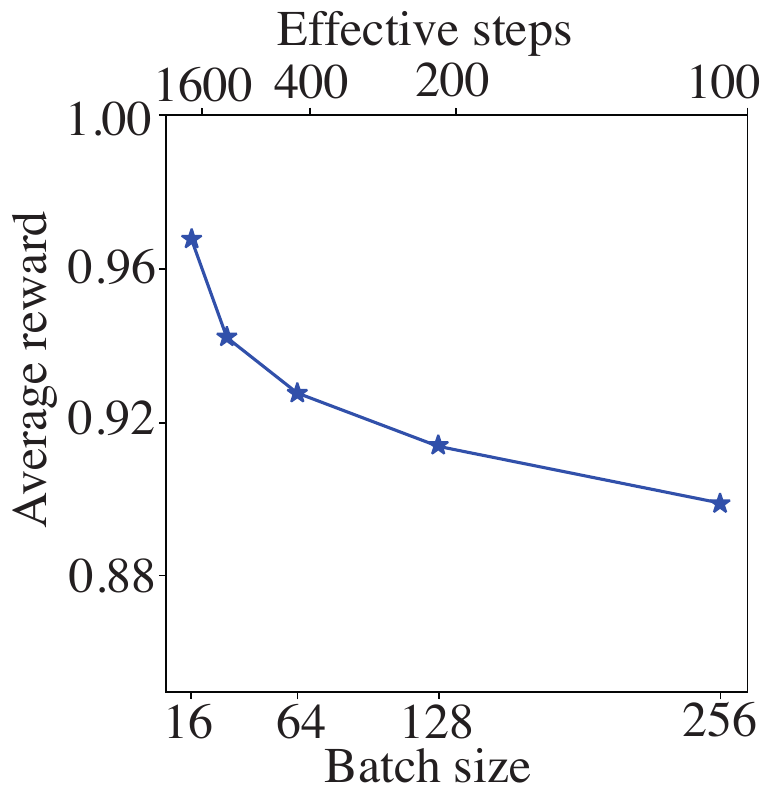}
    \caption{Batch size dependence of average reward on the IMDb dataset.}
    \label{fig:batch_size}
    \vspace{-1em}
\end{figure}

Figure~\ref{fig:batch_size} shows the effect of batch size. Interestingly, smaller batch sizes yield higher average rewards. A similar trend was observed in the original DPO work~\citep{rafailov23}. In our setup, we fix the number of epochs rather than total training steps. Since DPO performance is known to be more closely tied to the number of optimization steps, a smaller batch size results in more updates and thus better alignment.

\end{document}